\documentclass[10pt]{article}
\usepackage[margin=1in]{geometry}
\usepackage[round,compress]{natbib}
\usepackage{parskip}

\usepackage[utf8]{inputenc} 
\usepackage[T1]{fontenc}    

\usepackage{subcaption}

\usepackage{amsmath}
\usepackage{amsfonts,amscd,amssymb,bm,bbm,mathrsfs}
\usepackage{nicefrac}       
\usepackage{amsthm}
\usepackage{mathtools}


\newcommand\labelAndRemember[2]
  {\expandafter\gdef\csname labeled:#1\endcsname{#2}%
   \label{#1}#2}
\newcommand\recallLabel[1]
   {\csname labeled:#1\endcsname\tag{\ref{#1}}}
\newcommand\labelr[2]
  {\expandafter\gdef\csname labeled:#1\endcsname{#2}%
   \label{#1}#2}
\newcommand\recall[1]
   {\csname labeled:#1\endcsname}

\usepackage[algo2e,linesnumbered,vlined,ruled]{algorithm2e}
\usepackage{algorithm}
\usepackage[noend]{algorithmic}

\usepackage{url}
\usepackage{hyperref}
\hypersetup{
  colorlinks,
  breaklinks=true,
  citecolor=blue!70!black,
  linkcolor=blue!70!black,
}
\usepackage[capitalise]{cleveref}
\usepackage[titletoc,title]{appendix}
\usepackage{cancel}
\usepackage{enumerate}
\usepackage[shortlabels]{enumitem}
\usepackage{comment}
\crefformat{equation}{#2(#1)#3}
\Crefformat{equation}{#2(#1)#3}
\crefformat{figure}{Figure #2#1#3}
\Crefformat{figure}{Figure #2#1#3}

\usepackage{booktabs}       
\usepackage{multirow}
\usepackage{multicol}
\usepackage{makecell}
\usepackage{array}
\newcolumntype{H}{>{\setbox0=\hbox\bgroup}c<{\egroup}@{}}
\newcolumntype{Z}{>{\setbox0=\hbox\bgroup}c<{\egroup}@{\hspace*{-\tabcolsep}}}

\usepackage[normalem]{ulem}
\usepackage{exscale}
\usepackage{tikz}
\usepackage{float}
\usepackage{graphicx,graphics}
\graphicspath{ {./fig/} }
\allowdisplaybreaks



\newtheorem{theorem}{Theorem}
\newtheorem{claim}{Claim}
\newtheorem*{theorem*}{Theorem}
\newtheorem{lemma}[theorem]{Lemma}

\newtheorem*{remark*}{Remark}
\newtheorem*{lemma*}{Lemma}

\newtheorem{proposition}[theorem]{Proposition}

\newtheorem{assumption}{Assumption}
\renewcommand{\theassumption}{\Alph{assumption}}

\newenvironment{proof-sketch}{\noindent{\bf Proof Sketch}
  \hspace*{1em}}{\qed\bigskip\\}
\newenvironment{proof-idea}{\noindent{\bf Proof Idea}
  \hspace*{1em}}{\qed\bigskip\\}
\newenvironment{proof-of}[1][{}]{\noindent{\bf Proof of \cref{#1}}
  \hspace*{1em}}{\qed\bigskip\\}
\newenvironment{proof-of-lemma}[1][{}]{\noindent{\bf Proof of Lemma {#1}}
  \hspace*{1em}}{\qed\bigskip\\}
\newenvironment{proof-of-proposition}[1][{}]{\noindent{\bf
    Proof of Proposition {#1}}
  \hspace*{1em}}{\qed\bigskip\\}
\newenvironment{proof-of-theorem}[1][{}]{\noindent{\bf Proof of Theorem {#1}}
  \hspace*{1em}}{\qed\bigskip\\}
\newenvironment{inner-proof}{\noindent{\bf Proof}\hspace{1em}}{
  $\bigtriangledown$\medskip\\}
\newenvironment{proof-attempt}{\noindent{\bf Proof Attempt}
  \hspace*{1em}}{\qed\bigskip\\}



\renewcommand{\bar}{\overline}
\renewcommand{\epsilon}{\varepsilon}

\usepackage{pifont}

\newcommand{\eps}{\varepsilon}


\newcounter{cnt}
\setcounter{cnt}{0}
\foreach \num in {1,2,...,26}{%
  \stepcounter{cnt}%
  \expandafter\xdef \csname c\Alph{cnt}\endcsname {\noexpand\mathcal{\Alph{cnt}}}%
  \expandafter\xdef \csname b\Alph{cnt}\endcsname {\noexpand\mathbb{\Alph{cnt}}}%
}


\newcommand{\diag}{\operatorname{diag}}

\newcommand{\abs}[1]{\left|#1\right|}

\DeclarePairedDelimiterX{\ddiv}[2]{(}{)}{%
  #1\;\delimsize\|\;#2%
}







\newcommand{\wt}{\widetilde}
\newcommand{\indic}[1]{1\{#1\}}

\renewcommand{\bQ}{\mathbf{Q}}
\renewcommand{\bP}{\mathbf{P}}

\newcommand{\relu}{\mathrm{ReLU}}

\newenvironment{talign}
 {\align}
 {\endalign}
\newenvironment{talign*}
 {\csname align*\endcsname}
 {\endalign}
 


\newcommand{\Attn}{{\rm Attn}}

\newcommand{\TF}{{\rm TF}}

\newcommand{\loss}{\mathsf{loss}}

\newcommand{\val}{{\tt Val}}
\newcommand{\res}{{\tt Res}}




\newcommand{\mlp}{{\tt mlp}}
\newcommand{\attn}{{\tt attn}}

\newcommand{\embd}{{\tt ebd}}
\newcommand{\query}{{\tt Qry}}
\newcommand{\key}{{\tt Key}}
\newcommand{\vall}{{\tt Val}}
\newcommand{\transit}{{\sf P}}


\mathchardef\mhyphen="2D

\newcommand{\softmax}{{\sf SoftMax}}
\newcommand{\mask}{{\sf mask}}
\def\period{{{$\langle${$\mathtt{period}$}$\rangle$}}}

\newcommand{\attnsink}{attention sink}
\newcommand{\valuedrain}{value-state drain}

\newcommand{\activedormant}{active-dormant mechanism}


\newcommand{\vocab}{\mathcal{V}}
\newcommand{\vocabsize}{V}
\newcommand{\tok}{v}
\newcommand{\toki}{i}

\newcommand{\tokk}{k}
\newcommand{\tokj}{j}

\newcommand{\sink}{\alpha}
\newcommand{\vecvalue}{\boldsymbol{\beta}}
\newcommand{\ivalue}{\beta}
\newcommand{\transition}{p}
\newcommand{\Transition}{\bP}
\newcommand{\ppred}{l}
\newcommand{\Ppred}{\mathbf{L}}

\newcommand{\stable}{\pi}

\newcommand{\vecsink}{\bm{\sink}}
\newcommand{\meanvalue}{\bar{B}}
\newcommand{\inv}{{-1}}
\newcommand{\gppred}{\bG}
\newcommand{\Var}{\operatorname{Var}}
\newcommand{\bb}{Bigram-Backcopy}

\newcommand{\lnm}{\text{LayerNorm}}

\newcommand{\mass}{M}

\newcommand\blfootnote[1]{
    \begingroup
    \renewcommand\thefootnote{}\footnote{#1}
    \addtocounter{footnote}{-1}
    \endgroup
}

\newcommand{\paren}[1]{{\left( #1 \right)}}

\newcommand{\set}[1]{{\left\{ #1 \right\}}}
\newcommand{\sets}[1]{{\{ #1 \}}}

\newcommand{\defeq}{\mathrel{\mathop:}=}

\newcommand{\E}{\mathbb{E}}

\renewcommand{\P}{\mathbb{P}}

\newcommand{\Cov}{\mathrm{Cov}}



\newcommand{\R}{\mathbb{R}}





\def\LN{{\rm LN}}

















\def\bB{{\mathbf B}}

\def\bE{{\mathbf E}}
\def\bG{{\mathbf G}}
\def\bH{{\mathbf H}}

\def\bK{{\mathbf K}}

\def\bO{{\mathbf O}}
\def\bQ{{\mathbf Q}}

\def\bV{{\mathbf V}}
\def\bW{{\mathbf W}}


\def\bbeta{{\boldsymbol \beta}}

\def\bh{{\mathbf h}}

\def\bz{{\mathbf z}}



\newcommand{\bos}{\texttt{$\langle \texttt{s}\rangle$}}


\title{Active-Dormant Attention Heads: 
Mechanistically Demystifying Extreme-Token Phenomena in LLMs}

%

\def\shownotes{1}  
\ifnum\shownotes=1
\newcommand{\authnote}[2]{{\scriptsize $\ll$\textsf{#1 notes: #2}$\gg$}}
\else
\newcommand{\authnote}[2]{}
\fi

\begin{document}

\author{
Tianyu Guo\thanks{UC Berkeley. Email: \texttt{\{tianyu\_guo, druvpai, jiantao, michael\_jordan, songmei\}@berkeley.edu}.} \and Druv Pai\footnotemark[1] \and Yu Bai\thanks{Work done at Salesforce AI Research. Email: \texttt{yubai.pku@gmail.com}.} \and Jiantao Jiao\footnotemark[1] \and  Michael I. Jordan\footnotemark[1] \\ \vspace{-1em} \and Song Mei\footnotemark[1]}

\maketitle

\begin{abstract}
Practitioners have consistently observed three puzzling phenomena in transformer-based large language models (LLMs): \textit{attention sinks}, \textit{value-state drains}, and \textit{residual-state peaks}, collectively referred to as \textit{extreme-token phenomena}. These phenomena are characterized by certain so-called ``sink tokens'' receiving disproportionately high attention weights, exhibiting significantly smaller value states, and having much larger residual-state norms than those of other tokens. These extreme tokens give rise to various challenges in LLM inference, quantization, and interpretability.

We elucidate the mechanisms behind extreme-token phenomena. First, we show that these phenomena arise in very simple architectures---transformers with one to three layers---trained on a toy model, the Bigram-Backcopy (BB) task. In this setting, we identify an \textit{active-dormant mechanism}, where attention heads become sinks for specific input domains while remaining non-sinks for others. Our theoretical analysis of the training dynamics reveals that these phenomena are driven by a \textit{mutual reinforcement mechanism}. Building on these insights, we propose strategies to mitigate extreme-token phenomena during pretraining, including replacing softmax with ReLU and Adam with SGD. Next, we extend our analysis to pretrained LLMs, including Llama and OLMo, showing that many attention heads exhibit a similar \textit{active-dormant mechanism} as in the BB task, and that the \textit{mutual reinforcement mechanism} also governs the emergence of extreme-token phenomena during LLM pretraining. Our results reveal that many of the static and dynamic properties of extreme-token phenomena predicted by the BB task align with observations in pretrained LLMs. 

\blfootnote{Code for our experiments is available at~\url{https://github.com/GuoTianYu2000/Active-Dormant-Attention}}.
\end{abstract}

\section{Introduction}

Recent analyses of transformer-based open-source large language models (LLMs), such as GPT-2 \citep{radford2019language}, Llama-2 \citep{touvron2023llama}, Llama-3 \citep{dubey2024llama}, Mixtral \citep{jiang2023mistral}, and Pythia \citep{biderman2023pythia}, have revealed three intriguing phenomena: 
\begin{itemize}[leftmargin=2em]
\setlength\itemsep{-0.3em}
\item[-] \textbf{Attention sinks} \citep{xiao2023efficient}: In many attention heads, the initial token consistently attracts a large portion of the attention weights. Other special tokens, such as the delimiter token, can also draw significant attention weights. These tokens are collectively referred to as \emph{sink tokens}. 
\item[-] \textbf{Value-state drains} \citep{guo2024attention}: For the attention heads that exhibit attention sinks, the value states of sink tokens are consistently much smaller than those of other tokens. 
\item[-] \textbf{Residual-state peaks} \citep{sun2024massive}: The residual states of sink tokens, excluding those from the first and last layers, exhibit significantly larger norms compared to other tokens. 
\end{itemize}
These phenomena often appear together and consistently occur in various pretrained LLMs, which we collectively refer to as the \emph{extreme-token phenomena}. Figure \ref{figure:extreme-token} illustrates these phenomena in Llama-3.1-8B-Base, using a fixed prompt sentence: “\bos Summer is warm\period~Winter is cold\period”. Here, the first token, \bos~(the Beginning-of-Sequence token), serves as the sink token. As shown in the figure, the sink token receives disproportionately high attention weights, exhibits significantly smaller value states, and has much larger residual state norms compared to other tokens. It is important to note that the first token does not have to be \bos~to act as a sink token; other tokens appearing first in the sequence can also serve this role. Additionally, in models such as Llama-2, a delimiter token can also function as the sink token. 

\begin{figure}[t]
  \centering
  \begin{subfigure}[t]{0.32\textwidth}
      \centering 
      \caption{\small Attention weights at L24}
      \includegraphics[width=0.9\textwidth]{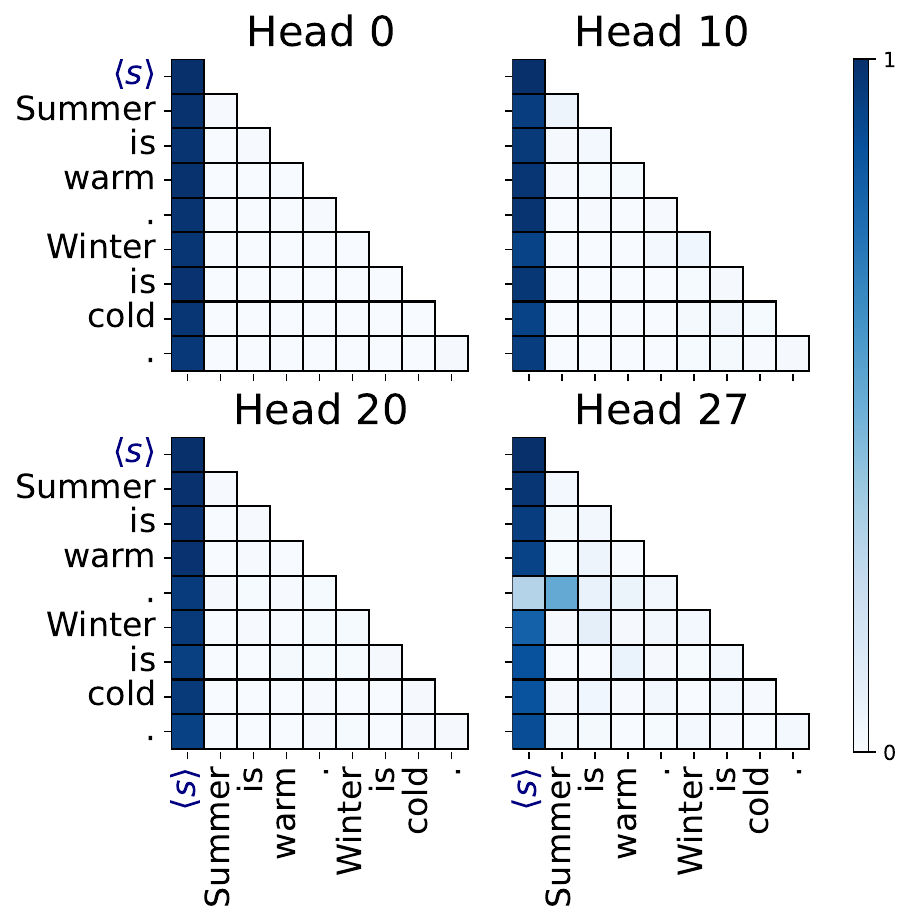}
      \label{fig:attention_sinks_wide_random}
  \end{subfigure}
  \hfill
  \begin{subfigure}[t]{0.3\textwidth}
      \centering 
      \caption{\small Norms of value states}
      \includegraphics[width=0.9\textwidth]{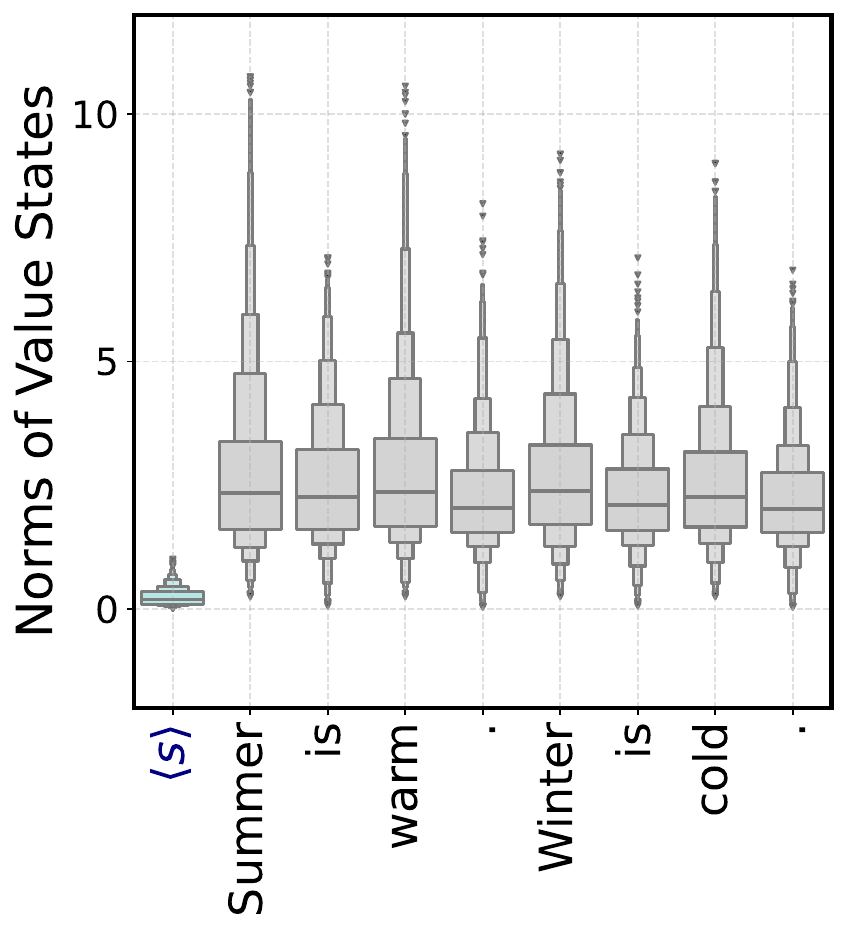}
      \label{fig:value_norms_zeroed}
  \end{subfigure}
  \hfill
  \begin{subfigure}[t]{0.3\textwidth}
      \centering 
      \caption{\small Norms of residual states}
      \includegraphics[width=0.9\textwidth]{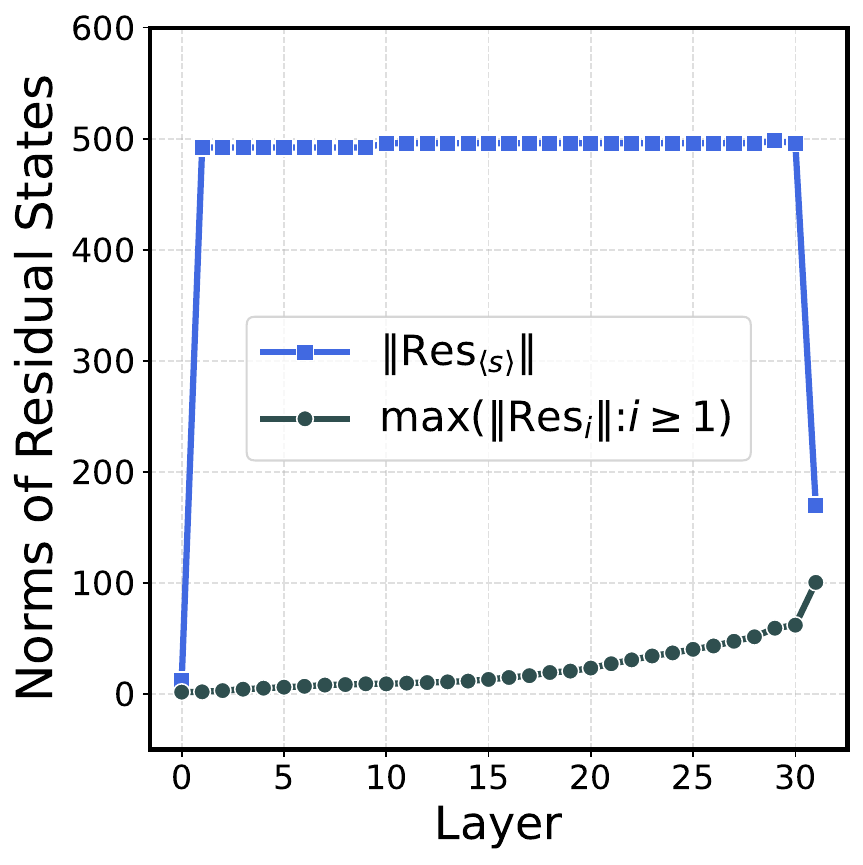}
      \label{fig:token_norms_massive}
  \end{subfigure}
  
  \vspace{-0.5em}
  \caption{\small \textbf{Extreme-token phenomena in Llama 3.1.}
  We evaluate the attention weights, value states norm, and residual states norm on the Llama 3.1-8B-Base model, where the input sentence is ``\bos Summer is warm\period~Winter is cold\period''.  \textit{Left (a)}: The attention weights across multiple heads at Layer 24. We observe the \textit{attention sink} phenomenon: the \bos{} token attracts a significant portion of the overall attention weight.
  \textit{Middle (b)}: The empirical distribution of the norms of value states over all layers 
  and all heads. We exclude 2\% of the outlier values to help visualization. We observe the \textit{value-state drain} phenomenon: the value state of the \bos~token is much smaller than those of other tokens on average.
  \textit{Right (c)}: The norm of the residual stream states, measured at the output of each layer. We observe the \textit{residual-state peak} phenomenon: the \bos~token's residual states have significantly larger norms than those of other tokens from layers 1 to 30. We present the extreme-token phenomena over other input sequences in Appendix~\ref{sec:many_samples}.
  }
  \label{figure:extreme-token}
  \vspace{-1em}
\end{figure}

The extreme-token phenomena have posed several challenges for pretrained transformers in downstream tasks. For instance, sink tokens require special treatment during long-context inference \citep{xiao2023efficient, han2023lm, yu2024unveiling, chen2024image} and model quantization \citep{dettmers2022gpt3, liu2024intactkv, son2024prefixing} to maintain high levels of performance. Additionally, attention sinks have reduced the interpretability of attention maps in vision transformers \citep{darcet2023vision}. To address these issues, \citet{sun2024massive} and \citet{darcet2023vision} propose adding a ``special token'' to transformers to serve as the sink token, preventing other tokens from becoming sinks. However, even this special token still exhibits extreme-token phenomena. Despite these efforts, no prior work has satisfiably explained the mechanisms behind the extreme-token phenomena. \citet{xiao2023efficient} proposes a hypothesis for why they occur, suggesting that models tend to dump unnecessary attention values to specific tokens. 


This work aims to demystify the extreme-token phenomena in LLMs. We demonstrate that these phenomena arise from an \textit{active-dormant mechanism} in attention heads (cf.\ Claim~\ref{claim:active-dormant}), coupled with a \textit{mutual-reinforcement mechanism} during pretraining (cf.\ Claim~\ref{claim:mutual-reinforcement}). We support these statements through studies on simplified transformer architectures and tasks, a dynamical theory for these models, and experiments on pretrained LLMs. The structure of the paper and our key contributions are outlined as follows:
\begin{enumerate}[leftmargin=2em]
\setlength\itemsep{0pt}
\item In \Cref{sec:bb_task},  we train one- to three-layer transformers on a simple task called the \textit{Bigram-Backcopy} (BB) task, which also displays extreme-token phenomena similar to those observed in LLMs. We show that attention sinks and value-state drains are a consequence of the \textit{\activedormant} (cf.\ Claim~\ref{claim:active-dormant}). Both theoretically and empirically, we demonstrate that \textit{mutual reinforcement mechanism} (cf.\ Claim~\ref{claim:mutual-reinforcement}) dynamically drives these phenomena: attention sinks and value-state drains reinforce one another, leading to a stable phase where all query tokens generate near identical attention logits for the keys of extreme tokens. Additionally, empirical results reveal that residual-state peaks arise from the interaction between this mutual reinforcement mechanism and the Adam optimization algorithm. 
\item In \Cref{sec:llm}, we demonstrate the \textit{\activedormant}~in pre-trained LLMs by showing that many attention heads transition between active and dormant phases based on the input domain. Specifically, we identify an interpretable active-dormant head (Layer 16, Head 25 in Llama 2-7B-Base \citep{touvron2023llama}) that activates on GitHub data but remains dormant on Wikipedia data. Moreover, in examining the dynamics of OLMo-7B-0424 \citep{groeneveld2024olmo}, we observe the same mutual reinforcement mechanism and stable phase, consistent with those found in the BB task. This demonstrates that the simple BB model captures both the static and dynamic properties of extreme-token phenomena in LLMs and accurately predicts their behavior. 
\item Importantly, the quantitative properties of extreme-token dynamics show strong consistency among the theoretical and empirical results of the Bigram-Backcopy task and the empirical performance of OLMo. In particular, we consistently observe the \textbf{sink-logits concentration} phenomenon, where the logits corresponding to the key of the extreme token and the queries of all non-extreme tokens ($ \text{logit}_{\cdot,\bos}$) are nearly identical—an observation not previously documented in the literature. We summarize the aligned results between the theoretical and empirical findings of the Bigram-Backcopy task and the empirical performance of LLMs in Table~\ref{tab:consistent_transposed}. 

\item We propose architectural and optimization modifications to mitigate the extreme-token phenomena. Specifically, we demonstrate that replacing SoftMax with ReLU activations in attention heads eliminates extreme-token phenomena in the BB task, while switching from Adam to SGD removes the residual-state peak phenomenon. We discuss the possibility that similar modifications could mitigate extreme-token phenomena in LLMs. 
\end{enumerate}

\begin{table}
    \centering
    \begin{tabular}{|c|c|c|c|}
    \hline
         & BB-task Theory & BB-task Experiments & LLM Experiments \\
    \hline
    \makecell{$\Delta \text{logit}_{\cdot,\bos}$ $\log$-growth} & \checkmark & \checkmark & $\star$ \\ \hline
    \makecell{$\|\vall_{\bos}\|$ monotonic decrease} & \checkmark & \checkmark & \checkmark \\ \hline
    \makecell{$\|\res_{\bos}\|$ linear growth} & $\star$ & \checkmark & \checkmark \\ \hline
    \makecell{$\text{logit}_{\cdot,\bos}$ concentration} & \checkmark & \checkmark & \checkmark \\
    \hline
    \end{tabular}
        \caption{Consistency of the quantitative properties across the theoretical and empirical results of the Bigram-Backcopy task and empirical results of LLMs. A \checkmark~denotes a consistent result, while a $\star$ denotes an inconclusive result. The $\text{logit}_{\cdot,\bos}$ denotes logits corresponding to the key of the extreme token and queries of all non-extreme tokens, i.e., the $\query_{\cdot}^\top \key_{\bos}$. The $\Delta \text{logit}_{\cdot,\bos}=\text{logit}_{\cdot,\bos}-\text{Mean}[\text{logit}_{\cdot,\text{others}}]$ is a progress measure for attention sinks. The $\|\vall_{\bos}\|$ denotes the value state norm of the extreme token, and $\|\res_{\bos}\|$ denotes the residual state norm of the extreme token. See \Cref{sec:prelim_notation} for the definitions of these notations. }
    \label{tab:consistent_transposed}
\end{table}

\subsection{Related work}
Several studies independently identified the ``attention sink'' phenomenon in language models and vision transformers,  where attention weights were found to be concentrated on a few tokens \citep{xiao2023efficient, darcet2023vision, han2023lm, zhai2023stabilizing, elhage2023privileged, dettmers2022gpt3}. Recent research has provided more detailed characterizations of this attention pattern and the attention sink phenomenon \citep{fu2024attentionpattern, sun2024massive}. \citet{sun2024massive} attributed the attention sink to the massive activation of the hidden representations of the corresponding tokens. Both \citet{sun2024massive} and \citet{zhai2023stabilizing} discussed methods for mitigating the attention sink by modifying the model and training recipes. Additionally, recent studies have leveraged the attention sink phenomenon to develop improved quantization and more efficient inference algorithms \citep{liu2024intactkv, chen2024image, yu2024unveiling, son2024prefixing, lin2024duquant, bondarenko2023quantizable, hu2024outlier}. A concurrent work by \citet{gu2024attention} studied how optimization, data distribution, loss function, and model architecture in LM pre-training influence the emergence of attention sink, showing that replacing the softmax function with sigmoid can prevent attention sink emergence in models up to 1B parameters.

The dynamics of transformers are studied under various simplifications, including linear attention structures \citep{zhang2023trained,ahn2024transformers}, reparametrizations \citep{tian2023joma}, NTK \citep{deora2023optimization}, often in the setting of in-context linear regression \citep{ahn2023linear,wu2023many,zhang2024context} and structured sequences \citep{bietti2024birth,nichani2024transformers,tian2023scan}. Notably, \citet{zhang2023trained, huang2023context, kim2024transformers} demonstrate that a one-layer attention head trained via gradient descent converges to a model that effectively performs in-context regression. \cite{bietti2024birth} shows the fast learning of bigram memorization and the slow development of in-context abilities. \cite{tian2023scan} shows the scan and snap dynamics in reparametrized one-layer transformers. \cite{reddy2023mechanistic} simplifies the structure of the induction head, showing the connection between the sharp transitions of in-context learning dynamics and the nested nonlinearities of multi-layer operations.

Mechanistic interpretability is a growing field focused on understanding the internal mechanisms of language models in solving specific tasks \citep{elhage2021mathematical, geva2023dissecting, meng2022locating, nanda2023progress, olsson2022context, bietti2024birth, wang2022interpretability, feng2023language, todd2023function}. This includes mechanisms like the induction head and function vector for in-context learning \citep{elhage2021mathematical, olsson2022context, todd2023function, bietti2024birth}, the binding ID mechanism for binding tasks \citep{feng2023language}, association-storage mechanisms for factual identification tasks \citep{meng2022locating}, and a complete circuit for indirect object identification tasks \citep{wang2022interpretability}. The task addressed in this paper is closely related to \cite{bietti2024birth}, who explored synthetic tasks where tokens are generated from either global or context-specific bigram distributions. Several other studies have also employed synthetic tasks to explore neural network mechanisms \citep{charton2022my, liu2022towards, nanda2023progress, allen2023physics, zhu2023physics, guo2023transformers, zhang2022unveiling, lin2023transformers}. 

A line of work focuses on quantizing neural networks using low-bit fixed-point representations \citep{jacob2018quantization,zafrir2019q8bert,lin2020towards,nagel2021white,gholami2022survey}, such as \texttt{INT8} \citep{lin2020towards,dettmers2022gpt3} or \texttt{INT4} \citep{yao2206efficient,wu2023understanding,dettmers2023case} to save memory usage and computational cost. In LLMs, the extreme-token phenomena lead to substantial performance degradation after quantization \citep{bondarenko2021understanding} and have become a key focus of recent research \citep{fan2020training,yao2022zeroquant,lin2024duquant,hu2024outlier}. \citet{dettmers2022gpt3} and \citet{lin2024awq} propose mixed-precision approaches, using \texttt{FP16} for outlier values and \texttt{INT8} for others, enabling large model quantization without performance loss. \citet{xiao2023smoothquant} rescales the weights and activations to reduce magnitudes of outliers, and \citet{bondarenko2023quantizable} proposes modified attention structures to remove outliers, making language models easier to quantize.

We note that \citet{gurnee2024universal} proposed Attention Deactivation Neurons, \citet{bondarenko2023quantizable} proposed the ``no-op'' hypothesis, and \citet{xiao2023efficient} proposed the ``dump unnecessary attention'' conjecture as mechanisms of attention sinks. In contrast, we explain the extreme-token phenomena through the active-dormant and mutual reinforcement mechanisms, offering the proof of their emergence within training dynamics in a toy model and providing empirical evidence of these mechanisms in LLMs.



\subsection{Preliminaries and notations}\label{sec:prelim_notation}

 %

While different LLMs may use slightly varying transformer architectures, most use the structure proposed by \cite{vaswani2017attention}, with the key modification being the shift from post-norm to pre-norm. We represent the tokenized input sequence of length $n$, with positional embeddings included, as $\bH = [\bh_1, \dots, \bh_n]\in \R^{d \times n}$, where $\bh_i$ denotes the $i$th input token, and $d$ is the embedding dimension. We denote the layer-normalization operation as $\LN$, the column-wise SoftMax operation as $\softmax$, the causal-mask as $\mask$, and the pointwise ReLU function as $\relu$. 

The transformer architecture applies causal-attention and MLP layers iteratively to the input sequence $\bH$. A causal-attention layer with $M$ heads is represented as $\Attn(\cdot)$, parameterized by $\sets{ (\bQ_m,\bK_m, \bV_m, \bO_m)}_{m}$: 
\begin{talign}
\label{eqn:attention}
\Attn(\bH) \defeq \sum_{m = 0}^{M-1}  \attn_{m}(\bH)\in \R^{d \times n},
\end{talign}
where each attention head $\attn_m(\cdot)$ is given by
\begin{talign}\label{eqn:attention_head_prelim}
\attn_m(\bH) \defeq  \bO_m \bV_m \LN(\bH) \softmax\paren{\mask\paren{ \LN(\bH)^\top \bK_m^\top \bQ_m \LN(\bH) }}. 
\end{talign}
We denote the attention map as ${\sf Map}=\softmax\paren{\mask\paren{ \LN(\bH)^\top \bK_m^\top \bQ_m \LN(\bH) }}$, and typically plot its transpose, ${\sf Map}^\top$, in figures.

An MLP layer, denoted $\mlp(\cdot)$, has parameters $(\bW_1,\bW_2)$:
\begin{talign}\label{eqn:MLP_layer_prelim}
\mlp(\bH) \defeq \bW_2 \relu(\bW_1 \LN(\bH) ) \in \R^{d \times n}.
\end{talign}
An $L$-layer transformer consists of a composition of $L$ self-attention and MLP layers with residual connection structure. Given an input $\bH^{(0)} \in \R^{d \times n}$, the output of the $L$-layer transformer, $\bH^{(L)}$, is computed as follows: 
\begin{talign}
\bH^{(\ell + 1)} =\bH^{(\ell + 1/2)} +  \mlp^{(\ell)}\paren{ \bH^{(\ell + 1/2)}},~~~ \bH^{(\ell + 1/2)} = \bH^{(\ell)} + \Attn^{(\ell)}\paren{\bH^{(\ell)}},~~~ \ell\in\set{0,\dots,L-1}.
\end{talign}
For consistency between the code and the text, we adopt zero-indexing throughout this paper, meaning that attention head and layer indices begin at $0$ instead of $1$. 

For the output $\bH^{(\ell + 1)}$ of layer $\ell$, we define the residual state $\res_{\tok}$ of a token $\tok \in \{0, 1, \ldots, n-1\}$ as the $\tok$th column of $\bH^{(\ell + 1)}$. For a specific layer $\ell$ with input $\bH^{(\ell)} \in \R^{d \times n}$, and for a specific attention head $m$ with query, key, and value matrices $(\bQ, \bK, \bV, \bO)$, we define the \textit{query}, \textit{key}, and \textit{value states} $(\query_\tok, \key_\tok, \val_\tok)$ of a token $\tok \in [n]$ as the $\tok$th columns of $\bQ \bH^{(\ell)}$, $\bK \bH^{(\ell)}$, and $\bO \bV \bH^{(\ell)}$, respectively\footnote{We define the value state as $\bO \bV \bH$ rather than $\bV \bH$ since what is added to the residual state is essentially $\bO \bV \bH$. }. The \textit{attention logit} $\text{logit}_{\tok^\prime, \tok}$ is defined as the $(\tok^\prime, \tok)$th element of $(\bH^{(\ell)})^\top \bQ^\top \bK \bH^{(\ell)}$. For notation simplicity, we omit the dependence on $\ell$ and $m$ in $(\query_\tok, \key_\tok, \val_\tok, \text{logit}_{\tok^\prime, \tok})$, as these will be clear from context.  Additionally, for a fixed token $\tok$, we use the shorthand $\text{logit}_{\cdot,\tok}$ for the set $\{\text{logit}_{\tok^\prime, \tok} \mid \tok^\prime\in\vocab\}$.

We use \bos~to refer to the "Beginning-of-Sequence" token. Since the \bos~token consistently behaves as an extreme token in LLMs, we often refer to \bos~and extreme tokens interchangeably. We also abuse the notation by writing $(\query_{\bos}, \key_{\bos}, \val_{\bos})$ to represent the query, key, and value states of the \bos~token.



\section{Extreme-token Phenomena in the Bigram-Backcopy Task}\label{sec:bb_task}

In this section, we analyze simple transformers trained on the Bigram-Backcopy (BB) task, a simple model that exhibits extreme-token phenomena. We demonstrate the \textit{active-dormant mechanism} (cf.\ Claim~\ref{claim:active-dormant}) and \textit{mutual reinforcement mechanism} (cf.\ Claim~\ref{claim:mutual-reinforcement}) within the BB task and provide predictions for the behavior of sink tokens, which will be validated through LLM experiments in the following section. 

The Bigram-Backcopy task is a data-generation model that consists of two sub-tasks: \textit{Bigram-transition} and \textit{Backcopy}. In this model, each sequence begins with the \bos~token, followed by tokens sampled according to a pre-determined bigram transition probability $\transit$ (in other words, a Markov chain). When specific trigger tokens are encountered, instead of sampling according to the transition $\transit$, the preceding token is copied to the next position. An illustration of the Bigram-Backcopy task is provided in \Cref{fig:bbm-dgp}. Following \citet{bietti2024birth}, we select the transition $\transit$ and the vocabulary $\vocab$ with $| \vocab | = V = 64$ based on the estimated character-level bigram distribution from the \textit{tiny Shakespeare} dataset. In all experiments, the set of trigger tokens, $\cT$, is fixed and consists of the $\abs{\cT}=3$ most frequent tokens from the unigram distribution. Consequently, the non-trigger token set, $\vocab\setminus \cT$, comprises $61$ tokens.

\subsection{One-layer transformer exhibits \attnsink s and \valuedrain s}

On the \bb~task, we pre-train a standard one-layer transformer with a single SoftMax \attn~head and one \mlp~layer. Unless otherwise specified, the model is trained using Adam for $10,000$ steps, achieving near-optimal prediction accuracy. Detailed training procedures are provided in Appendix~\ref{appsec:experiment-detail}. Figure~\ref{fig:bbm-attn} shows that the trained transformer exhibits the \attnsink~phenomenon, where the \bos~token captures a significant proportion of the attention weights. More importantly, the attention weights display interpretable patterns: all non-trigger tokens exhibit \attnsink s, while the attention for trigger tokens is concentrated on their preceding positions. Additionally, Figure~\ref{fig:bbm-value} reveals a value-state drain phenomenon similar to that observed in LLMs, suggesting that, for non-trigger tokens, the \attn~head contributes minimal value to the residual stream. We provide additional attention patterns on different input sequences in \Cref{appsec:additional-attn-maps}.

\begin{figure}[t]
  \centering
  \begin{minipage}{0.38\textwidth}
      \centering
      \subcaption{\small The Bigram-Backcopy task}
      \label{fig:bbm-dgp}
      \vspace{-.2em}
      \includegraphics[width=\linewidth]{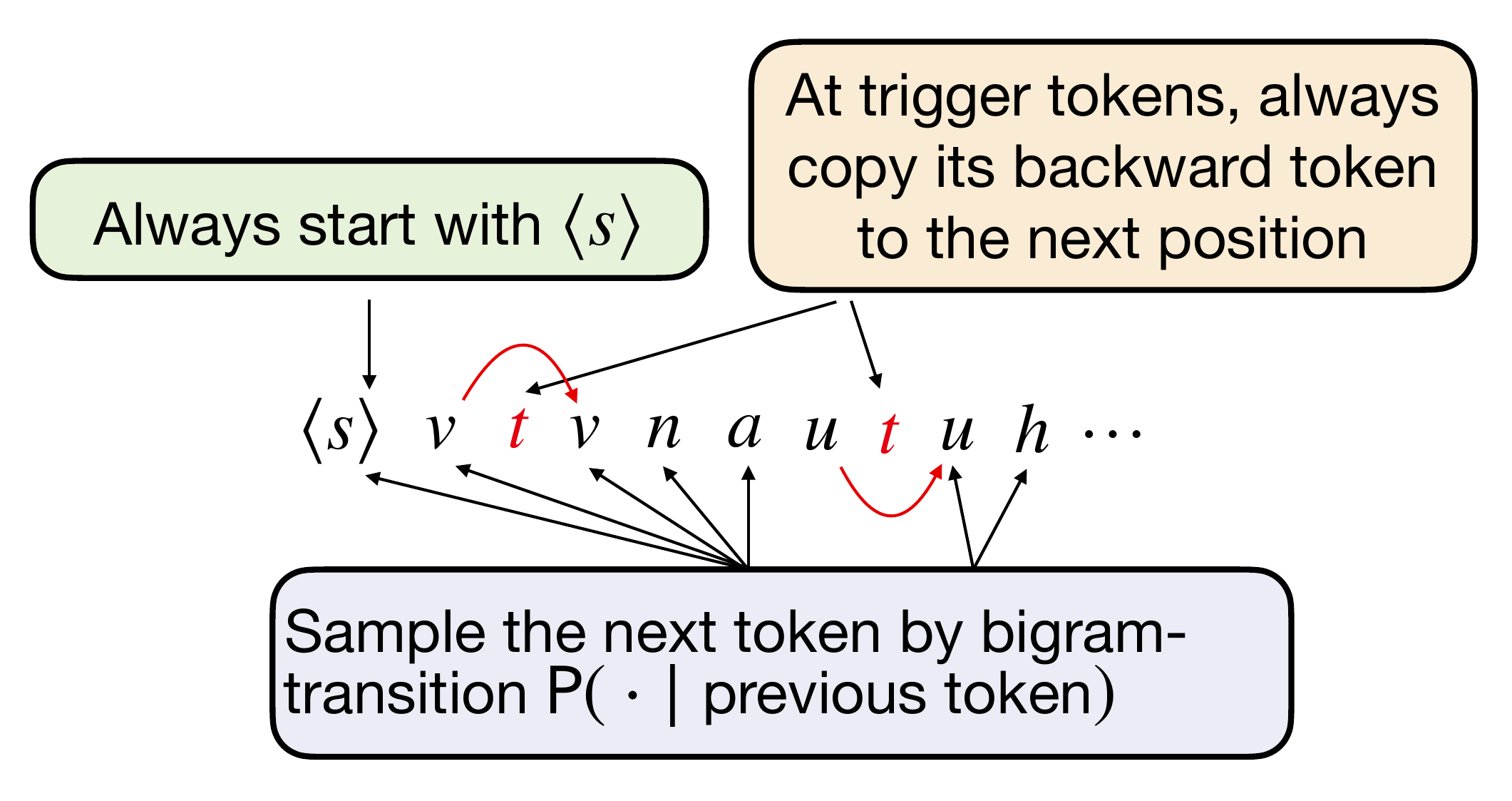}
  \end{minipage}
  \begin{minipage}{0.26\textwidth}
      \centering
      \subcaption{\small Attention pattern}
      \label{fig:bbm-attn}
      \vspace{-.2em}
      \includegraphics[width=\linewidth]{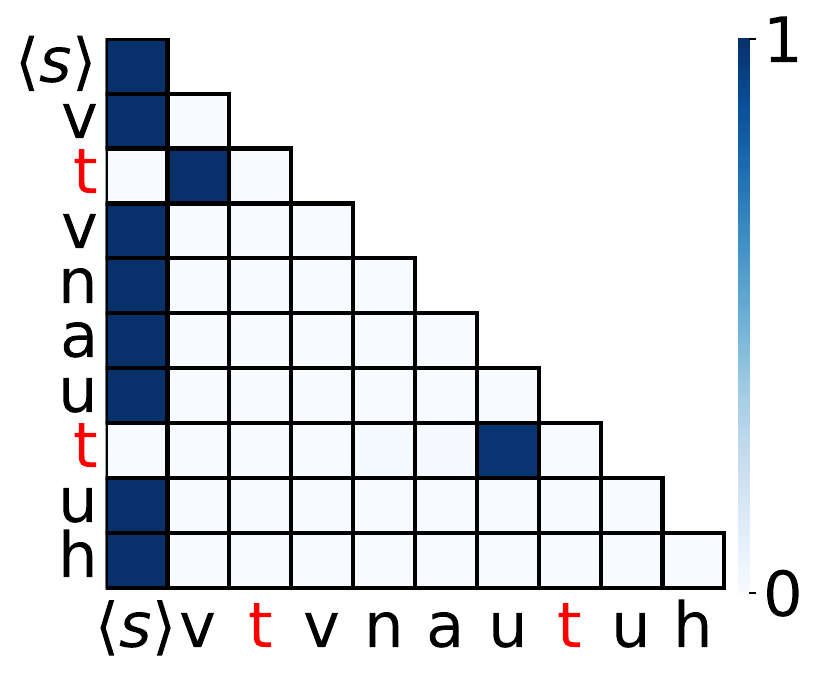}
  \end{minipage}
  \begin{minipage}{0.27\textwidth}
      \centering
      \subcaption{\small Small value states}
      \vspace{0pt}
      \label{fig:bbm-value}
      \vspace{-.2em}
      \includegraphics[width=\linewidth]{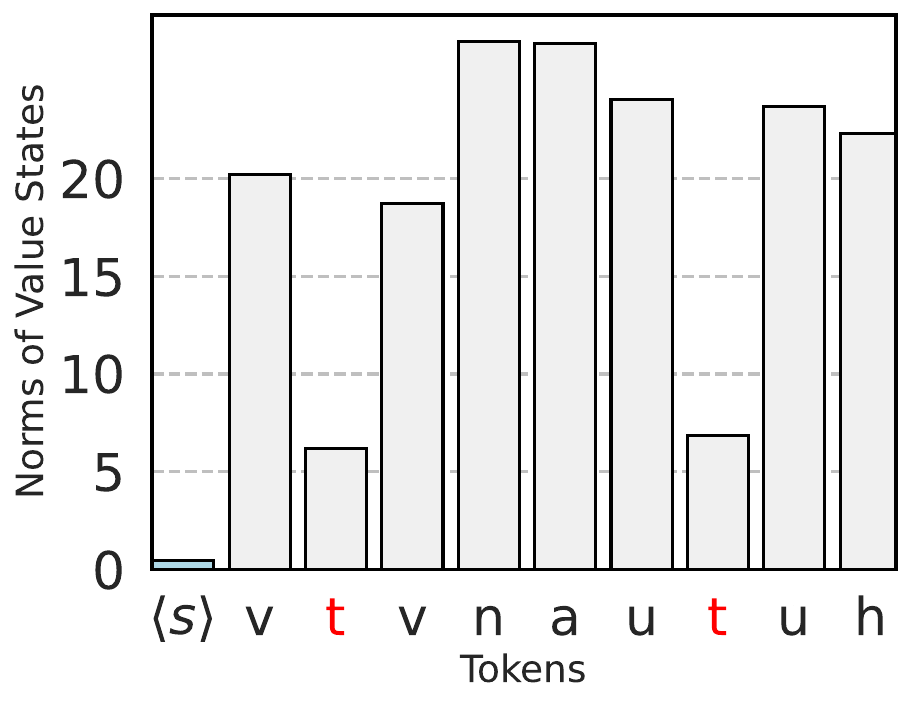}
  \end{minipage}
  \caption{\small \textbf{Experiments on the Bigram-Backcopy task.} 
  \textit{Left (a)}: The data generation procedure for the Bigram-Backcopy task. Here we fix `t', `e', and the space character (` ') as trigger tokens. The BB task samples bigram transitions for non-trigger tokens and backcopies for trigger tokens.  \textit{Middle (b)}: The attention map of a given prompt. Trigger tokens are marked in red. The attention head at non-trigger tokens is dormant and displays attention sinks.  \textit{Right (c)}: The value state norms for the prompt. The \bos~token has the smallest norm. 
  }
  \label{figure:pretraining-findings}
  \vspace{-1em}
\end{figure}


\paragraph{The \activedormant~of the attention head.} Inspired by the interpretable attention weight patterns observed, we propose the \textit{\activedormant}. For any given token, an attention head is considered \textit{active} if it makes a significant contribution to the residual state, and \textit{dormant} if its contribution is minimal. As illustrated in Figure~\ref{fig:bbm-attn}, when trained on the BB task, the attention head is active for trigger tokens and dormant for non-trigger tokens. 

Figure~\ref{fig:interventions} demonstrates that the \mlp~layer is responsible for the Bigram task whereas the \attn~head takes care of the Backcopy task. When the \mlp~layer is zeroed out, the backcopy loss remains significantly better than a random guess, but the bigram loss degrades to near-random levels. Conversely, when the \attn~layer is zeroed out, the backcopy loss becomes worse than a random guess, while the bigram loss remains unaffected. This indicates that on trigger tokens, the \attn~head is active and handles the backcopy task, whereas on non-trigger tokens, the \attn~head is dormant, allowing the \mlp~layer to handle the Bigram task. We summarize the \activedormant~of the \attn~head in Claim~\ref{claim:active-dormant}.

\begin{figure}[h]
    \centering
    \begin{minipage}{0.65\textwidth}
\begin{claim}[Active-dormant mechanism]
\label{claim:active-dormant}
Attention heads of pre-trained models are often governed by the \activedormant, exhibiting two phases:
\vskip5pt
\begin{itemize}[leftmargin=2em]
\setlength\itemsep{5pt}
\item[\textup{(1)}] \textbf{Dormant phase}: On non-trigger tokens, the \attn~head assigns dominant weights to the \bos~token, adding minimal value to the residual stream and having little impact on the model’s output.
\item[\textup{(2)}] \textbf{Active phase}: On trigger tokens, the \attn~head assigns dominant attention weights to relevant context tokens, adding substantial value to the residual stream and significantly impacting the model’s output. 
\end{itemize}
\end{claim}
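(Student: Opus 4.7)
The plan is to establish Claim~\ref{claim:active-dormant} empirically on the one-layer transformer trained on the Bigram-Backcopy task, since the claim is phenomenological rather than an algebraic identity: the words ``dormant'' and ``active'' describe the operating regime of the head on different input tokens, not a closed-form relation. My approach combines three complementary pieces of evidence: (i) direct inspection of the attention map on a representative prompt covering both trigger and non-trigger tokens; (ii) measurement of the value-state norms $\|\val_\tok\|$ at the head's output to quantify how much each position contributes to the residual stream; and (iii) targeted ablation experiments that zero out either the \attn~head or the \mlp~layer and track the resulting degradation, split across the bigram sub-task and the backcopy sub-task.

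Concretely, I would first fix small thresholds $\delta,\epsilon>0$ and verify on the trained model that, for every non-trigger query position $\tok$, the attention weight on \bos{} exceeds $1-\delta$, while for every trigger query position the attention mass concentrates on the immediately preceding token; this is precisely the pattern shown in Figure~\ref{fig:bbm-attn}, and counting the fraction of prompts for which it holds turns the visual observation into a quantitative dominance statement. Next, I would use Figure~\ref{fig:bbm-value} to confirm that $\|\val_{\bos}\|/\mathrm{mean}_{\tok\neq\bos}\|\val_\tok\| \le \epsilon$, so that even though the attention weight on \bos{} is nearly one at a dormant position, the vector $\bO\bV\LN(\bh_{\bos})$ that is effectively added to the residual stream is tiny in norm; this formalizes ``adding minimal value.'' Finally, using the intervention experiment referenced as Figure~\ref{fig:interventions}, I would decompose the cross-entropy into a bigram component over non-trigger targets and a backcopy component over trigger targets, and check that zeroing $\mlp$ preserves near-optimal backcopy loss while collapsing bigram loss toward its uniform baseline, and that zeroing $\attn$ does the opposite. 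This functional dissociation is what licenses the phrase ``significantly impacting the model's output.''

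The main obstacle is definitional rather than technical: the words ``dominant,'' ``minimal,'' and ``significantly'' in the claim are not intrinsically quantified, so the statement is only as sharp as the thresholds one picks. I would therefore calibrate $\delta$ and $\epsilon$ against two natural BB-task baselines, namely the Bayes-optimal conditional distribution $\transit(\cdot\mid\tok)$ and the uniform distribution on $\vocab$, so that ``significantly impacting'' becomes ``moving the per-token loss by a constant fraction of the gap between Bayes risk and uniform risk.'' A secondary caveat is that the claim is stated for pre-trained models in general, whereas the BB experiments instantiate it only in a single head; the extension to Llama and OLMo is deferred to \Cref{sec:llm}, and my proposal only covers the BB instantiation, leaving the broader generalization to be substantiated by that later empirical section.
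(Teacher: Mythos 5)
Your proposal matches the paper's own support for Claim~\ref{claim:active-dormant}: the paper establishes it empirically via the attention maps in Figure~\ref{fig:bbm-attn}, the value-state norms in Figure~\ref{fig:bbm-value}, and the zero-out intervention experiment in Figure~\ref{fig:interventions} showing the bigram/backcopy functional dissociation between \mlp~and \attn, with the extension to LLMs deferred to Section~\ref{sec:llm} exactly as you note. Your suggestion to calibrate explicit thresholds $\delta,\epsilon$ against the Bayes and uniform baselines is a reasonable sharpening the paper does not carry out, but it does not change the structure of the argument.
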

    \end{minipage}
    \hfill
    \begin{minipage}{0.33\textwidth}
        \includegraphics[width=0.89\linewidth]{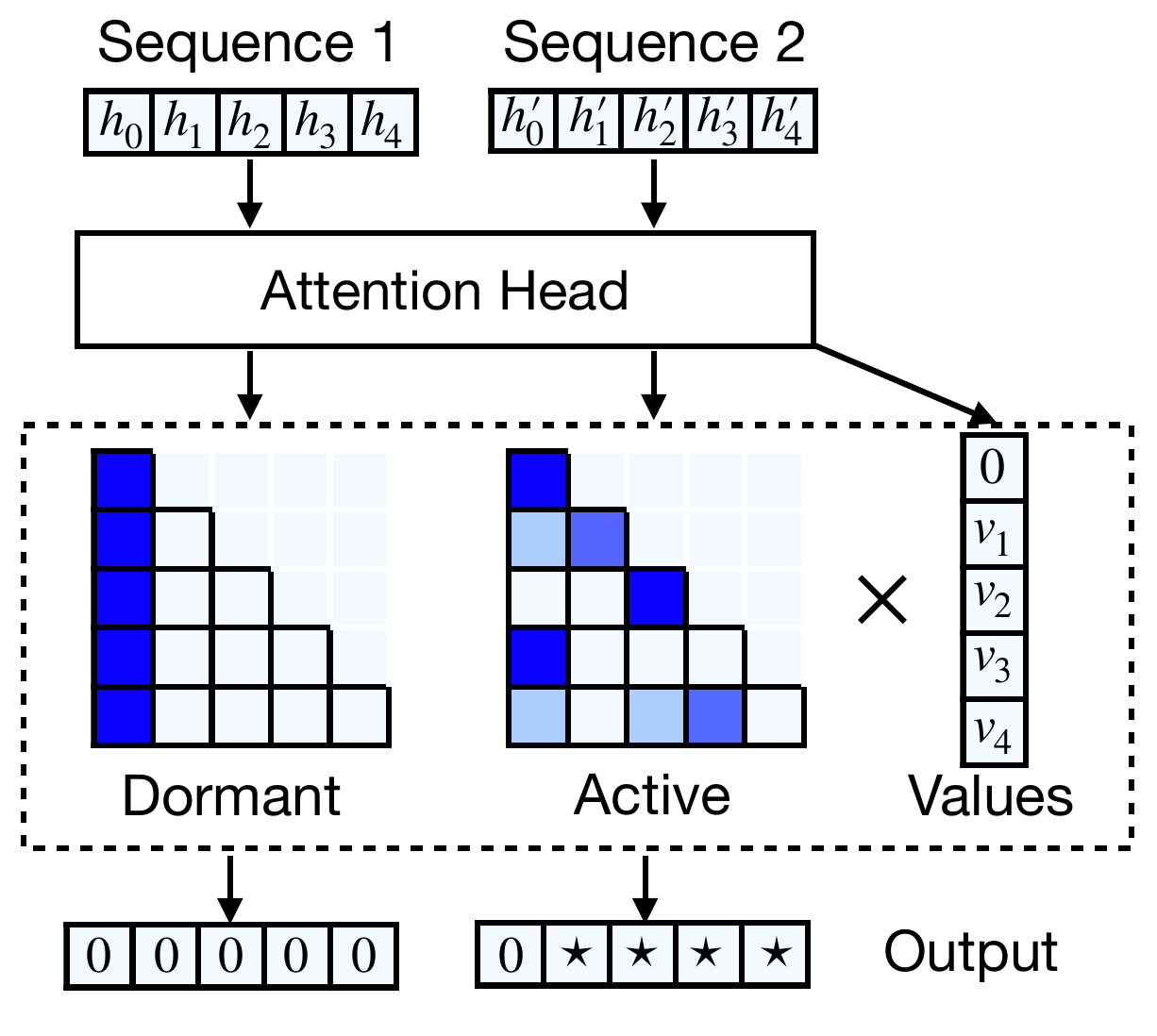}
        \vskip-8pt
        \caption{\small Active-dormant mechanism}
        \label{figure:illustrate-active-dormant}
    \end{minipage}%
\end{figure}


\begin{figure}
  \centering
  \begin{minipage}{0.37\textwidth}
      \centering
      \subcaption{\small Excess risk after interventions}
      \label{fig:interventions}
      \includegraphics[width=0.94\textwidth]{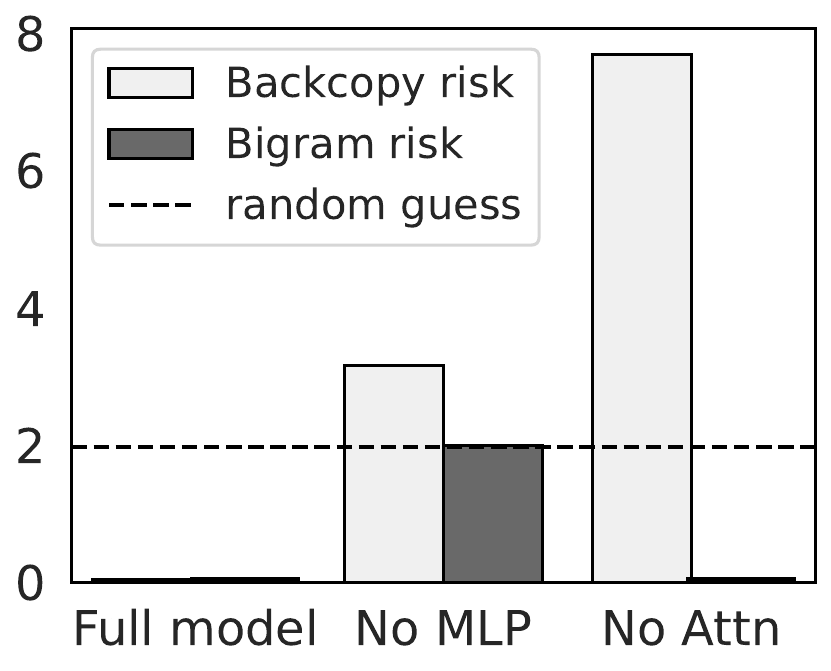}
  \end{minipage}
  \begin{minipage}{0.6\textwidth}
      \centering
      \subcaption{\small Training dynamics }
      \label{fig:dynamics}
    \includegraphics[width=0.87\textwidth]{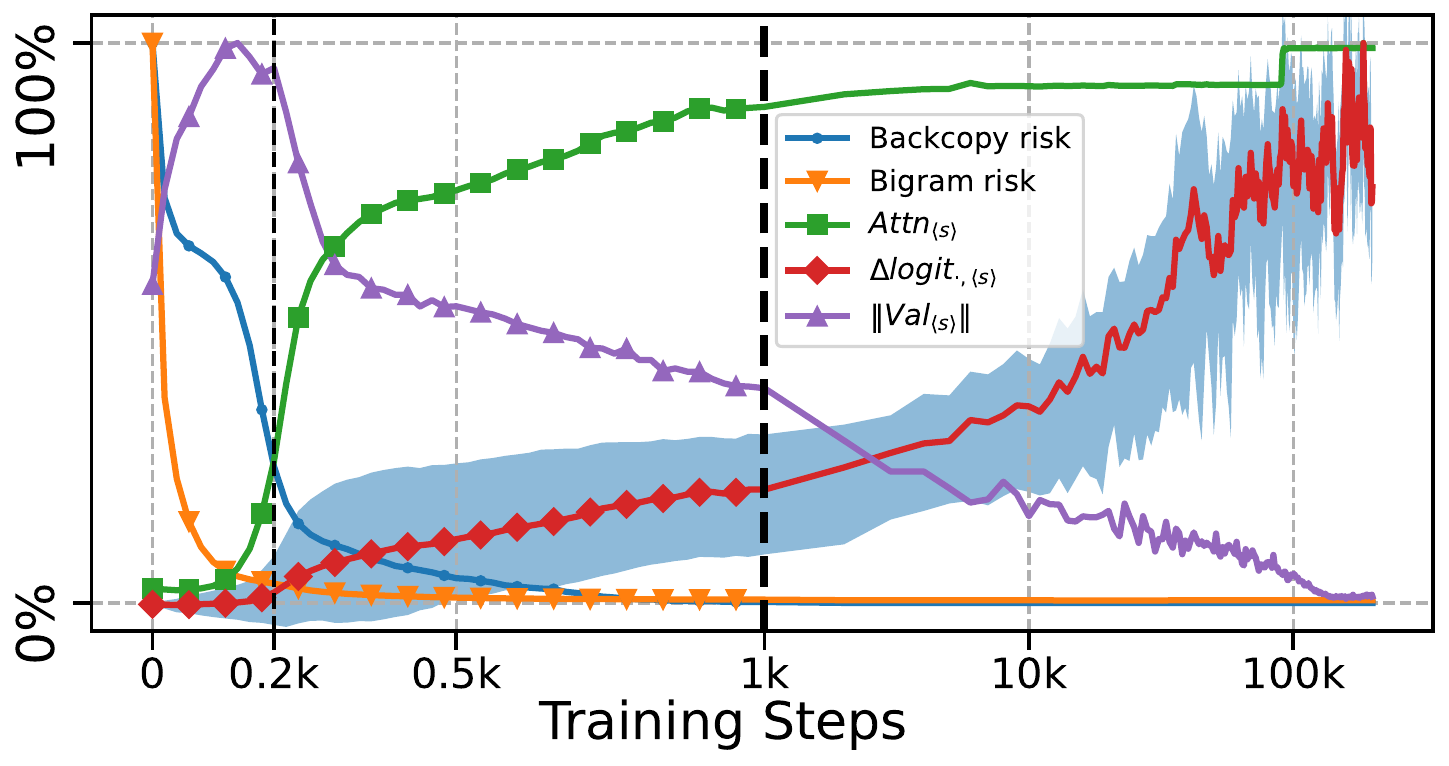}
  \end{minipage}
  \hspace{-1em}
    \caption{\small \textbf{Interventions and dynamics of one-layer transformer on the Bigram-Backcopy task.}  \textit{Left (a)}: Excess risks for a one-layer model trained on the Bigram-Backcopy (BB) task under various interventions. \textit{Right (b)}: The excess risks, attention weights, attention logits, and value state norms for the \bos~token throughout the training dynamics. Each curve is rescaled to fall within a 0 to 1 range. On the right side of \textit{(b)}, the horizontal axis is logarithmically scaled. The $\Delta\text{logit}_{\cdot,\bos}$ curve represents the mean of attention logits from all given non-trigger query tokens $\tok$ on the \bos~token, normalized by the mean of attention logits for other tokens. The shaded area represents the 90\% uncertainty interval on the distribution over all non-trigger tokens. 
    }
    \label{figure:verify-assumptions}
\end{figure}

\paragraph{The growth of attention logits on the \bos~token and the decrease in its value state norms.} Figure~\ref{fig:dynamics} illustrates the training dynamics of excess risks, attention weights, attention logits (for each token $\tok_n$ at position $n$ in the prompt, we compute $\Delta\text{logit}_{\cdot,\bos} \equiv \mathtt{mean}_{n}[\langle \query_{\tok_n}, \key_\bos \rangle - \mathtt{mean}_{i}(\langle \query_{\tok_n}, \key_{\tok_\toki}) \rangle]$, which serves as a progress measure for attention sinks), and value state norms for the \bos~token. All values are rescaled to the $0$ to $1$ range to highlight trends rather than absolute values. Both the Bigram and Backcopy excess risks decrease to nearly zero within the first 1000 steps, with the Bigram excess risk approaching zero faster than the Backcopy risk. As the Backcopy risk decreases, the attention weights on the \bos~token begin to increase, suggesting a connection between the formation of attention sinks and the backcopy function in the attention heads. After the first $1000$ steps, although both Bigram and Backcopy excess risks have nearly reached zero, the attention logits and weights on the \bos~token continue to increase, while the value state norm of the \bos~token continues to decrease. While this is an intriguing phenomenon, our next goal is to understand why the attention logits and value state norms continue to evolve toward extreme values.

\subsection{Analysis of a minimally-sufficient transformer architecture}
\label{sec:simple-model}

\begin{figure}[t]
    \centering
    \includegraphics[width=0.7\linewidth]{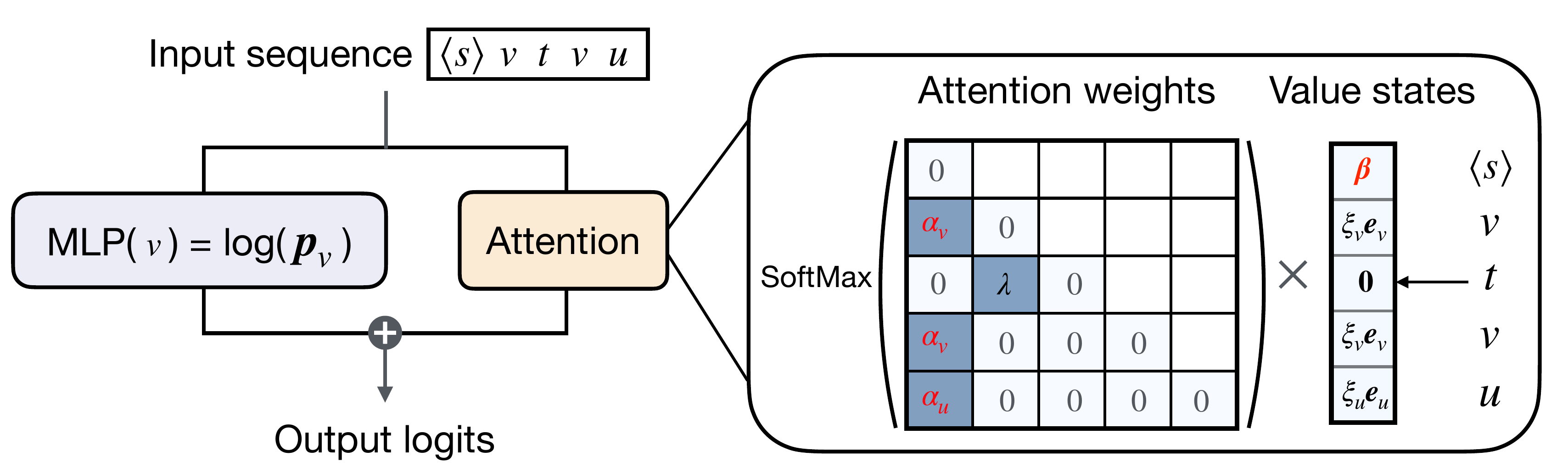}
    \caption{\small \textbf{Simplified transformer architecture.} The output logits are computed by summing the contributions from both the \mlp~layer and the \attn~head. The predicted probabilities are obtained by applying the SoftMax function to these output logits. The \mlp~layer is assumed to provide the Markov transition probabilities for non-trigger tokens, while the \attn~head is parameterized by attention logits and value states, as described in Eq.~(\ref{eqn:simplification_TF_1}), (\ref{eqn:simplification_TF_2}), and (\ref{eqn:simplification_TF_3}). Additionally, the trainable variables, denoted by $(\vecsink,\vecvalue) \in \R^V \times \R^V$, represent the attention logits and value states of the \bos~token.}
    \label{figure:simple-model}
\end{figure}

In this section, we analyze the training dynamics of transformers on the BB task, focusing on a simplified architecture that retains the attention sinks and value-state-drains phenomena. We analyze the regime when the Bigram transition probability is fully learned, and the Backcopy task is partially learned (i.e., after step $200$ in Figure~\ref{fig:dynamics}), and we focus on the dynamics of the attention logits and value states. Readers who are more interested in the results than the theoretical analysis can skip the detailed analysis and proceed directly to the statement of the mutual reinforcement mechanism in Claim~\ref{claim:mutual-reinforcement}. 

Let $\vocab$ (of size $V$) denote the set of all tokens excluding the $\bos$ token, and let $\cT$ represent the set of all trigger tokens. For any  $\tok \in \vocab$, we define $\transition_{\tok\tokk}=\transit(\tokk|\tok)$ as the next-token Markov transition probability, and $\bm{\transition}_v = (p_{v1}, \ldots, p_{vV})^\top \in \Delta(\cV)$ as the transition vector in the simplex. The embedding map is denoted by $\embd: [n] \times \cV \to \R^D$, where for a token $v \in \cV$ at position $i \in [n]$, the embedded vector is $\embd_i(\tok)$. The \bos~token always appears at position $0$, and we denote its embedding vector by $\embd(\bos)$. For simplicity, we abuse the notation and use the sequence itself, $[\bos, \tok_1, \ldots, \tok_n]$ where $\{ \tok_{k} \}_{k \in [n]} \subseteq \cV$, to represent the embedding of the sequence. 

Given an input sequence $\bH = [\bos, v_{1:n}] \in \R^{D \times (n+1)}$ with \bos~as the zeroth token, we define the predicted probability of the next token as $\softmax(\TF(\bH)_n)$, where $\TF(\bH)_n \in \R^D$ is the last column of $\TF(\bH) \in \R^{D \times (n+1)}$, defined as
\begin{equation}\label{eqn:simplified_transformer}
\TF(\cdot) = \attn(\cdot) + \mlp(\cdot),~~\attn(\bH) = \bV \bH \softmax(\mask(\bH^\top \bK^\top \bQ \bH ) ),~~ \mlp(\bH) = \bW_2 \relu( \bW_1 \bH).
\end{equation}
The simplified transformer architecture $\TF$ is a parallel summation of the $\attn$ head and the $\mlp$ layer, with no layer normalization. This parallel summation is a reasonable simplification, as sequential $\attn$ and $\mlp$ layers can effectively simulate parallel $\attn$ and $\mlp$ operations. Notice that we have redefined the notations of $\attn$ and $\mlp$ in this section, which are simplified versions of Eq.~(\ref{eqn:attention_head_prelim}) and (\ref{eqn:MLP_layer_prelim}). 

\paragraph{Simplification and reparameterization of the model.} To simplify the analysis of the training dynamics, we further reduce the model by restricting the $(\bK, \bQ, \bV, \bW_1, \bW_2)$ matrices to follow the patterns observed in the later training stages (i.e., after step 200 of the training in \Cref{fig:dynamics}). 
\begin{itemize}[leftmargin=2em]
\setlength\itemsep{0pt}
\item \textit{Restricted Attention Pattern}. Based on the intuition from \Cref{fig:bbm-attn}, we know that eventually only a few attention logits are non-trivial. Thus, we assume that the model has learned the attention pattern by this stage (which is reasonable given that the Backcopy risk is already small after step 200 in \Cref{fig:dynamics}). We parameterize the attention logits on the $\bos$ key-token as $(\sink_{\bos}; \sink_{v_1}; \ldots; \sink_{v_n})$, restrict the attention logits for any trigger query-token to $(0, \ldots, \lambda, 0)$ (where the second-to-last coordinate is $\lambda$), and set all other logits to zero. Specifically, we restrict:
\begin{equation}\label{eqn:simplification_TF_1}
\begin{aligned}
&~\embd(\bos)^\top \bK^\top \bQ \cdot \embd_i(\tok) =\sink_\tok \cdot 1 \{ v \not\in \cT \}~~~\text{for } \tok\in\vocab, i \in [n], \\
&~ \embd_i(\bar \tok)^\top \bK^\top \bQ \cdot \embd_j(\tok) = \lambda \cdot 1\{ \tok \in \cT, i = j-1 \} ~~~\text{for } \tok, \bar \tok \in \cV, i, j \in [n]. 
\end{aligned}
\end{equation}
Notice that this naturally implies $\sink_v = 0$ for $v \in \cT$. 
\item \textit{Restricted Value Pattern}. At later stages of the training dynamics, we observe that the value states for each token are nearly a scaled version of the one-hot encoding vector. We assume this observed pattern and parameterize the value state of $\tok$ by $\xi_\tok \bm{e}_{\tok} \in \R^V$. For the \bos~token, we parameterize its value state by $\vecvalue \in \R^{V}$. Specifically, we restrict
\begin{equation}\label{eqn:simplification_TF_2}
\begin{aligned}
&~ \bV \cdot \embd(\bos)  = \vecvalue \in \R^{V}, \\
&~ \bV \cdot \embd_i(\tok)  = \xi_\tok \bm{e}_\tok \in \R^{V},  ~~~\text{with $\xi_\tok=0$ for $\tok\in\cT$, and $\xi_\tok\geq 0$ for $\tok\in\vocab\setminus\cT$.} 
\end{aligned}
\end{equation}
\item \textit{MLP Layer Perfectly Predicts the Transition Probability}. Notice that the \mlp~layer handles the Bigram task. By step 200 in \Cref{fig:dynamics}, the Bigram risk has nearly vanished. Therefore, we assume that the $\mlp$ layer outputs the Markov transition probabilities $\bm{\transition}_{\tok}$ for non-trigger tokens $\tok$, and zero for trigger tokens. Specifically, we restrict: 
\begin{equation}\label{eqn:simplification_TF_3}
\mlp(\embd_i(\tok)) = \log \bm{\transition}_v \cdot 1\{ \tok \not\in \cT  \} ~~~ \text{for } \tok \in \vocab. 
\end{equation}
\end{itemize}

These reparameterizations are illustrated in Figure~\ref{figure:simple-model}. Theorem~\ref{thm:construction} establishes the existence of a transformer architecture that satisfies the restrictions and reparameterizations outlined above. Furthermore, this restricted transformer can generate the ground-truth transitions of the BB model when certain parameters diverge. 
\begin{theorem}[Existence of reparameterization that solves the BB task; informal]\label{thm:construction}
For any parameters $(\vecsink \in \R^{V}, \vecvalue \in \R^V, \bm{\xi} \in \R^V, \lambda \in \R)$, there exists a one-layer transformer as described in (\ref{eqn:simplified_transformer}) with weight matrices $(\bQ, \bK, \bV, \bW_1, \bW_2)$ such that Eq. (\ref{eqn:simplification_TF_1}), (\ref{eqn:simplification_TF_2}), and (\ref{eqn:simplification_TF_3}) hold. Furthermore, there exists a sequence of parameters where $\min_{v \in \vocab} \sink_v \to\infty$, $\min_{v \in \vocab} \xi_v \to\infty$, $\lambda\to \infty$, and $\vecvalue=0$, such that this transformer generates the ground-truth transitions of the BB model in the limit. 
\end{theorem}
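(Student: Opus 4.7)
The plan is to prove the statement in two stages: (i) exhibit weight matrices $(\bQ,\bK,\bV,\bW_1,\bW_2)$ together with an embedding map $\embd$ that satisfy (\ref{eqn:simplification_TF_1})--(\ref{eqn:simplification_TF_3}) for an arbitrary choice of the parameters $(\vecsink,\vecvalue,\bm{\xi},\lambda)$; and (ii) analyze the resulting $\softmax(\TF(\bH)_n)$ along the divergent schedule specified in the theorem and verify convergence to the BB generative distribution.

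For (i), I would work with a sufficiently rich embedding that simultaneously encodes token identity, position, and the trigger flag $\mathbf{1}\{v\in\cT\}$ in orthogonal blocks, e.g.\ $\embd_i(v)=\bm{e}_v\oplus\bm{e}_i\oplus(\mathbf{1}\{v\in\cT\}\bm{e}_i)$ for $v\in\vocab$ and $\embd(\bos)=\bm{e}_{\bos}\oplus\bm{0}\oplus\bm{0}$. Setting $\bK=\bI$, the product $\bK^\top\bQ$ may be chosen as any $D\times D$ matrix; I take $\bK^\top\bQ=M_{\mathrm{bos}}+M_{\mathrm{back}}$, where $M_{\mathrm{bos}}$ couples the $\bos$-coordinate of the key embedding to the token-$v$ coordinate of the query embedding with weight $\sink_v\mathbf{1}\{v\notin\cT\}$, and $M_{\mathrm{back}}$ couples the position-$i$ coordinate of the second (key) block to the position-$j$ coordinate of the third (query) block with weight $\lambda\mathbf{1}\{i=j-1\}$. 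Block-orthogonality of the embedding guarantees that these two contributions do not interfere and that no spurious cross-terms survive, so direct computation reproduces both identities in (\ref{eqn:simplification_TF_1}). The value matrix $\bV\in\R^{V\times D}$ is defined on basis embeddings by $\bV\,\embd(\bos)=\vecvalue$ and $\bV\,\embd_i(v)=\xi_v\bm{e}_v$ (with $\xi_v=0$ for $v\in\cT$), which is consistent because the basis vectors are linearly independent, and realizes (\ref{eqn:simplification_TF_2}). The two-layer ReLU MLP is built as a finite lookup: $\bW_1$ reads off the token-identity block of the input, and $\bW_2$ stores $\log\bm{\transition}_v$ in the column indexed by $v\notin\cT$ and zero otherwise, implementing (\ref{eqn:simplification_TF_3}).

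For (ii), I take $\sink_v,\xi_v,\lambda\to\infty$ along a common schedule with $\vecvalue=\bm{0}$ and evaluate the predicted distribution case-by-case on $v_n$. If $v_n\notin\cT$, the $\bos$-logit $\sink_{v_n}$ dominates all other logits (which are $0$), so attention concentrates on $\bos$ and $\attn(\bH)_n\to\vecvalue=\bm{0}$; combined with $\mlp(\embd_n(v_n))=\log\bm{\transition}_{v_n}$, the predicted distribution converges to $\softmax(\log\bm{\transition}_{v_n})=\bm{\transition}_{v_n}$. If $v_n\in\cT$, the $\bos$-logit vanishes by the indicator and the position-$(n-1)$ logit equals $\lambda\to\infty$, so attention concentrates at position $n-1$ and $\attn(\bH)_n\to\xi_{v_{n-1}}\bm{e}_{v_{n-1}}$; since the MLP contributes zero on trigger queries, as $\xi_{v_{n-1}}\to\infty$ the softmax output converges to the one-hot $\bm{e}_{v_{n-1}}$, which is the backcopy target. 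Both limits coincide with the BB conditional law.

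The main obstacle will be controlling two degeneracies that the clean limit statement glosses over. First, $\log\bm{\transition}_v$ may contain $-\infty$ entries whenever a bigram probability is exactly zero, which no finite MLP can reproduce; this forces the MLP weights themselves to diverge along the same schedule, e.g.\ by outputting $\log(\bm{\transition}_v+\varepsilon)+C_\varepsilon$ with $\varepsilon\to 0^+$ coupled to the other diverging parameters, and then verifying joint convergence of the softmax. Second, the backcopy branch degenerates when the preceding token $v_{n-1}$ is itself a trigger, because the restriction $\xi_{v_{n-1}}=0$ in (\ref{eqn:simplification_TF_2}) collapses the attention output to zero; the informal theorem should be read as convergence on the full-measure set of sequences avoiding consecutive triggers, or else the restricted value pattern must be relaxed to allow a nonvanishing value on triggers. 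The remaining work is the routine but careful bookkeeping check that no cross-term in the block-structured $\bK^\top\bQ$ contaminates the intended logit identities.
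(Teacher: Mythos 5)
Your construction is essentially the same as the paper's: pick a high-dimensional embedding with (block-)orthogonal structure, define $\bQ$, $\bK$, $\bV$ so that the reparameterization constraints hold term by term, realize the $\mlp$ as a lookup table, and take limits. Setting $\bK = \bI$ and working directly with $\bK^\top\bQ$, rather than mapping queries and keys onto a separate orthonormal family $\{\bm{\eta}_i\}$ as the paper does, is a cosmetic difference; both routes need an embedding dimension on the order of $VN$ (the paper's Assumption~\ref{ass:linear_indp} requires $d \ge VN+1$).

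On your two flagged degeneracies: the first (potential $-\infty$ entries in $\log\bm{\transition}_v$) is a genuine gap that the paper's proof also glosses over; the construction implicitly assumes every bigram probability is strictly positive, which the character-level bigram estimates used in the experiments would only satisfy after smoothing. Your proposed fix of coupling an $\varepsilon$-regularization to the divergent schedule would close this. The second concern (the backcopy branch collapsing when $v_{n-1}\in\cT$ since $\xi_{v_{n-1}}=0$) is actually vacuous on the support of the BB model: the task never generates two consecutive triggers. Indeed, if $v_{n-1}\in\cT$ and $v_n\in\cT$, then the backcopy rule at position $n-1$ forces $v_n=v_{n-2}$, hence $v_{n-2}\in\cT$; iterating, $v_1,\dots,v_n$ would all lie in $\cT$, but $v_1\in\cT$ implies $v_2=v_0=\bos\notin\cT$, a contradiction. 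So reading ``generates the ground-truth transitions of the BB model'' as a statement over BB-generated prefixes, the restricted value pattern needs no relaxation, and your ``full-measure set avoiding consecutive triggers'' is in fact the whole support.
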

The formal statement and proof of \Cref{thm:construction} are provided in Appendix~\ref{app:proof-construction}. 

\paragraph{Dynamic analyses of the reparameterized model.} To analyze the later stage training dynamics, we adopt the reparameterization given in Eq.~(\ref{eqn:simplification_TF_1}), (\ref{eqn:simplification_TF_2}), and (\ref{eqn:simplification_TF_3}) as our assumption. We further define $\mass_{\tokk} = \sum_{i = 1}^n 1\{ \tok_i = \tokk \}$, $\bm{\mass} = (\mass_1, \ldots, \mass_V)$, and $\mass = \sum_{\tokk \in \vocab} \mass_{\tokk} = n$. Substituting these into Eq.~\eqref{eqn:simplified_transformer}, for a non-trigger token $v \in \cV \setminus \cT$, the output of the attention layer with input sequence $\bH = [\bos, v_{1:n-1}, v]$ is given by 
\begin{equation}\label{eqn:q}
\TF(\bH)_n = \log \bm{\transition}_v + \frac{e^{\sink_\tok}}{e^{\sink_\tok} + \mass} \vecvalue + \sum_{\tokk=1}^{\vocabsize} \frac{\mass_\tokk \xi_\tokk}{e^{\sink_\tok} + \mass} \cdot \bm{e}_\tokk.
\end{equation}
Therefore, for the non-trigger token $\tok$, the cross-entropy loss between the true Markov transition $\bm{\transition}_\tok$ and the predicted transition $\softmax(\TF(\bH)_n)$ is given by
\begin{equation}\label{eqn:loss_single}
\loss_\tok(\sink_\tok, \vecvalue) = \sum_{\tokk=1}^\vocabsize \transition_{\tok\tokk}\Big\{ \log \Big[ \sum_{\toki=1}^\vocabsize \transition_{\tok\toki} \exp\Big(\frac{e^{\sink_\tok}\ivalue_\toki+\mass_\toki\xi_\toki}{e^{\sink_\tok}+\mass}\Big) \Big] - \frac{e^{\sink_\tok}\ivalue_\tokk+\mass_\tokk\xi_\tokk}{e^{\sink_\tok}+\mass} - \log \transition_{\tok\tokk} \Big\}.
\end{equation}
For simplicity, we neglect the loss on trigger tokens and assume that $(\{ \mass_i \}_{i \in [V]}, \mass)$ remain fixed across different positions in the input sequences.\footnote{We note that \cite{reddy2023mechanistic} makes a similar simplification in analyzing induction heads.} We then consider the total loss as the average of the losses on each non-trigger token, weighted by its proportion in the stable distribution $\{\pi_v\}_{v \in \vocab}$, given by
\begin{equation}\label{eqn:total_loss}
\textstyle \loss(\vecsink,\vecvalue) = \sum_{\tok \in \vocab \setminus \cT} \stable_\tok \cdot \loss_\tok(\sink_\tok, \vecvalue).
\end{equation}
We assume that $\bm{\xi}$ and $\lambda$ are fixed, and that $\vecsink$ (the attention logits of the \bos~token) and $\vecvalue$ (the value state norms of the \bos~token) are trainable variables, as we are interested in the dynamics of the attention logits and value state norm for the \bos~token. The following theorem illustrates the logarithmic growth of the attention logits $\vecsink$, the shrinkage of value states $\vecvalue$, and the stable phase of these two variables.
\begin{theorem}\label{thm:main}
Consider the gradient flow of the loss function $\loss(\vecsink, \vecvalue)$. Assume $\xi_\tok \ge 0$ for any $\tok$ and $\stable_\tok > 0$ for any $\tok\in\vocab$, and $\{ \mass_i \cdot \xi_i \}_{i \in \vocab}$ are not all equal. 
\begin{itemize}[leftmargin=2em]
\setlength\itemsep{0pt}
    \item[\textup{(a)}] (Attention logits grow logarithmically, reinforced by small value states) Fix $\vecvalue= \beta \cdot \bm{1}$ for a constant $\beta$, and consider the gradient flow over $\vecsink$. With any initial value $\vecsink(0)$, there exists $\bm{r}(t)$ with norm uniformly bounded in time, such that 
    \begin{equation}
    \textstyle \vecsink(t) = \frac{1}{2} \log t \cdot \bm{1} + \bm{r}(t).\end{equation}
    \item[\textup{(b)}] (Value state shrinks to a small constant vector, reinforced by large attention logits) Fix $\vecsink = \sink \cdot \bm{1}$ for a constant $\sink$, define $\bar{\ivalue}(0) = V^\inv[\sum_{\tok} \ivalue_\tok(0)]$ and $\meanvalue=\vocabsize^\inv[\sum_{\tok} \mass_\tok \xi_\tok]$. Consider the gradient flow over $\vecvalue$. As $t \to \infty$, we have
    \begin{equation}\vecvalue(t) \to \vecvalue^\star = [\bar\ivalue(0)+e^{-\sink} \meanvalue] \cdot \bm{1} - e^{-\sink} \cdot \bm{\mass}\circ \bm{\xi}.\end{equation}
    \item[\textup{(c)}] (Stable phase: Sink-logits concentration) Consider the gradient flow over the variables $(\vecsink, \vecvalue)$. Any vector of the following form
    \begin{equation}\vecsink = \sink \cdot \bm{1}, \quad \vecvalue = c \cdot \bm{1} - e^{-\sink} \cdot \bm{\mass} \circ \bm{\xi},  ~~~ \sink, c\in\R \end{equation}
     is a stationary point. These are all global minimizers of $\loss(\vecsink,\vecvalue)$.
\end{itemize}
\end{theorem}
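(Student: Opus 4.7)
The plan is to exploit the fact that for each non-trigger token $\tok\in\vocab\setminus\cT$, the per-token loss $\loss_\tok$ depends on $(\sink_\tok,\vecvalue)$ only through the ``effective value'' vector
$$f_k(\sink_\tok,\vecvalue)=\frac{e^{\sink_\tok}\ivalue_k+\mass_k\xi_k}{e^{\sink_\tok}+\mass},\qquad k\in\vocab,$$
since the softmax input for query $\tok$ is $\log p_{\tok k}+f_k$. Writing $q_{\tok k}\propto p_{\tok k}\exp(f_k)$, one gets $\loss_\tok=\KL{p_\tok}{q_\tok}+H(p_\tok)$, which is globally minimized at $H(p_\tok)$ iff $f_k$ is constant in $k$. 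This common backbone trivializes part (c) and reduces parts (a) and (b) to a one-parameter ODE analysis and a convex optimization, respectively.

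For part (a), fix $\vecvalue=\beta\bm 1$. Then $f_k=\beta+\epsilon_\tok(\mass_k\xi_k-\beta\mass)$ with $\epsilon_\tok=1/(e^{\sink_\tok}+\mass)$; the additive constant $\beta$ cancels inside the softmax, so $q_\tok$ depends only on $\epsilon_\tok$ and $\bm\mass\circ\bm\xi$, and hence $\loss_\tok$ depends only on $\sink_\tok$. Consequently the coordinates of $\vecsink$ decouple. A Taylor expansion of $\log Z_\tok$ in $\epsilon_\tok$ yields
$$\loss_\tok=H(p_\tok)+\tfrac12\epsilon_\tok^2\,\Var_{p_\tok}(\bm\mass\circ\bm\xi)+O(\epsilon_\tok^3),$$
so $\dot\sink_\tok=c_\tok\,e^{-2\sink_\tok}(1+o(1))$ with $c_\tok=\stable_\tok\Var_{p_\tok}(\bm\mass\circ\bm\xi)>0$ (positivity coming from the assumption on $\bm\mass\circ\bm\xi$ together with the $\stable_\tok>0$ hypothesis). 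Integrating the leading ODE gives $e^{2\sink_\tok(t)}=e^{2\sink_\tok(0)}+2c_\tok t+o(t)$ and hence $\sink_\tok(t)=\tfrac12\log t+O(1)$ for each $\tok$, producing the uniformly bounded remainder $\bm r(t)$. To handle arbitrary initializations, I would first show that $\sink_\tok(t)$ is strictly increasing for $\epsilon_\tok>0$ (using convexity of $\log Z_\tok$ in $\epsilon_\tok$, which forces $d\loss_\tok/d\sink_\tok<0$), so that the trajectory enters the small-$\epsilon_\tok$ regime in finite time and the Taylor control takes over.

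For part (b), fix $\vecsink=\sink\bm 1$. A direct chain-rule calculation gives
$$\frac{\partial\loss}{\partial\ivalue_j}=\frac{e^\sink}{e^\sink+\mass}\sum_{\tok\in\vocab\setminus\cT}\stable_\tok(q_{\tok j}-p_{\tok j}),$$
and summing over $j$ yields the conservation law $\tfrac{d}{dt}\sum_j\ivalue_j(t)\equiv 0$, so $\bar\ivalue(t)\equiv\bar\ivalue(0)$. The loss is convex in $\vecvalue$ (a weighted sum of log-sum-exps composed with an affine map of $\vecvalue$, minus a linear term). By Gibbs' inequality $\loss\ge\sum_\tok\stable_\tok H(p_\tok)$, with equality iff $q_\tok=p_\tok$ for every non-trigger $\tok$, iff $f_k$ is constant in $k$, iff $\ivalue_k=c'-e^{-\sink}\mass_k\xi_k$ for some scalar $c'$. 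Intersecting with the invariant affine hyperplane $\bar\ivalue=\bar\ivalue(0)$ pins $c'=\bar\ivalue(0)+e^{-\sink}\meanvalue$, which is exactly $\vecvalue^\star$; convergence of the gradient flow to this unique minimizer then follows from standard Lyapunov descent plus coercivity of $\loss$ on the invariant hyperplane in the direction orthogonal to $\bm 1$.

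For part (c), substituting $\ivalue_k=c-e^{-\sink}\mass_k\xi_k$ into $f_k$ yields $f_k=c\,e^\sink/(e^\sink+\mass)$, independent of $k$; hence $q_\tok=p_\tok$ and $\loss$ attains its global lower bound $\sum_\tok\stable_\tok H(p_\tok)$. Global optimality forces the gradient to vanish, so each such point is simultaneously stationary and a global minimizer, as claimed. The principal technical obstacle across the three parts is part (a): turning the formal small-$\epsilon_\tok$ expansion into a rigorous uniform-in-time bound on $\bm r(t)$, since the $O(\epsilon_\tok^3)$ remainders have to be controlled along the entire gradient-flow trajectory rather than only in an asymptotic sense.
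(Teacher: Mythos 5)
Your proposal follows essentially the same route as the paper's proof, but frames it through a cleaner exponential-family lens. Both arguments rely on: (i) for part (a), reducing to a one-parameter ODE for $u_\tok = e^{\sink_\tok}$, recognizing $\dot u_\tok \sim C_\tok / u_\tok$ via a small-$\epsilon_\tok$ expansion, and integrating; (ii) for part (b), the conservation law $\sum_j\dot\ivalue_j = 0$ plus convexity in $\vecvalue$; (iii) for part (c), substituting the candidate point and verifying the gradient vanishes. The paper does (a) by writing $\dot u_\tok$ as a covariance, $\dot u_\tok \propto \mathrm{Cov}_{p_\tok}(e^{\varphi/(u+\mass)},\varphi)/\E_{p_\tok} e^{\varphi/(u+\mass)}$, then Taylor-expanding the exponential in mean-value form so that the monotonicity of $u_\tok(t)$ gives explicit two-sided bounds $\widetilde C/u \le \dot u \le C/u$ uniformly in time; this is precisely how the paper handles the remainder control you correctly flag as the principal obstacle in your sketch. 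For (b), the paper uses a Polyak--\L ojasiewicz argument (via Lemmas A.2--A.3 giving a spectral lower bound for the Hessian restricted to $\bm 1^\perp$, once $\Ppred$ is TV-close to $\Transition$), which is slightly more quantitative than the ``Lyapunov descent plus coercivity'' argument you invoke, but serves the same role. For (c), your Gibbs-inequality/KL argument (the loss is $\sum_\tok\stable_\tok[\KL{p_\tok}{q_\tok} + H(p_\tok)]$, minimized iff $q_\tok = p_\tok$, iff $f_k$ is constant in $k$) is arguably cleaner than the paper's, which verifies the stationary gradient and then shows the Hessian is PSD by a somewhat lengthy block-diagonalization and Cauchy--Schwarz calculation; your route delivers global minimality directly without the Hessian computation. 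Overall: correct approach, same key structural ideas, with your exponential-tilt framing streamlining (c) at the cost of leaving the remainder bounds in (a) and the convergence argument in (b) at a sketch level where the paper fills in the details.
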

The proof of Theorem~\ref{thm:main} is provided in Appendix~\ref{app:proof-main-3}, \ref{appsec:proof-main-1}, and \ref{appsec:proof-main-2}. We offer two key remarks: (1) As $\sink_\tok\to\infty$, a Taylor expansion of the gradient $\partial \loss/\partial \sink_\tok$ suggests that $\mathrm{d}\sink_\tok/\mathrm{d}t \propto \exp(-2 \sink_\tok)$, which leads to the logarithmic growth of $\sink_\tok$. Similar logarithmic growth has been reported in the literature under different setups \citep{tian2023scan,zhu2024towards};
(2) The stable phase described in Theorem~\ref{thm:main}(c) seems to imply that the system can remain stable without attention sinks, as it does not require $\sink$ to be large. However, in practice, models trained on the BB task tend to converge to a stable phase where $\sink$ is relatively large. 


\paragraph{The formation of attention sinks and value-state drains.} Below, we explain how Theorem~\ref{thm:main} reveals the \textit{mutual reinforcement mechanism} behind the formation of attention sinks and value-state drains.  
\begin{itemize}[leftmargin=2em]
\item[(a)] When the value states of the \bos~token are small and constant, $\vecvalue = \ivalue \cdot \bm{1}$, Theorem~\ref{thm:main}(a) shows that the attention logits on the \bos~token $\vecsink(t) \approx \sink(t) \bm{1}$  for $\sink(t) = (1/2) \log t$, grow logarithmically. This demonstrates that the presence of a small constant value state ($\vecvalue = \ivalue \cdot \bm{1}$) reinforces the formation of attention sinks ($\vecsink(t) \approx \sink(t) \cdot \bm{1}$ for $\sink(t)$ increases logarithmically). 
\item[(b)] When the attention logits of the \bos~token are large and constant, $\vecsink = \sink \cdot \bm{1}$ for $\sink \to \infty$, Theorem~\ref{thm:main}(b) shows that the value states of the \bos~token $\vecvalue(t) \to \bar{\ivalue}(0) \cdot \bm{1}$. Starting with a random Gaussian initialization for $\vecvalue(0)$, we have $\|\vecvalue(t)\|_2 \approx \|\bar{\ivalue}(0) \cdot \bm{1}\|_2 \approx \|\vecvalue(0)\|_2 / \sqrt{V}$, where $V$ is the vocabulary size, typically large. This indicates that attention sinks ($\vecsink = \sink \cdot \bm{1}$ for large $\sink$) reinforces the formation of value-state drains ($\vecvalue(t) \to \ivalue \cdot \bm{1}$ for small $\ivalue$). 
\item[(c)] In the later stages of the dynamics, both the attention logits and value states of the \bos~token stabilize, as described in~\ref{thm:main}(c). The attention logits remain constant at $\vecsink = \sink \cdot \bm{1}$ with large $\sink$, while the value states become small, $\vecvalue = [\bar\ivalue(0)+e^{-\sink} \meanvalue] \cdot \bm{1} - e^{-\sink} \cdot \bm{\mass}\circ \bm{\xi}$. 
\end{itemize}


Based on these theoretical insights, we summarize the dynamical mechanism underlying attention sinks and value-state drains: For any attention head given a specific prompt, if the model can accurately predict the next token without using the attention head, but adding any value state from previous tokens—except for certain special tokens—worsens the prediction, the attention head will become dormant, forming an attention sink at those special tokens. This phenomenon is induced by the mutual reinforcement mechanism, as described below: 
\begin{figure}[h]
    \centering
    \begin{minipage}{0.68\textwidth}
    \begin{claim}[Mutual reinforcement mechanism]\label{claim:mutual-reinforcement}
Dynamically, attention sinks and value-state drains arise through mutual reinforcement:
\begin{itemize}[leftmargin=2em]
\setlength\itemsep{0pt}
    \item[\textup{(a)}] The SoftMax mechanism shifts attention weights towards tokens that exhibit value-state drains, reinforcing these tokens as attention sinks. 
    \item[\textup{(b)}] Attention sinks on these extreme tokens further suppress their value states, reinforcing their role as value-state drains. 
    \item[\textup{(c)}] The mutual reinforcement stabilizes when all non-trigger tokens have large, nearly identical attention logits on the extreme token. 
\end{itemize} 
Due to the causal mask, the training dynamics favor the \bos~token as the extreme token. 
\end{claim}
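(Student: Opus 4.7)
The plan is to justify the three parts of Claim~\ref{claim:mutual-reinforcement} as direct qualitative consequences of the three parts of Theorem~\ref{thm:main}, together with a separate symmetry argument that singles out \bos{} as the extreme token. Since Claim~\ref{claim:mutual-reinforcement} is phrased at the conceptual level, the role of the proof is to make the mapping between ``mutual reinforcement'' and the frozen-variable gradient flows of Theorem~\ref{thm:main} explicit, and to confirm that the sign of each coupling is indeed reinforcing rather than cancelling.

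\textbf{Steps (a) and (b).} For part (a), I would start from the expression for $\loss_\tok(\sink_\tok, \vecvalue)$ in \eqref{eqn:loss_single} and differentiate with respect to $\sink_\tok$ while holding $\vecvalue = \ivalue \cdot \bm{1}$ constant. The resulting ODE is exactly the gradient flow studied in Theorem~\ref{thm:main}(a), and the logarithmic growth $\sink_\tok(t) \approx \tfrac{1}{2}\log t$ translates into the statement that a smaller value-state magnitude drives the SoftMax weight on \bos{} monotonically upward. For part (b), I would freeze $\vecsink = \sink \cdot \bm{1}$ at a large value and apply Theorem~\ref{thm:main}(b); the limit $\vecvalue^\star = [\bar\ivalue(0) + e^{-\sink}\meanvalue]\bm{1} - e^{-\sink}\bm{\mass}\circ\bm{\xi}$ has norm of order $\|\vecvalue(0)\|_2/\sqrt{V}$ under a random Gaussian initialization of $\vecvalue(0)$ with vocabulary size $V \gg 1$, which is the quantitative version of ``value-state drain''. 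Combining the two reductions gives the positive-feedback picture of the claim: (a) says small value amplifies attention logits, (b) says large attention logits shrink value states, so both feedback signs point in the same direction.

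\textbf{Step (c) and the causal mask.} For the stability part, I would invoke Theorem~\ref{thm:main}(c) directly: the set of global minimizers of $\loss(\vecsink,\vecvalue)$ contains precisely the configurations with $\vecsink = \sink \cdot \bm{1}$, which is exactly the sink-logits concentration condition asserted in Claim~\ref{claim:mutual-reinforcement}(c). To justify the last sentence singling out \bos{}, I would argue by symmetry under the causal mask: in a sampled sequence, any non-$\bos$ token appears at a random position and is visible only to tokens strictly to its right, so no single non-$\bos$ token can serve as a key for queries at \emph{all} positions. The \bos{} token alone is at position $0$ in every sequence and appears in the key set of every query, so its parameters $(\sink_{\bos}, \vecvalue)$ receive a gradient signal averaged over every position and every sample; the mutual reinforcement in (a)--(b) therefore acts most strongly on \bos{}.

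\textbf{Main obstacle.} The hard part is not the three individual gradient flows but the coupling. Theorem~\ref{thm:main} analyzes only the ``one variable frozen'' reductions, while the true dynamics evolves $(\vecsink(t),\vecvalue(t))$ jointly, and a priori the two flows could destructively interact. A fully rigorous coupled statement would require a time-scale separation argument showing that along the joint trajectory each frozen analysis remains accurate to leading order, e.g.\ that $\|\dot{\vecvalue}\|$ is dominated by $\|\dot{\vecsink}\|$ (or vice versa) in a quantifiable sense. For the purposes of a qualitative claim, I would sidestep this by arguing that each frozen analysis describes a locally monotone feedback loop whose fixed set is contained in the manifold of global minimizers identified in (c), and I would point to the coupled trajectory in Figure~\ref{fig:dynamics} as empirical confirmation that the qualitative picture survives the coupling.
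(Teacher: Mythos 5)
Your proposal follows essentially the same route the paper takes: Claim~\ref{claim:mutual-reinforcement} is not proved as a formal theorem but is read off from Theorem~\ref{thm:main}'s three frozen-variable gradient-flow analyses, exactly as you describe, with the coupled picture backed by the empirical trajectory in Figure~\ref{fig:dynamics}. Your added symmetry argument under the causal mask for why \bos{} is singled out is slightly more explicit than the paper's one-line assertion, and your identification of the uncontrolled coupled dynamics as the main rigor gap is precisely the gap the paper also leaves open and defers to experiment.
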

    \end{minipage}
    ~~~~
    \begin{minipage}{0.28\textwidth}
        \includegraphics[width=\linewidth]{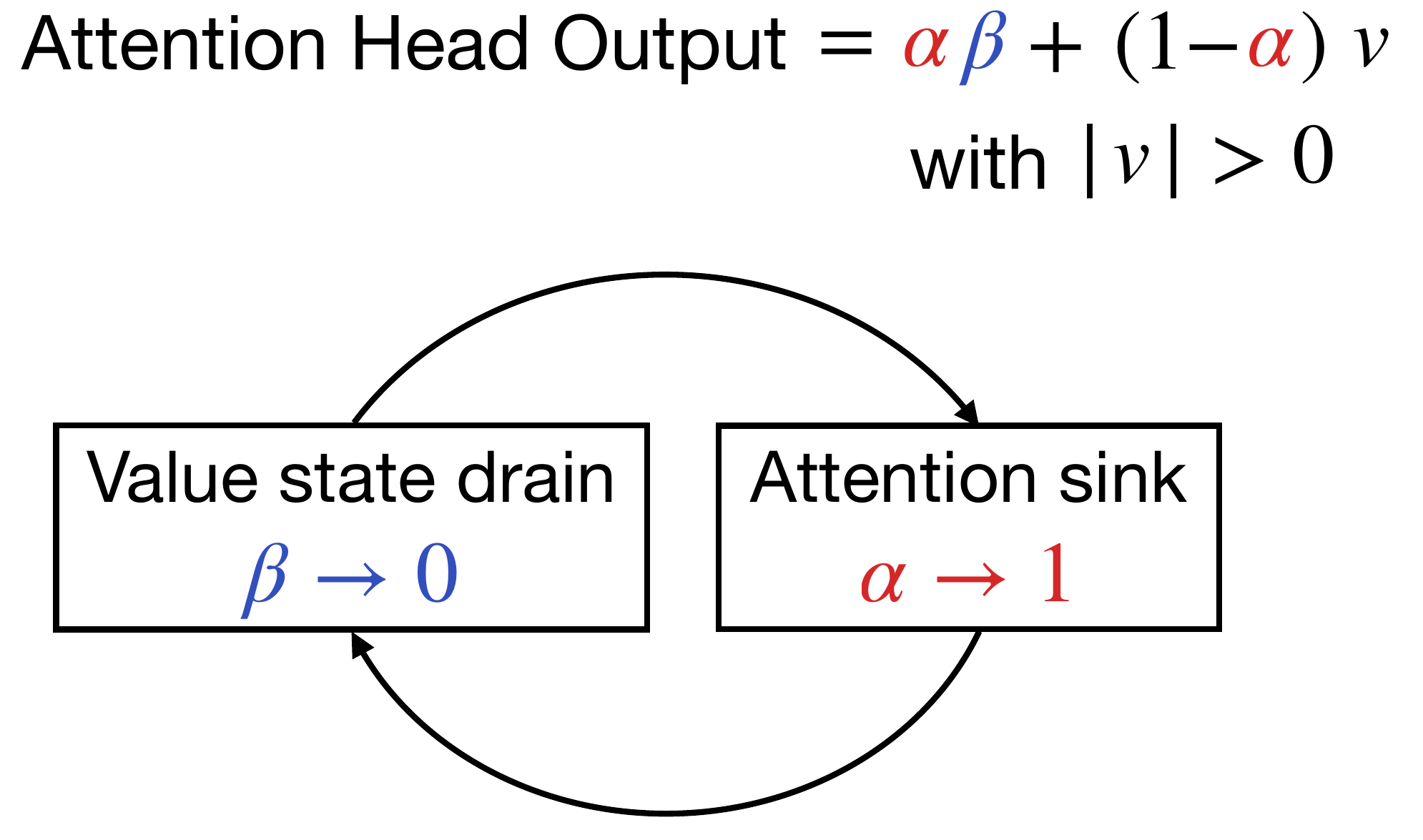}
        \caption{Mutual reinforcement mechanism}
        \label{figure:illustrate-mutual-reinforcement}
    \end{minipage}%
\end{figure}




\paragraph{Experimental verification of the quantitative prediction.} Revisiting Figure~\ref{fig:dynamics}, which illustrates the dynamics of a single-layer transformer model trained with Adam on the BB task, we observe that $\Delta\text{logit}_{\cdot,\bos}$ exhibits growth rates consistent with Theorem~\ref{thm:main}. In this context, $\Delta\text{logit}_{\cdot,\bos}$  corresponds to $\sink$, as all other attention logits are assumed to be zero under the assumptions of Theorem~\ref{thm:main}. When plotted on a logarithmic scale, the $\Delta\text{logit}_{\cdot,\bos}$ curve grows approximately linearly between 1,000 and 10,000 steps, then accelerates before stabilizing around 100,000 steps.  Meanwhile, the norm of the value state $\|\vall_{\bos}\|_2$ decreases monotonically. The simultaneous increase in attention weights and decrease in value-state norms demonstrate the mutual reinforcement mechanism during the training process. 

To further validate that Theorem~\ref{thm:main} accurately captures the dynamics of the original model, we constructed a simplified model based on Eq.~\eqref{eqn:simplification_TF_1}, \eqref{eqn:simplification_TF_2}, and \eqref{eqn:simplification_TF_3}, and trained the parameters $(\vecsink \in \R^{V}, \vecvalue \in \R^V, \bm{\xi} \in \R^V, \lambda \in \R)$ using Adam. The resulting training curves closely resemble those of the one-layer transformer, also displaying the mutual reinforcement mechanism. A detailed description of the experiment can be found in Appendix~\ref{appsec:train-simple}.


\paragraph{Generality of the theoretical prediction.} Although Theorem~\ref{thm:main} focuses on a specific BB task with a simplified architecture and loss function, the underlying principles are broadly applicable to more general settings. In particular, we expect that the formation of extreme tokens in LLMs follows a similar mutual reinforcement mechanism. Indeed, Theorem~\ref{thm:main} is essentially based on the following two key assumptions: (1) even with a specific attention head $\attn$ zeroed out, the LLM can still accurately predict the next token, implying that the attention head is better off dormant; and (2) for the attention head $\attn$, value states of previous tokens—except for certain special tokens—remain relevant for specific tasks and therefore do not vanish. Under these assumptions, we anticipate the formation of attention sinks and value-state drains for the attention head $\attn$ and such special tokens. In Section~\ref{sec:llm}, we explore how these phenomena are formed during the training dynamics of LLMs, finding that the empirical results align with the theory. 


\begin{figure}
  \centering
    \begin{minipage}{0.3\textwidth}
      \centering
      \subcaption{\small ReLU attention}
      \label{fig:relu-attn}
      \includegraphics[width=\textwidth]{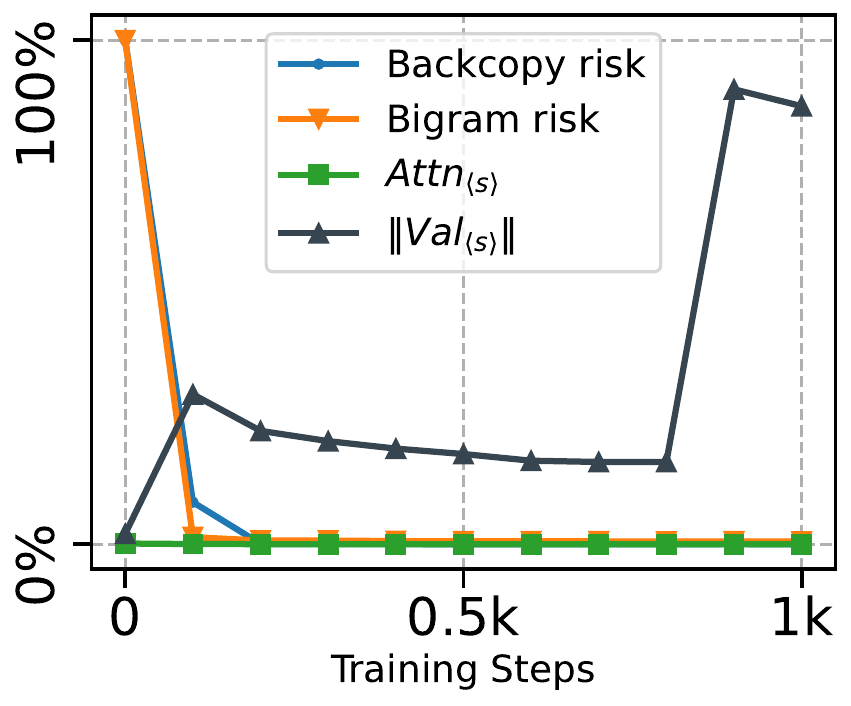}
  \end{minipage}
    \begin{minipage}{0.33\textwidth}
      \centering
      \subcaption{\small Interventions on a 3-layer TF}
      \label{fig:massive-interventions}
      \includegraphics[width=\textwidth]{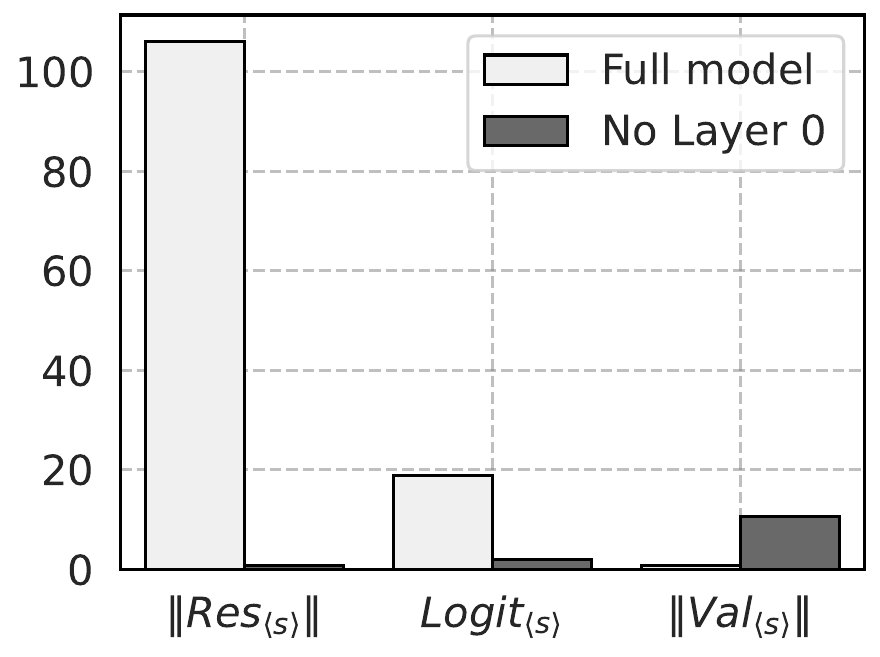}
  \end{minipage}
  \begin{minipage}{0.33\textwidth}
      \centering
      \subcaption{\small Eliminating
      residual-state peaks}
      \label{fig:sgd}
      \includegraphics[width=\textwidth]{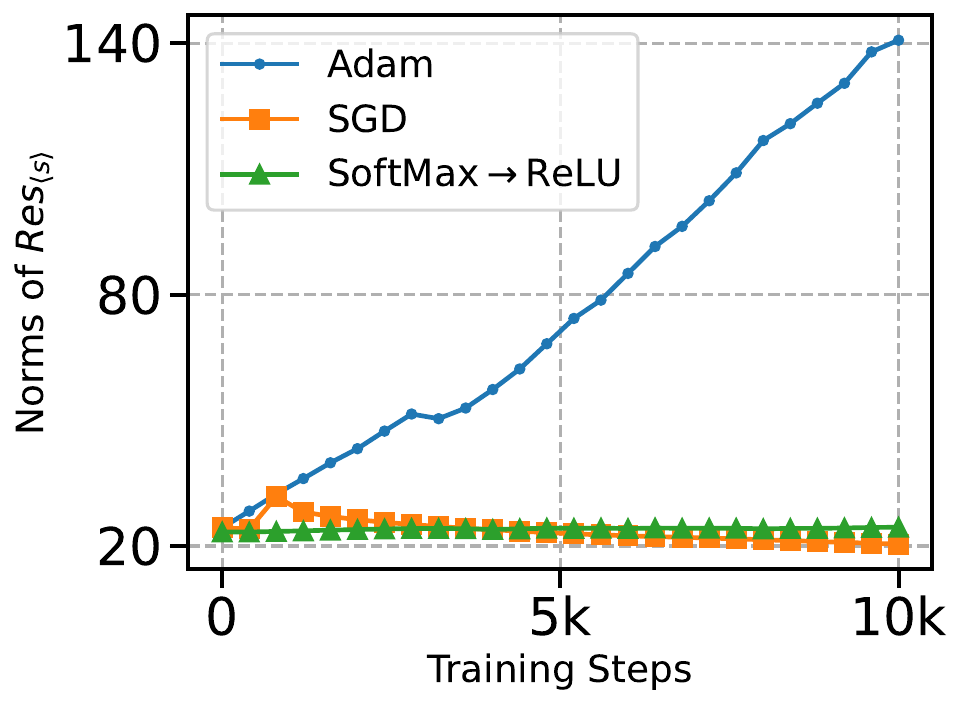}
  \end{minipage}
    \caption{\small 
    \textit{Left (a)}: The training dynamics of the single-layer ReLU attention transformer on the BB task.
    \textit{Middle (b)}: The intervention results on the \attn+\mlp+\attn+\mlp+\mlp~architecture. The attention sink and value-state peak of the middle $\attn$ layer disappear after zeroing out $\attn+\mlp$ of layer 0. 
    \textit{Right (c)}: The evolution of massive norms in a three-layer transformer trained with Adam, SGD, and using a ReLU attention transformer. Notably, only the three-layer model with Softax attention trained using Adam results in the formation of residual-state peaks.}
\end{figure}

\paragraph{Replacing SoftMax by ReLU attention removes attention sinks and value-state drains.} As a consequence of our theory, we predict that training using ReLU attention in place of SoftMax attention will prevent the mutual reinforcement mechanism. Without SoftMax, the training dynamics no longer push the attention weights toward the \bos~token, which remains zero throughout training. In the absence of attention sinks, the dynamics no longer push down the value state norm, and the mutual reinforcement mechanism breaks. Figure~\ref{fig:relu-attn} presents the training dynamics on the BB task using ReLU instead of SoftMax attention, showing that both the Bigram and Backcopy risk converge to the Bayes risk after 200 training steps, but the attention logits of \bos~do not increase, and the value state does not shrink, confirming our prediction.

\subsection{The emergence of residual-state peaks}
\label{sec:res-peak}
In this section, we experimentally investigate the residual-state peaks phenomenon. We observe that no residual-state peaks occur in the single-layer transformer trained on the BB task. To explore this further, we train slightly deeper transformers on the BB task and track the residual state norm after layer $0$. We observe that two-layer models do not exhibit residual-state peaks, while models with three or more layers do.
Additional experimental results are provided in Appendix~\ref{appsec:mini-res-peak} and \ref{appsec:three-layer-tf}. 

\paragraph{Massive residual state at layer 0 output induces attention sinks and value-state drains in the middle layer.} To investigate the relationship between massive residual states and attention sinks, we train on the BB task using the ``\attn+\mlp+\attn+\mlp+\mlp'' model, which is the minimal structure that shows the massive residual states phenomena. We perform intervention by analyzing how the model's behavior changes after zeroing out layer 0 (the first ``\attn+\mlp'' block). Before and after zeroing, we compute the difference in $\|\res_\bos\|$ and $\text{Mean}_\tok[\|\res_\tok\|]$ at the layer 0 output, and compute $\text{logit}_{\cdot,\bos}$ and $\|\val_{\bos}\|$ in the middle layer. After zeroing out, the residual state norm becomes non-massive, and attention logits and the value state norm return to a normal level. This confirms that the residual-state peak contributes to the attention sink and value-state-drain phenomena in the middle layer of pre-trained transformers. 



\paragraph{Linear growth of residual-state norm with Adam training.} Figure~\ref{fig:sgd} shows the residual-state norms of the \bos~token at the layer 0 output of three-layer transformers during pre-training on the BB task. The results indicate that training the transformer with Adam leads to a linear increase in residual state norms.

\paragraph{Switching from Adam to SGD and switching from SoftMax to ReLU attention eliminates the residual-state peaks.} Figure~\ref{fig:sgd} also illustrates the dynamics of residual-state norms in other training setups. When switching the training algorithm from Adam to SGD, attention sinks remain, but residual-state peaks disappear. Similarly, switching to ReLU attention, which lacks the mutual reinforcement mechanism, also eliminates residual-state peaks. These findings highlight the dependence of residual-state peaks on SoftMax attention and the Adam optimization algorithm. We propose a potential explanation of this phenomenon in Appendix~\ref{appsec:theory-for-res}.

\section{Extreme-token Phenomena in pretrained LLMs} \label{sec:llm}


In this section, we investigate extreme-token phenomena in open-source pretrained LLMs. In \Cref{sub:active_dormant}, we analyze the static behavior of these phenomena in Llama 2-7B-Base \citep{touvron2023llama}, confirming the existence of the \textit{active-dormant mechanism} in LLMs. Notably, we identify a specific head that is active on GitHub samples but dormant on Wikipedia samples. In \Cref{sub:olmo_dynamics}, we examine the dynamic behavior of extreme-token phenomena during the pretraining of OLMo-7B \citep{groeneveld2024olmo}. We show that the attention logits, value states norm, and residual states norm of the sink token(s) in OLMo reflect behavior similar to that of the simpler BB model. Specifically, the simultaneous formation of attention sinks and value-state drains gives evidence for the \textit{mutual reinforcement mechanism}.

\subsection{Active-dormant mechanism in LLMs}\label{sub:active_dormant}

\begin{figure}
    \centering
    \begin{subfigure}[t]{0.58\textwidth}
        \centering
        \caption{\small Attention weights for GitHub/Wikipedia data}
        \includegraphics[width=0.9\textwidth]{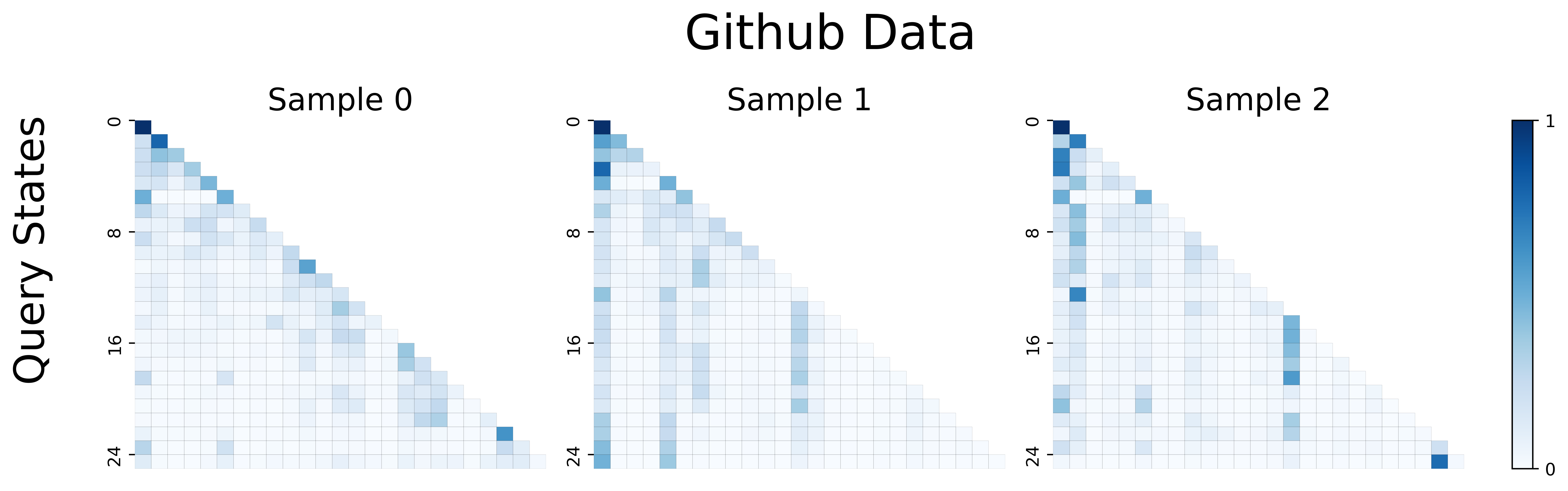}
        
        \includegraphics[width=0.9\textwidth]{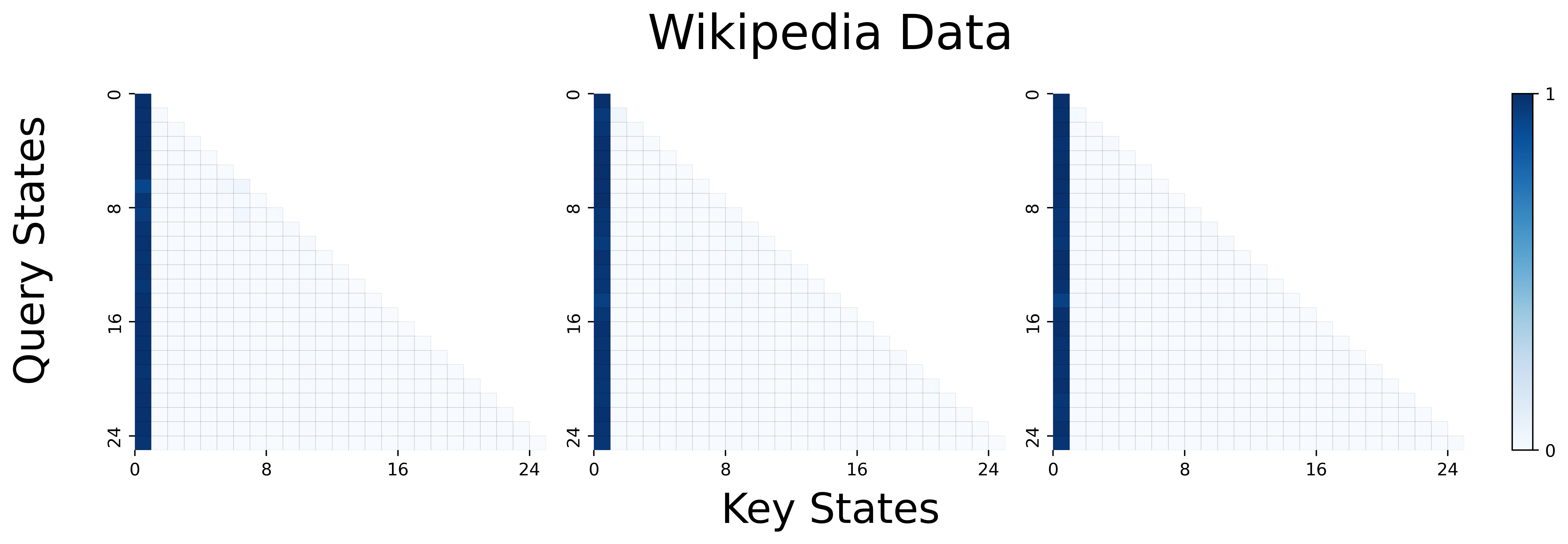}
        \label{fig:github_wikipedia_weights}
    \end{subfigure}
    \hfill
    \begin{subfigure}[t]{0.38\textwidth}
        \caption{\small Zero-out-head intervention outcomes}
        \label{fig:github_wikipedia_zero_out}
        \vskip1.5em
        \includegraphics[width=0.9\textwidth]{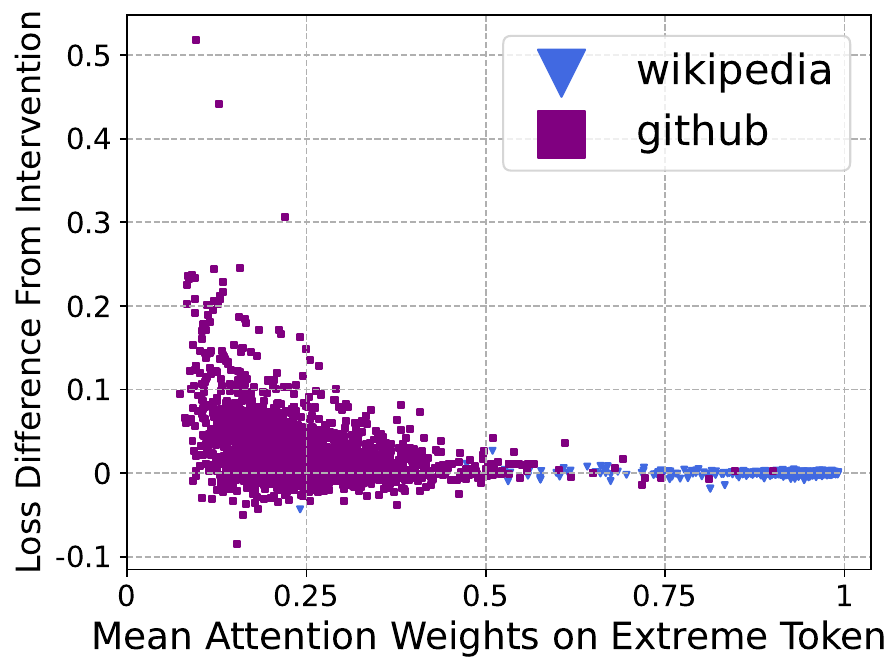}
    \end{subfigure}
    \vspace{-0.5em}
    \caption{\small \textbf{Active-dormant mechanism of Layer 16 Head 25 (L16H25) of Llama 2-7B-Base.} We observe that L16H25 is active on GitHub data and dormant on Wikipedia data, both sourced from RedPajama-1T \citep{together2023redpajama}. \textit{Left (a)}: Attention weights of L16H25, prompted by three randomly selected samples from each domain. \textit{Right (b)}: Results of an intervention study showing the change in cross-entropy loss when the output of L16H25 (specifically, its value states) is set to zero across sequences in both domains. The findings indicate that the model's performance for GitHub data, measured by cross-entropy loss, strongly relies on the output of this attention head. 
    }
    \label{fig:dormant_heads_domain_dependent}
\end{figure}


Our study of the BB model leads to the following prediction with respect to the extreme-token phenomena, which we hypothesize also applies to LLMs:  
\begin{center}
    \textit{Attention heads are controlled by an active-dormant mechanism (cf.\ Claim \ref{claim:active-dormant}). The presence of attention sinks and value-state drains indicates that an attention head is in a dormant phase.}
\end{center}

This hypothesis suggests that in LLMs, whether an attention head becomes a sink depends on the context. Specifically, the attention head may become entirely irrelevant for selecting the next tokens in certain contexts or tasks, but not in others. When this irrelevance occurs, the attention head transitions into an attention sink. This hypothesis was confirmed in small transformers and the BB task, as demonstrated in Section~\ref{sec:bb_task}.

Accordingly, we aim to identify instances of attention heads in pretrained LLMs that exhibit this active-dormant behavior, i.e., heads that are dormant in some domains but active in others. In \Cref{fig:dormant_heads_domain_dependent}, we display a particular attention head---Layer 16 Head 25 (L16H25) of Llama 2-7B-Base \citep{touvron2023llama}---which demonstrates a clear active-dormant distinction across two distinct contexts (e.g., tokens from the GitHub subset versus the Wikipedia subset of RedPajama \citep{together2023redpajama}). While many attention heads show similar context-dependent behavior (see \Cref{sec:more_heads}), we focus on this one because the conditions for its activation are straightforward and interpretable, whereas other heads may have more nuanced criteria. 

\Cref{fig:github_wikipedia_weights} shows the attention maps of L16H25 on samples from both the GitHub and Wikipedia subsets of RedPajama. It demonstrates that L16H26 is \textit{dormant} (i.e., an attention sink) on samples from Wikipedia, which resemble prose, and \textit{active} (i.e., not an attention sink) on samples from GitHub, which resemble code. Additionally, \Cref{fig:github_wikipedia_zero_out} compares the loss difference when L16H25 is zeroed out for prompts from both domains. The results show that zeroing out this head significantly decreases model performance on GitHub sequences, while having minimal impact on Wikipedia sequences. This observation also confirms the head behaves as dormant in some contexts and active in others---in some contexts, removing this head has no effect on model performance, while in others, its removal causes significant performance drops.

\subsection{Extreme-token phenomena along training dynamics of LLMs}\label{sub:olmo_dynamics}

\begin{figure}[t]
    \centering
    \begin{subfigure}[t]{0.32\textwidth}
        \centering 
        \caption{\small Attention sink dynamics}
        \includegraphics[width=0.9\textwidth]{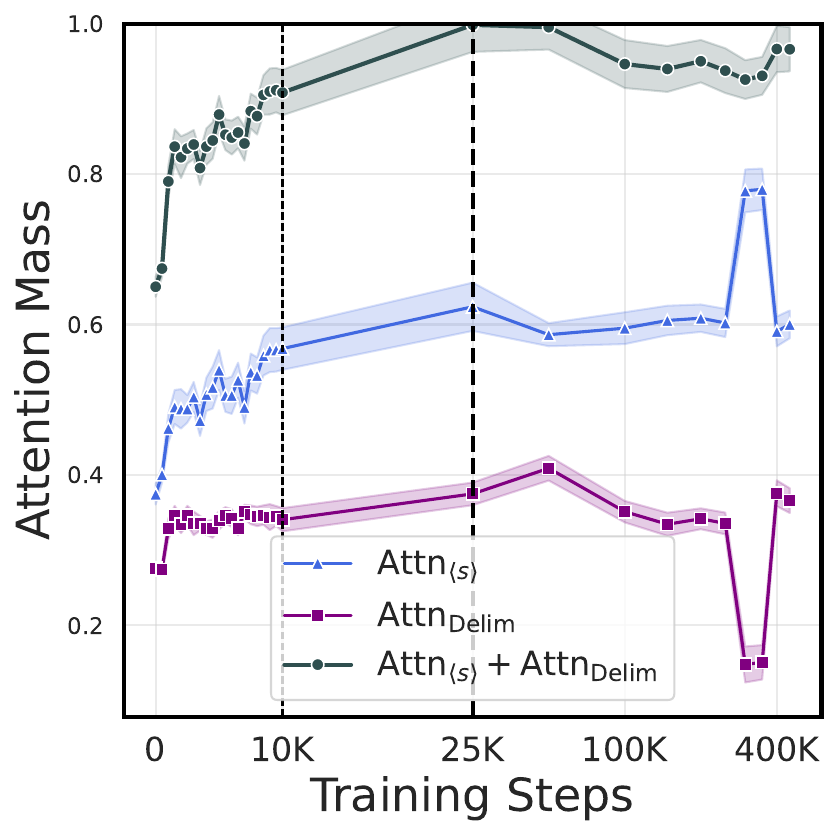}
        \label{fig:olmo_sink}
    \end{subfigure}
    \begin{subfigure}[t]{0.32\textwidth}
        \centering 
        \caption{\small Value state dynamics}
        \includegraphics[width=0.9\textwidth]{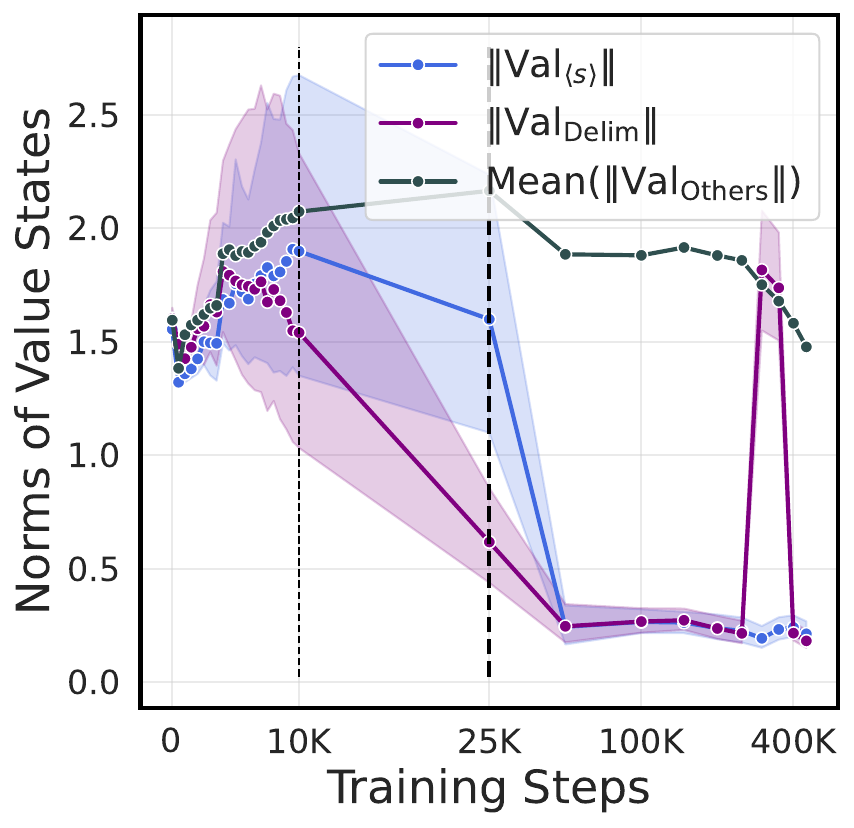}
        \label{fig:olmo_drain}
    \end{subfigure}
    \begin{subfigure}[t]{0.32\textwidth}
        \centering 
        \caption{\small Residual state dynamics}
        \includegraphics[width=0.9\textwidth]{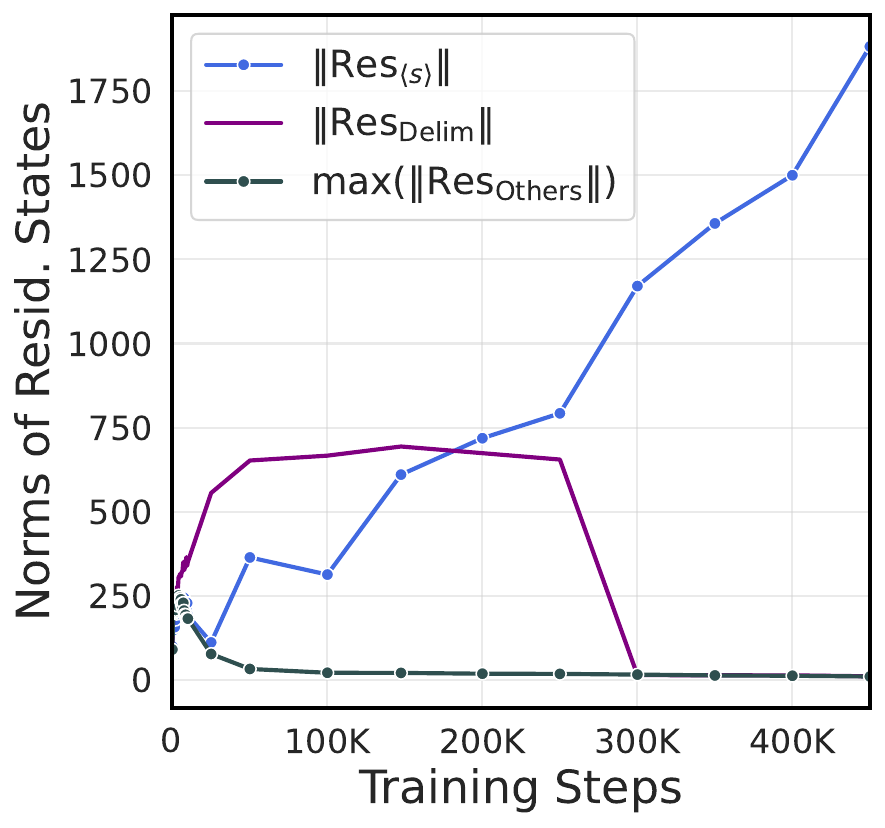}
        \label{fig:olmo_peak}
    \end{subfigure}
    \caption{\small \textbf{Attention weights, value state norms, and residual state norms of Layer 24 during the training dynamics of OLMo.} \textit{Left (a)}: The total attention mass on extreme tokens \bos~and ``\text{Delim}''(\period) at Layer 24, averaged across all attention heads. The horizontal axis is logarithmically scaled after step $10$k. We observe a rapid increase followed by stabilization within the range \([0.9, 1]\) for the rest of training, consistent with our predictions. \textit{Middle (b)}: The value state norms of each token at Layer 24 during training, averaged over all heads. The horizontal axis is logarithmically scaled after step $10k$. Initially, the value states of all tokens shrink, eventually converging, while the value states of the extreme tokens shrink to significantly lower levels compared to other tokens. Figure \textit{(a)} and \textit{(b)} coincide with the trends in Figure~\ref{fig:dynamics} under the BB task. \textit{Right (c)}: The residual state norms of each token at Layer 24 during training. The residual state norm of \bos~increases linearly in magnitude throughout training, matching Figure~\ref{fig:sgd} in the BB task.}

    \label{fig:olmo_predictions_phase0}
\end{figure}
\begin{figure}[h]
    \centering
    \hfill
    \begin{subfigure}[t]{0.32\textwidth}
        \centering 
        \caption{\small Logit dynamics}
        \includegraphics[width=0.9\textwidth]{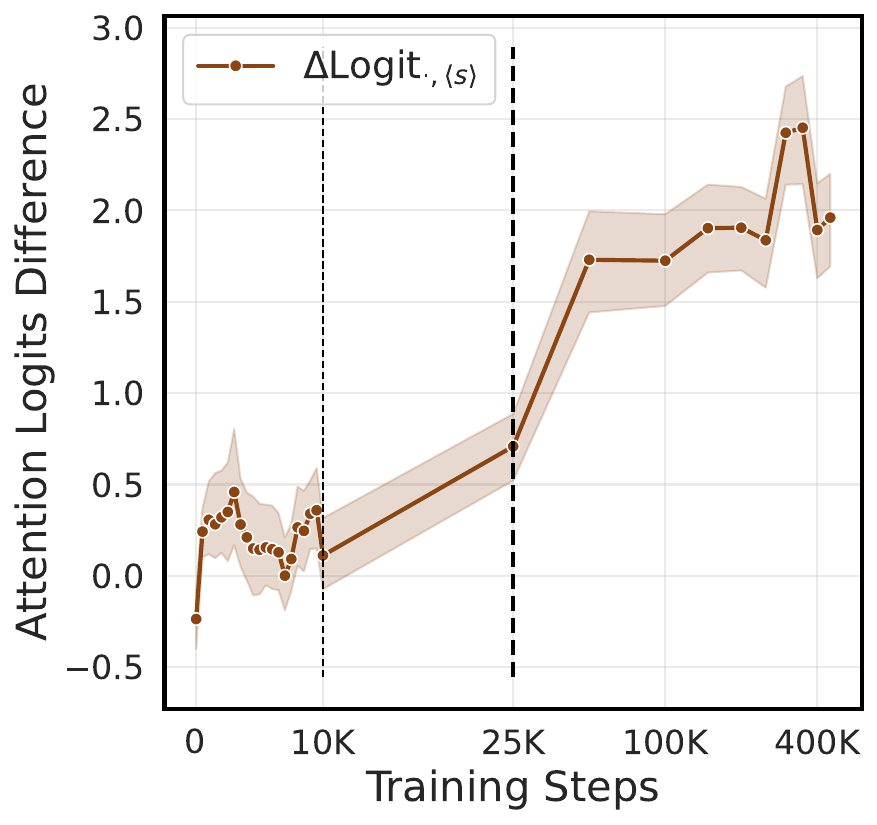}
        \label{fig:attention_logits_olmo_dynamic}
    \end{subfigure}
    \hfill
    \begin{subfigure}[t]{0.32\textwidth}
        \centering 
        \caption{\small Sink-logits concentration}
        \includegraphics[width=0.9\textwidth]{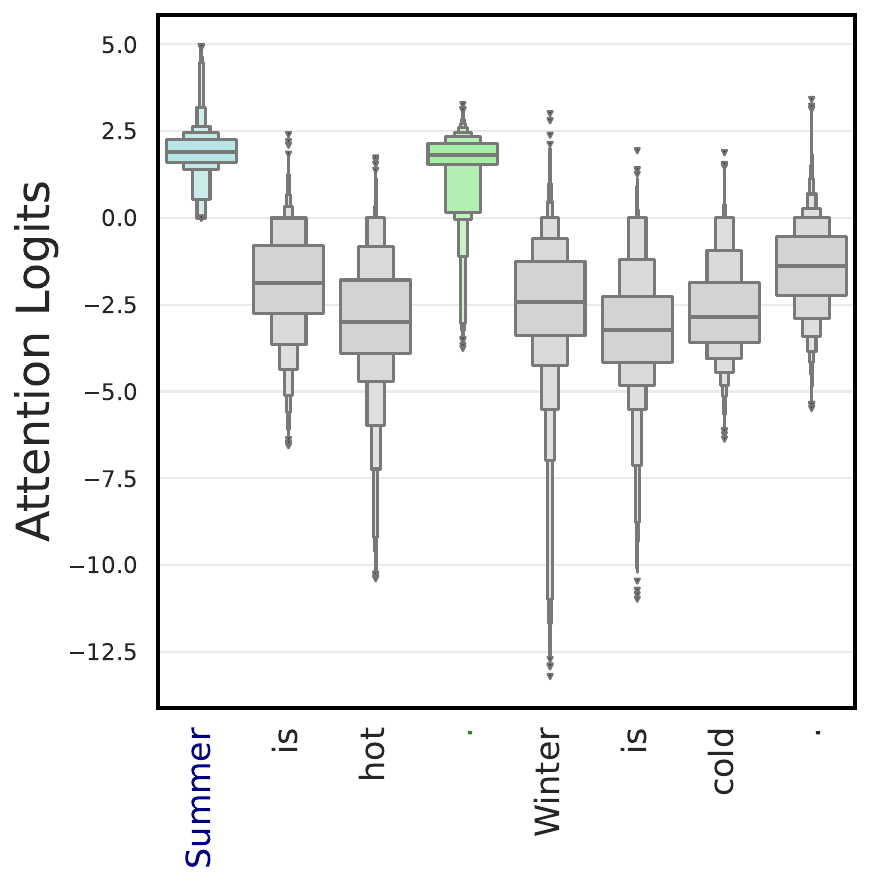}
        \label{fig:attention_logits_olmo_static}
    \end{subfigure}
    \hfill
    \phantom{.}
        \caption{\small \textbf{Attention logits of Layer 24.}  \textit{Left (a)}: Attention logits difference of all tokens' query states against \bos's key state during training. The difference in attention logits is computed as \(\Delta\mathrm{logit}_{\cdot,\bos} = \query_{\cdot}^\top \key_{\bos} - \text{Mean}[\query_{\cdot}^\top \key_{\text{Others}}]\). The horizontal axis is logarithmically scaled after step $10k$. We observe that $\Delta \mathrm{logit}_{\cdot,\bos}$ increases approximately in logarithmic scale during training steps $10$k to $100$k, matching the decreasing phase of the value states in Figure~\ref{fig:olmo_drain}. 
        \textit{Right (b)}: Attention logits of the last token's query state against all token's key states for pretrained OLMo. In this experiment, we generate \(128\) randomly sampled test tokens with IDs from \(100\) to \(50000\) in the OLMo tokenizer. We append each token separately to the test phrase ``Summer is warm\period~Winter is cold\period'', creating \(128\) different samples, which we feed to the LLM to examine the model behavior. We plot the distribution of (un-shifted) attention logits \( \text{logit}_{\cdot,\tok}=\query_{\mathrm{test}}^\top \key_{\tok}\) across all heads at Layer 24 and all test tokens. The distribution of $\text{logit}_{\cdot,\bos}$ and $\text{logit}_{\cdot,\text{Delim}}$ have considerably small variance compared with other logits, confirming the sink-logits concentration phenomenon. }
    \label{fig:olmo_predictions_phase1}
\end{figure}

Our study of the BB model leads to the following prediction about the dynamical behavior of the extreme-token phenomena, which we hypothesize also applies to LLMs:  
\begin{center}
    \textit{Attention heads undergo an attention-increasing and value-state-shrinking phase driven by the mutual reinforcement mechanism (cf.\ Claim~\ref{claim:mutual-reinforcement}). This is followed by a stable phase, where all non-trigger tokens have large, nearly identical attention logits on the extreme token. Simultaneously, the residual state norms of the extreme tokens increase linearly during pretraining.}
\end{center}

We confirm these predictions below. To observe the training dynamics of a large-scale LLM, we use the setup of OLMo-7B-0424 \citep{groeneveld2024olmo} (henceforth just referred to as OLMo), which provides open-sourced weights at various stages of their training.\footnote{We did not analyze Llama for dynamics, as they do not provide open-source intermediate checkpoints along pretraining.} For our analysis, we inspect OLMo at multiple checkpoints: every 500 steps for the first 10,000 steps, then at 25,000 steps, 50,000 steps, and every 50,000 steps up to 449,000 steps (approximately the end of their training).\footnote{For the single 150,000-step checkpoint, we observed that its statistics were outliers, which we hypothesize is due to a system failure. We address this by using the average of nearby checkpoints to represent its statistics.} The input we use for this analysis is again ``Summer is warm\period~Winter is cold\period''\footnote{Note that OLMo does not have a \bos~token, but attention sinks still form in the majority of heads. In particular, the first token always behaves as an attention sink. We discuss this further in \Cref{sub:fixed_bos}.} In this prompt, the ``$\mathrm{Delim}$'' token, namely ``\period'', also becomes a sink token along with \bos. We believe this occurs because the period is not semantically meaningful and is not useful for predicting future tokens (cf.\  \Cref{sub:fixed_bos}) 

\Cref{fig:olmo_predictions_phase0} illustrates the dynamics of attention weights, value state norms, and the residual state norms for attention heads in Layer 24 of OLMo. The figure shows that the average attention on extreme tokens (\bos~and $\mathrm{Delim}$) increases rapidly at the beginning of training before stablizing, while the value state norms of these extreme tokens decrease during training steps 10k-100k. The synchronized evolution of attention weights and value state norms aligns with the prediction of the mutual reinforcement mechanism.  Additionally, the residual states of \bos~increase linearly, while those of other tokens converge to a small number. \Cref{fig:olmo_predictions_phase1} provides a more detailed examination of the attention logits in Layer 24 of OLMo. Figure~\ref{fig:attention_logits_olmo_dynamic} presents the dynamics of the difference in attention logits, showing that $\Delta \text{logit}_{\cdot,\bos}$ increase during training steps 10k-100k, matching the decreasing phase of the value states.
Figure~\ref{fig:attention_logits_olmo_static} also demonstrates the \textit{sink-logits concentration} phenomenon. Specifically, it shows that the sink logits will eventually converge to a stable phase, in which logits corresponding to the key of the sink token and queries of all non-sink tokens are nearly identical. These findings coincide with the dynamical behavior predicted by the BB model, as outlined in Theorem~\ref{thm:main}(c) and corroborated by the experimental results in \Cref{figure:verify-assumptions}. 

\section{Conclusions} \label{sec:conclusion}

In this work, we investigated the \textit{extreme-token phenomena}, specifically \textit{attention sinks}, \textit{value-state drains}, and \textit{residual-state peaks}. We analyzed simple transformers trained on the Bigram-Backcopy (BB) task, both theoretically and empirically, demonstrating that these models exhibit the same extreme-token phenomena observed in large language models (LLMs). Building on the insights from the BB task, we made several detailed predictions about the behavior of extreme-token phenomena in LLMs. In particular, we identified the \textit{active-dormant mechanism} governing attention heads in both the BB model and LLMs, with attention sinks and value-state drains serving as indicators of dormant phase, and a \textit{mutual reinforcement mechanism} that induces these phenomena during pretraining. Using insights from these mechanisms, we applied simple modifications to the model architecture and optimization procedure, effectively mitigating the extreme-token phenomena in the BB model. Overall, our work uncovers the underlying mechanisms of extreme-token phenomena and suggests potential pathways to mitigate these issues during LLM pretraining.

We believe the most compelling direction for future work is to explore whether eliminating the extreme-token phenomena is essential or beneficial for building powerful transformer-based LLMs. While it is possible to mitigate these phenomena through simple modifications to the architecture or training algorithms, it remains unclear whether their elimination significantly improves downstream tasks such as inference and quantization. Given the resource-intensive nature of pretraining large-scale LLMs, we anticipate that pretraining a model at the scale of GPT-2 could both provide valuable insight into this issue and help point the way to architectures that can reduce the pretraining burden. 

\section*{Acknowledgements}
TG thanks Yaodong Yu, Licong Lin, and Ruiqi Zhang for insightful discussions.
YB thanks Caiming Xiong and Huan Wang for the many insightful discussions in the early stages of this work.
This project is supported by NSF DMS-2210827, CCF-2315725, CAREER DMS-2339904, ONR N00014-24-S-B001, a UC Berkeley College of Engineering fellowship, an Amazon Research Award, a Google Research Scholar Award, an Okawa Foundation Research Grant, and the European Union (ERC-2022-SYG-OCEAN-101071601).



\bibliographystyle{plainnat}
\bibliography{main_arxiv.bbl}

\appendix
\makeatletter
\def\renewtheorem#1{%
  \expandafter\let\csname#1\endcsname\relax
  \expandafter\let\csname c@#1\endcsname\relax
  \gdef\renewtheorem@envname{#1}
  \renewtheorem@secpar
}
\def\renewtheorem@secpar{\@ifnextchar[{\renewtheorem@numberedlike}{\renewtheorem@nonumberedlike}}
\def\renewtheorem@numberedlike[#1]#2{\newtheorem{\renewtheorem@envname}[#1]{#2}}
\def\renewtheorem@nonumberedlike#1{  
\def\renewtheorem@caption{#1}
\edef\renewtheorem@nowithin{\noexpand\newtheorem{\renewtheorem@envname}{\renewtheorem@caption}}
\renewtheorem@thirdpar
}
\def\renewtheorem@thirdpar{\@ifnextchar[{\renewtheorem@within}{\renewtheorem@nowithin}}
\def\renewtheorem@within[#1]{\renewtheorem@nowithin[#1]}
\makeatother

\renewtheorem{theorem}{Theorem}[section]
\renewtheorem{lemma}[theorem]{Lemma}
\renewtheorem{remark}{Remark}
\renewtheorem{corollary}[theorem]{Corollary}
\renewtheorem{corollary*}{Corollary}
\renewtheorem{observation}[theorem]{Observation}
\renewtheorem{proposition}[theorem]{Proposition}
\renewtheorem{definition}[theorem]{Definition}
\renewtheorem{claim}{Claim}[section]
\renewtheorem{fact}[theorem]{Fact}
\renewtheorem{assumption}{Assumption}
\renewcommand{\theassumption}{\Alph{assumption}}
\renewtheorem{conjecture}[theorem]{Conjecture}

\clearpage
\tableofcontents
\clearpage
\section{Proofs of Theorem~\ref{thm:construction} and~\ref{thm:main}}
\label{sec:proof}

We introduce new notations that are frequently used in the proofs. Recall that in Eq.~\eqref{eqn:total_loss}, we used $\{\pi_v\}_{v \in \vocab}$ to denote the stable distribution across all tokens. We further define the stable distribution excluding trigger tokens as follows:
\begin{equation}\label{appeqn:stable-dist}
\Tilde{\bm{\stable}}\in \R^\vocabsize,~~~\Tilde{\stable}_i=\stable_i \bm{1}\{i\in\vocab\setminus\cT\}.
\end{equation}
Section~\ref{sec:simple-model} defines the bigram transition probability in the Bigram-Backcopy task as $\transition_{\tok\tokk}=\sf{P}(\tokk\mid\tok)$.
We further define the bigram transition probability matrix as 
\begin{equation}\label{appeqn:P-matrix}
\Transition = \left(\begin{matrix}
\transition_{11} & \ldots & \transition_{1\vocabsize}\\
\vdots & \ddots & \vdots \\
\transition_{\vocabsize 1} & \ldots & \transition_{\vocabsize\vocabsize}\\
\end{matrix}\right) = \left(\begin{matrix}
\bm{\transition}_1^\top\\
\vdots \\
\bm{\transition}_\vocabsize^\top\\
\end{matrix}\right).
\end{equation}
Given a token $\tok$, define the predicted probability at token $\tok$ as the logit output passed through the softmax activation. Let $\bH=[\bos; v_{1:n-1}; v]$. Using the form of $\TF(\bH)_n$ defined in Eq.~\eqref{eqn:q}, we denote 
\begin{equation}\label{appeqn:pred-prob}
\bm{\ppred}_{\tok} = \softmax(\TF(\bH)_n)=(\ppred_{\tok 1},\ldots,\ppred_{\tok \vocabsize}),~~~\text{with}~~~\ppred_{\tok \toki}=\frac{\transition_{\tok\toki} \exp\Big[\frac{\mass_\toki\xi_\toki+e^{\sink}\ivalue_\toki}{e^{\sink}+\mass}\Big]}{\sum_{\tokk=1}^\vocabsize \transition_{\tok\tokk}\exp\Big[\frac{\mass_\tokk\xi_\tokk+e^{\sink}\ivalue_\tokk}{e^{\sink}+\mass}\Big]}.
\end{equation}
Similar to Eq.~\eqref{appeqn:P-matrix}, we define the full output probability matrix as
\begin{equation}\label{appeqn:Q-matrix}
\Ppred = \left(\begin{matrix}
\ppred_{11} & \ldots & \ppred_{1\vocabsize}\\
\vdots & \ddots & \vdots \\
\ppred_{\vocabsize 1} & \ldots & \ppred_{\vocabsize\vocabsize}\\
\end{matrix}\right) = \left(\begin{matrix}
\bm{\ppred}_1^\top\\
\vdots \\
\bm{\ppred}_\vocabsize^\top\\
\end{matrix}\right).
\end{equation}
Using the notation $\bm{\ppred}_\tok$ and $\Tilde{\stable}_\tok$, we can rewrite the loss functions defined in Eq.~\eqref{eqn:loss_single} and Eq.~\eqref{eqn:total_loss} as follows:
\begin{equation}\label{appeqn:loss}
\loss_\tok(\sink_\tok,\vecvalue) = -\sum_{\tokk=1}^\vocabsize \transition_{\tok\tokk} \log \ppred_{\tok\tokk},~~~~\loss_\tok(\vecsink,\vecvalue) = \sum_{\tok=1}^\vocabsize \Tilde{\stable}_\tok \loss_\tok (\sink_\tok,\vecvalue).
\end{equation}
We always have that $\sum_{\tokk}\transition_{\tok\tokk}=1$ and $\sum_{\tokk}\ppred_{\tok\tokk}=1$. The total variation norm and KL-divergence are then defined as:  
\begin{equation}\label{appeqn:kl-divergence}
\| \bm{\transition}_\tok - \bm{\ppred}_\tok\|_{\text{TV}} = \sum_{\tokk} |\transition_{\tok\tokk}-\ppred_{\tok\tokk}|,~~~~\text{KL}(\bm{\transition}_\tok~||~\bm{\ppred}_\tok) = -\sum_{\tokk} \transition_{\tok\tokk} \log(\ppred_{\tok\tokk}/\transition_{\tok\tokk}).
\end{equation}
Given any vector $\bm{u}=[u_1;\ldots;u_d]$, define the corresponding diagonal matrix as
\[
\diag(\bm{u}) = \left(\begin{matrix}
u_{1} & 0 & \ldots & 0\\
\vdots & \ddots &  & \vdots \\
\vdots & & \ddots & \vdots \\
0 & \ldots & 0 & u_d \\
\end{matrix}\right).
\]
Given any $\bm{\transition}_\tok$ defined in Eq.~\eqref{appeqn:P-matrix}, denote 
\begin{equation}\label{appeqn:g-probs}
\gppred_\tok^{\Transition} = \diag(\bm{\transition}_\tok) - \bm{\transition}_\tok \bm{\transition}_\tok^\top, \quad \gppred_\tok^{\Ppred} = \diag(\bm{\ppred}_\tok) - \bm{\ppred}_\tok \bm{\ppred}_\tok^\top.
\end{equation}
We now present technical lemmas concerning $\gppred_\tok^{\Transition}$ and $\gppred_\tok^{\ppred}$.
\begin{lemma}\label{appthm:positive-definite}
The matrices $\gppred^\Transition_\tok \in \R^{V \times V}$ and $\gppred^\Ppred_\tok \in \R^{V \times V}$ are positive semi-definite for any $\tok \in \vocab$. 
\end{lemma}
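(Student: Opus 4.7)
The plan is to recognize that for any probability vector $\bm{p} \in \Delta(\vocab)$, the matrix $\diag(\bm{p}) - \bm{p}\bm{p}^\top$ is the covariance matrix of the random variable $\bm{e}_{\erva}$ where $\erva \sim \bm{p}$ takes value $i \in \vocab$ with probability $p_i$. Covariance matrices are automatically positive semi-definite, so applying this observation with $\bm{p} = \bm{\transition}_\tok$ and $\bm{p} = \bm{\ppred}_\tok$ (both of which lie in the simplex by construction, see Eq.~\eqref{appeqn:P-matrix} and Eq.~\eqref{appeqn:pred-prob}) yields the claim.

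Concretely, I would fix an arbitrary $\bm{u} = (u_1, \ldots, u_V)^\top \in \R^V$ and compute
\begin{equation*}
\bm{u}^\top \bigl(\diag(\bm{p}) - \bm{p}\bm{p}^\top\bigr) \bm{u} = \sum_{i=1}^V p_i u_i^2 - \Bigl(\sum_{i=1}^V p_i u_i\Bigr)^2.
\end{equation*}
Since $p_i \ge 0$ and $\sum_i p_i = 1$, the Cauchy--Schwarz inequality applied to the vectors $(\sqrt{p_i})_i$ and $(\sqrt{p_i}\, u_i)_i$ gives
\begin{equation*}
\Bigl(\sum_{i=1}^V p_i u_i\Bigr)^2 \le \Bigl(\sum_{i=1}^V p_i\Bigr) \Bigl(\sum_{i=1}^V p_i u_i^2\Bigr) = \sum_{i=1}^V p_i u_i^2,
\end{equation*}
so the quadratic form is non-negative for every $\bm{u}$. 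Specializing to $\bm{p} = \bm{\transition}_\tok$ proves the claim for $\gppred^\Transition_\tok$, and specializing to $\bm{p} = \bm{\ppred}_\tok$ proves it for $\gppred^\Ppred_\tok$.

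There is no substantive obstacle here: this is a textbook fact about multinomial covariance matrices, and the only thing to verify is that $\bm{\ppred}_\tok$ indeed lies in the simplex, which is immediate from its definition as the output of a softmax in Eq.~\eqref{appeqn:pred-prob}. The proof is a two-line Cauchy--Schwarz argument, so I would keep the write-up correspondingly short.
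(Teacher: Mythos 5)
Your proof is correct but follows a genuinely different route from the paper's. You fix an arbitrary test vector, write out the quadratic form $\bm{u}^\top(\diag(\bm{p}) - \bm{p}\bm{p}^\top)\bm{u} = \sum_i p_i u_i^2 - (\sum_i p_i u_i)^2$, and close it with Cauchy--Schwarz (equivalently, this is the non-negativity of $\Var(u_{\erva})$ for $\erva \sim \bm{p}$, which is the covariance-matrix interpretation you open with). The paper instead verifies that $\gppred^\Transition_\tok$ and $\gppred^\Ppred_\tok$ are (weakly) diagonally dominant with non-negative diagonal entries---the diagonal entry $p_i(1-p_i) = p_i\sum_{k\ne i}p_k$ exactly equals the row sum of absolute off-diagonal entries $\sum_{k \ne i} p_i p_k$---and then cites Corollary~6.2.27 of \citet{horn2012matrix} to conclude positive semi-definiteness. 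Both arguments are valid and short. Yours is self-contained and more elementary (no external citation needed), and it makes the probabilistic meaning of the matrix transparent; the paper's is a one-line structural check once the diagonal-dominance lemma is taken off the shelf. One small note: the matrix is diagonally dominant with \emph{equality} in each row, so the cited result must be the semi-definite version (which Corollary~6.2.27 is), and similarly your Cauchy--Schwarz can be tight---for instance at $\bm{u} = \bm{1}$---which is consistent with the matrix being only positive \emph{semi}-definite, not definite.
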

\begin{proof}[Proof of \Cref{appthm:positive-definite}]
Since $\sum_{\tokk=1}^\vocabsize \transition_{\tok\tokk} = 1$ and $\sum_{\tokk=1}^\vocabsize \ppred_{\tok\tokk} = 1$ for any $\tok$, we have that
\begin{align*}
(\gppred^\Transition_\tok)_{\toki\toki}=\transition_\toki - \transition_{\toki}^2 & = \transition_\toki(\sum_{\tokk\neq\toki} \transition_\tokk) \geq \sum_{\tokk\neq\toki} |(\gppred^\Transition_\tok)_{\toki\tokk}|, \\
(\gppred^\Ppred_\tok)_{\toki\toki}=\ppred_\toki - \ppred_{\toki}^2 & = \ppred_\toki(\sum_{\tokk\neq\toki} \ppred_\tokk) \geq \sum_{\tokk\neq\toki} |(\gppred^\Ppred_\tok)_{\toki\tokk}|.
\end{align*}
This shows that both $\gppred^\Transition_\tok$ and $\gppred^\Ppred_\tok$ are diagonally dominant matrices. By Corollary 6.2.27 in \citet{horn2012matrix}, they are positive semi-definite.
\end{proof}
\begin{lemma}\label{appthm:min-eigenvalue}
Suppose that $\Tilde{\stable}_\tok > 0$ for any $\tok\in\vocab\setminus \cT$. For any $\bm{\eta} \in \R^\vocabsize$ with $\bm{\eta} \perp \bm{1}$, there exists $\omega>0$ such that
\[
\bm{\eta}^\top \Big[ \sum_{\tokk=1}^\vocabsize \Tilde{\stable}_\tokk \gppred_{\tokk}^{\Transition} \Big] \bm{\eta} \geq \omega \|\bm{\eta}\|_2^2.
\]
\end{lemma}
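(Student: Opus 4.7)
The plan is to reduce the lemma to showing that $M \defeq \sum_{\tokk=1}^V \tilde\pi_\tokk \gppred^\Transition_\tokk$ is strictly positive definite on the subspace $\{\bm{1}\}^\perp \subset \R^V$. By Lemma~\ref{appthm:positive-definite} each summand is PSD and the weights $\tilde\pi_\tokk$ are nonnegative, so $M$ is PSD. Moreover $\gppred^\Transition_\tokk \bm{1} = \bm{\transition}_\tokk - \bm{\transition}_\tokk(\bm{\transition}_\tokk^\top \bm{1}) = 0$, so $\bm{1}$ lies in the null space of every summand and hence of $M$. Once we show $\ker(M)\cap \{\bm{1}\}^\perp = \{0\}$, the minimum Rayleigh quotient $\omega \defeq \min_{\|\bm{\eta}\|=1,\,\bm{\eta}\perp\bm{1}} \bm{\eta}^\top M \bm{\eta}$ is attained by compactness of the unit sphere in $\{\bm{1}\}^\perp$ and is strictly positive.

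The key identity is the variance formula
\[
\bm{\eta}^\top \gppred^\Transition_\tokk \bm{\eta} \;=\; \sum_{\toki} \transition_{\tokk\toki}\eta_\toki^2 - \Big(\sum_\toki \transition_{\tokk\toki}\eta_\toki\Big)^2 \;=\; \Var_{\toki\sim\bm{\transition}_\tokk}(\eta_\toki),
\]
so that
\[
\bm{\eta}^\top M \bm{\eta} \;=\; \sum_{\tokk\in\vocab\setminus\cT} \tilde\pi_\tokk\,\Var_{\bm{\transition}_\tokk}(\eta).
\]
If $\bm{\eta}^\top M \bm{\eta} = 0$, then because $\tilde\pi_\tokk>0$ for every non-trigger $\tokk$, each variance must vanish, which forces $\eta_\toki = \eta_\tokj$ whenever $\transition_{\tokk\toki}, \transition_{\tokk\tokj}>0$ for some non-trigger $\tokk$. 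The next step is a connectivity argument: view this as an equivalence relation generated by the supports $\{\supp(\bm{\transition}_\tokk)\}_{\tokk\in\vocab\setminus\cT}$. Under the mild assumption that these supports jointly connect all of $\vocab$ (which holds for the Shakespeare-based bigram distribution used in the BB task, where at least one non-trigger token has a transition vector of essentially full support), $\eta$ must be globally constant. Coupled with $\bm{\eta}\perp\bm{1}$ this forces $\bm{\eta}=\bm{0}$, establishing the triviality of $\ker(M)\cap\{\bm{1}\}^\perp$ and completing the proof.

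The main obstacle is formalizing the connectivity hypothesis, since the lemma as stated does not make it explicit. The cleanest route is to invoke the existence of a single non-trigger token $\tokk^\star \in \vocab\setminus\cT$ with $\bm{\transition}_{\tokk^\star}$ having strictly positive entries; then $\Var_{\bm{\transition}_{\tokk^\star}}(\eta)=0$ immediately forces $\eta$ to be constant on all of $\vocab$, bypassing any graph-theoretic detour. Either way, the rest of the argument (PSD-ness, compactness, orthogonality to $\bm{1}$) is standard, and the constant $\omega$ obtained is simply the smallest nonzero eigenvalue of $M$, which depends only on $\{\tilde\pi_\tokk\}$ and $\Transition$ and not on $\bm{\eta}$.
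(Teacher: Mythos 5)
Your proposal takes essentially the same route as the paper's proof: establish PSD-ness of $M \defeq \sum_\tokk \tilde\pi_\tokk \gppred^\Transition_\tokk$ from Lemma~\ref{appthm:positive-definite}, characterize $\ker \gppred^\Transition_\tokk$ (which you phrase via the variance identity $\bm\eta^\top\gppred^\Transition_\tokk\bm\eta = \Var_{\bm\transition_\tokk}(\eta)$, and the paper phrases by noting $\bm\eta$ must be constant on $\supp(\bm\transition_\tokk)$), observe $\bm 1 \in \ker M$, argue $\ker M \cap \{\bm 1\}^\perp = \{0\}$, and invoke compactness of the unit sphere in $\{\bm 1\}^\perp$. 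The two phrasings of the kernel are the same observation.

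The genuinely useful part of your writeup is the point you raise as an ``obstacle'': the lemma's stated hypothesis ($\tilde\pi_\tok > 0$ for non-trigger $\tok$) does \emph{not} by itself guarantee $\bigcap_{\tok\in\vocab\setminus\cT}\ker\gppred^\Transition_\tok = \spa\{\bm 1\}$, and you are right. The intersection consists of vectors constant on each $\supp(\bm\transition_\tok)$ separately, so it equals $\spa\{\bm 1\}$ only if those supports ``connect'' the whole vocabulary. As a concrete failure mode: if some non-trigger rows of $\bm P$ are point masses (deterministic transitions), the corresponding $\gppred^\Transition_\tok$ vanish and contribute nothing, and if the remaining supports are disjoint or fail to cover a trigger coordinate, the kernel is strictly larger than $\spa\{\bm 1\}$. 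The paper's proof asserts the intersection is $\spa\{\bm 1\}$ after noting only that every non-trigger coordinate lies in \emph{some} support, which is weaker than what is needed (it also says nothing about trigger coordinates). So you have correctly located a real gap, and it is present in the paper's own proof as well. Your proposed fix --- postulate one non-trigger token whose transition row has full support --- is clean and sufficient, and is plausible for the tiny-Shakespeare bigram matrix used in the BB task, but it should be made an explicit hypothesis of the lemma rather than left implicit.
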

\begin{proof}[Proof of Lemma \ref{appthm:min-eigenvalue}]
Denote the null spaces of $\gppred_{\tok}^{\Transition}$ for $\tok \in \vocab$ as $\sf S_\tok$. We solve for each $\sf S_\tok$. Setting $\gppred_{\tok}^\Transition \bm{\eta}=0$ gives that 
\[
[\transition_{\tok\tokj} - \transition_{\tok\tokj}(\sum_{\tokk} \transition_{\tok\tokk})] \eta_\tokj = 0~\text{for any $\tokj\in\vocab$.}
\]
If $\transition_{\tok\tokj}\neq 0$, we divide each side with $\transition_{\tok\tokj}$ and get that $\eta_\tokj = \sum_{\tokk} \transition_{\tok\tokk} \eta_\tokk$. As a result, we get that 
\[
\text{\sf{S}}_\tok = \set{\bm{\eta}\mid \eta_\tokj\text{ is constant for } \transition_{\tok\tokj}\neq0}.
\]
Since all ${\stable}_\tokk > 0$, for any $\tokk\in\vocab\setminus \cT$, there is $\tok\in\vocab\setminus\cT$ such that $\transition_{\tok\tokk} > 0$, we get that
\[
\cap_{\tok\in\vocab\setminus\cT} \text{\sf{S}}_\tok = \set{c \cdot \bm{1}\mid c\in\R}.
\]
Since $\bm{\eta} \perp \bm{1}$, we get that $\bm{\eta} \perp \cap_{\tok\in\vocab\setminus\cT} \sf{S}_\tok$. We denote the minimal non-zero eigenvalues of $\gppred_{\tok}^\Ppred$ for $\tok\in\vocab\setminus\cT$ as $\lambda$. We get that
\[
\bm{\eta}^\top \Big[ \sum_{\tokk=1}^\vocabsize \Tilde{\stable}_\tokk\gppred_{\tokk}^{\Transition} \Big] \bm{\eta} \geq \Big[\min_{\tok \in \vocab\setminus \cT} \Tilde{\stable}_\tok\Big] \lambda \|\bm{\eta}\|_2^2.
\]
Setting $\omega = \lambda \cdot \min_{\tok \in \vocab\setminus \cT} \Tilde{\stable}_\tok>0$, this proves Lemma \ref{appthm:min-eigenvalue}.
\end{proof}

\begin{lemma}\label{appthm:min-eigenvalue-q}
Given $\omega$ defined in Lemma \ref{appthm:min-eigenvalue}, 
suppose that 
\begin{equation}\label{appeqn:error-q}
\max_{\tok,\tokk} \abs{\transition_{\tok\tokk}-\ppred_{\tok\tokk}} = \delta \leq \min\set{\omega/(6\vocabsize), 1}.
\end{equation}
For any $\bm{\eta} \in \R^\vocabsize$ with $\bm{\eta} \perp \bm{1}$, we have that
\[
\bm{\eta}^\top \Big[ \sum_{\tokk=1}^\vocabsize \Tilde{\stable}_\tokk \gppred_{\tokk}^{\Ppred} \Big] \bm{\eta} \geq \frac{\omega}{2} \|\bm{\eta}\|_2^2.
\]
\end{lemma}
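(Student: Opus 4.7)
\textbf{Proof proposal for Lemma~\ref{appthm:min-eigenvalue-q}.} The plan is a standard perturbation argument around Lemma~\ref{appthm:min-eigenvalue}: write
\[
\sum_{\tokk} \Tilde{\stable}_\tokk \gppred_\tokk^{\Ppred} = \sum_{\tokk} \Tilde{\stable}_\tokk \gppred_\tokk^{\Transition} - \sum_{\tokk} \Tilde{\stable}_\tokk \bigl(\gppred_\tokk^{\Transition} - \gppred_\tokk^{\Ppred}\bigr),
\]
apply Lemma~\ref{appthm:min-eigenvalue} to get $\omega\|\bm{\eta}\|_2^2$ from the first term for any $\bm{\eta}\perp \bm{1}$, and show that the perturbation term has quadratic form bounded by $(\omega/2)\|\bm{\eta}\|_2^2$ under the assumed smallness of $\delta$. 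The conclusion then follows immediately by subtraction.

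First I would control the perturbation entrywise. Using the definitions in \eqref{appeqn:g-probs}, each summand splits as
\[
\gppred_\tokk^{\Transition} - \gppred_\tokk^{\Ppred} = \diag(\bm{\transition}_\tokk - \bm{\ppred}_\tokk) \;-\; \bigl(\bm{\transition}_\tokk \bm{\transition}_\tokk^\top - \bm{\ppred}_\tokk \bm{\ppred}_\tokk^\top\bigr).
\]
The diagonal piece contributes at most $\delta \|\bm{\eta}\|_2^2$ in absolute value to the quadratic form, since each diagonal entry is bounded by $\delta$. For the rank-two piece, the identity
\[
\bm{\transition}_\tokk \bm{\transition}_\tokk^\top - \bm{\ppred}_\tokk \bm{\ppred}_\tokk^\top = (\bm{\transition}_\tokk - \bm{\ppred}_\tokk) \bm{\transition}_\tokk^\top + \bm{\ppred}_\tokk (\bm{\transition}_\tokk - \bm{\ppred}_\tokk)^\top
\]
together with $|\transition_{\tokk \toki}|, |\ppred_{\tokk \toki}| \le 1$ and $|\transition_{\tokk \toki} - \ppred_{\tokk \toki}| \le \delta$ gives the entrywise estimate $\bigl|\bigl(\bm{\transition}_\tokk \bm{\transition}_\tokk^\top - \bm{\ppred}_\tokk \bm{\ppred}_\tokk^\top\bigr)_{\toki\tokj}\bigr| \le \delta(\transition_{\tokk \toki} + \ppred_{\tokk \tokj})$. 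Using $\sum_\toki \transition_{\tokk \toki} = \sum_\tokj \ppred_{\tokk \tokj} = 1$, the Cauchy--Schwarz bound $\|\bm{\eta}\|_1 \le \sqrt{V}\|\bm{\eta}\|_2$, and a direct computation, the quadratic form of this rank-two matrix is bounded by $2\delta \|\bm{\eta}\|_1^2 \le 2 V \delta \|\bm{\eta}\|_2^2$. Combining the two pieces yields
\[
\bigl|\bm{\eta}^\top \bigl(\gppred_\tokk^{\Transition} - \gppred_\tokk^{\Ppred}\bigr) \bm{\eta}\bigr| \le (2V + 1)\delta \|\bm{\eta}\|_2^2 \le 3V \delta \|\bm{\eta}\|_2^2.
\]

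To finish, I would average over $\tokk$ using $\{\Tilde{\stable}_\tokk\}$. Since $\sum_{\tokk} \Tilde{\stable}_\tokk \le \sum_\tokk \stable_\tokk = 1$, we get $\bigl|\bm{\eta}^\top \sum_\tokk \Tilde{\stable}_\tokk (\gppred_\tokk^{\Transition} - \gppred_\tokk^{\Ppred}) \bm{\eta}\bigr| \le 3V\delta \|\bm{\eta}\|_2^2$, and the hypothesis $\delta \le \omega/(6V)$ makes this at most $(\omega/2)\|\bm{\eta}\|_2^2$. Subtracting from the lower bound of Lemma~\ref{appthm:min-eigenvalue} yields the claimed bound $\bm{\eta}^\top[\sum_\tokk \Tilde{\stable}_\tokk \gppred_\tokk^{\Ppred}] \bm{\eta} \ge (\omega/2)\|\bm{\eta}\|_2^2$.

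I do not anticipate a serious obstacle here: the argument is a textbook quadratic-form perturbation, and the only point requiring mild care is how the constant $V$ enters the bound on the rank-two piece (one must track the $\ell_1$-versus-$\ell_2$ conversion), which explains why the threshold on $\delta$ needs the factor $1/V$ rather than just $1$. The hypothesis $\delta \le 1$ is not needed in the computation above, so I expect it is included only so that $\bm{\ppred}_\tokk$ remains a bona fide probability vector (entries in $[0,1]$) and the $\gppred_\tokk^\Ppred$ matrices of Lemma~\ref{appthm:positive-definite} are meaningful.
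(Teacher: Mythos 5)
Your proposal is correct and follows essentially the same perturbation argument as the paper: decompose $\sum_\tokk \Tilde{\stable}_\tokk \gppred^\Ppred_\tokk = \sum_\tokk \Tilde{\stable}_\tokk \gppred^\Transition_\tokk - \bE$, invoke Lemma~\ref{appthm:min-eigenvalue} on the first term, and show $|\bm{\eta}^\top\bE\bm{\eta}| \le 3V\delta\|\bm{\eta}\|_2^2$. The paper reaches that perturbation bound by noting each entry of $\gppred^\Transition_\tokk - \gppred^\Ppred_\tokk$ is at most $3\delta$ in magnitude and then applying $\|\bE\|_{\rm op} \le \|\bE\|_F \le 3V\delta$, whereas you split $\gppred^\Transition_\tokk - \gppred^\Ppred_\tokk$ into diagonal and rank-two parts and bound the quadratic form directly via $\|\bm{\eta}\|_1 \le \sqrt{V}\|\bm{\eta}\|_2$; the constants and conclusion coincide, and your remark that the hypothesis $\delta \le 1$ is essentially vacuous (since both $\bm{\transition}_\tokk$ and $\bm{\ppred}_\tokk$ are probability vectors) is also accurate.
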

\begin{proof}[Proof of Lemma \ref{appthm:min-eigenvalue-q}]
Denote $\delta = \max_{\tok,\tokk} \abs{\transition_{\tok\tokk}-\ppred_{\tok\tokk}}$. Suppose that $\delta \leq 1$.
For any $\tokk\in \vocab\setminus\cT$, we can verify that
\[\Big|(\gppred_\tokk^\Transition)_{ij}-(\gppred_\tokk^\Ppred)_{ij} \Big|\leq 3 \delta,\]
for any $\toki,\tokj\in [\vocabsize]$.
We denote 
\[
\bE = \sum_{\tokk=1}^\vocabsize \Tilde{\stable}_{\tokk} \gppred_\tokk^\Transition - \sum_{\tokk=1}^\vocabsize \Tilde{\stable}_{\tokk} \gppred_\tokk^\Ppred.
\]
Therefore, $|\bE_{\toki\tokj}|\leq 3\delta$ for any $\toki,\tokj \in [\vocabsize]$.
This means that
\[
\bm{\eta}^\top \bE \bm{\eta} \leq \|\bE\|_2 \|\bm{\eta}\|_2^2 \leq \|\bE\|_{F} \|\bm{\eta}\|_2^2 \leq V\cdot 3\delta \cdot \|\bm{\eta}\|_2^2.
\]
As a result, when $\delta \leq \min\set{\omega/(6\vocabsize), 1}$, we get that
\[
\bm{\eta}^\top \Big[ \sum_{\tokk=1}^\vocabsize \Tilde{\stable}_\tokk \gppred_{\tokk}^{\Ppred} \Big] \bm{\eta} \geq \omega \|\bm{\eta}\|_2^2 - \bm{\eta}^\top \bE \bm{\eta} \geq \frac{\omega}{2} \|\bm{\eta}\|_2^2.
\]
This proves Lemma \ref{appthm:min-eigenvalue-q}.
\end{proof}

\subsection{Proof of Theorem~\ref{thm:construction}}\label{app:proof-construction}


We denote the hidden dimension as $d$ and the sequence length as $N$. Recall that the token $\tok$ at position $\toki$ is encoded as $\embd_\toki(\tok)$. We begin with the assumption regarding the transformer's embedding dimension:
\begin{assumption}\label{ass:linear_indp}
We have $\set{\embd_0(\bos)} \cup \set{\embd_\toki(\tok)}_{\toki\in\{ 0 \} \cup [N-1],\tok\in\vocab} \subseteq \R^d$, where the embedding dimension  $d\geq \vocabsize N + 1$. 
\end{assumption}
Assumption~\ref{ass:linear_indp} requires a large embedding dimension $d\geq \vocabsize N +  1$.  This assumption is used to ensure that there are enough orthonormal bases in the embedding space. Given the fact that there are $\text{O}(\exp(d))$ approximately linearly independent vectors for large $d$ \citep{vershynin2018high}, it is possible to relax the assumption to be $d \gg \log(\vocabsize N)$. However, since Assumption~\ref{ass:linear_indp} pertains only to the construction of $\lambda$ for trigger tokens and is unrelated to Theorem~\ref{thm:main}, we adopt it to simplify the proof of Theorem~\ref{thm:construction}.

\begin{theorem}[Formal statement of Theorem \ref{thm:construction}] \label{appthm:formal-contruct} Let Assumption \ref{ass:linear_indp} hold. For any parameters $(\vecsink \in \R^{V}, \vecvalue \in \R^V, \bm{\xi} \in \R^V, \lambda \in \R)$, there exists a one-layer transformer (\ref{eqn:simplified_transformer}) with weight matrices $(\bQ, \bK, \bV, \bW_1, \bW_2)$ such that Eq. (\ref{eqn:simplification_TF_1}), (\ref{eqn:simplification_TF_2}), and (\ref{eqn:simplification_TF_3}) hold. Consider the Bigram-Backcopy task, where given an input $\bH = [\bos; \tok_{1:n-1}, \tok]$, the ground-truth transition gives ${\sf P}(\tok^\prime\mid \bH) = \transition_{\tok \tok^\prime}$ for $\tok\in\vocab\setminus\cT$, and ${\sf P}(\tok^\prime\mid \bH ) = \indic{\tok^\prime=\tok_{n-1}}$ for $\tok \in \cT$. 
There exists a sequence $\min_{\tok\in\vocab} \sink_\tok \to \infty$, $\min_{\tok\in\vocab} \xi_\tok \to \infty$, $\lambda\to\infty$, and $\vecvalue=\bm{0}$ such that this transformer generates the ground-truth transition in the limit, i.e., 
\begin{equation}\label{eqn:TF_ground_truth_match}
\softmax(\TF(\bH)_n) \to {\sf P}(\, \cdot\, | \bH).
\end{equation}
\end{theorem}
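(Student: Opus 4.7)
The argument splits into two independent parts: an explicit construction of the five weight matrices realizing the reparameterization, and a limiting calculation showing that the resulting softmax converges to the BB ground-truth transition. The construction exploits Assumption~\ref{ass:linear_indp}: the finite family $\{\embd_0(\bos)\}\cup\{\embd_i(v)\}_{i\in[N],\,v\in\vocab}$ has at most $VN+1$ elements in $\R^d$ with $d\geq VN+1$, so it is linearly independent and admits a biorthogonal dual family $\{\embd_0^\ast(\bos),\embd_i^\ast(v)\}$ with $\embd^\ast(\cdot)^\top\embd(\cdot')$ equal to the appropriate Kronecker delta.

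Given the dual family, I build $\bK^\top\bQ$ as a sum of rank-one outer products, one per nonzero entry required by \eqref{eqn:simplification_TF_1}: a term $\sink_v\,\embd_0^\ast(\bos)\,\embd_i^\ast(v)^\top$ for each $v\in\vocab\setminus\cT$ and $i\in[N]$, plus a term $\lambda\,\embd_{j-1}^\ast(\bar v)\,\embd_j^\ast(v)^\top$ for each $v\in\cT$, $j\in[N]$, and $\bar v\in\vocab$. Biorthogonality makes this matrix act as the desired bilinear form on the embeddings while sending every other pairing to zero. The value matrix is assembled analogously, $\bV=\vecvalue\,\embd_0^\ast(\bos)^\top + \sum_{v,i}\xi_v\,\mathbf{e}_v\,\embd_i^\ast(v)^\top$, giving \eqref{eqn:simplification_TF_2}. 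For the MLP I need to solve a finite interpolation problem on at most $NV$ linearly independent inputs with target values $\log\bm{\transition}_v\cdot\indic{v\notin\cT}$; this is standard ReLU memorization, realizable by, for each data point, routing a bump through a direction on which that data point is separated from all others, and summing these with the appropriate target. This MLP step is the only tedious piece of the construction; it is not the main conceptual obstacle.

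For the limit, fix $\vecvalue=\mathbf{0}$ and take $t\to\infty$ along the schedule $\sink_v=2\log t$ for $v\in\vocab\setminus\cT$, $\xi_v=t$ for $v\in\vocab\setminus\cT$, and $\lambda=t^2$. Consider an input $\bH=[\bos;\,v_{1:n-1};\,v]$. If $v\in\vocab\setminus\cT$, equation~\eqref{eqn:q} reduces to
\[
\TF(\bH)_n \;=\; \log\bm{\transition}_v + \sum_{k}\frac{\mass_k\xi_k}{e^{\sink_v}+\mass}\,\mathbf{e}_k,
\]
and since $e^{\sink_v}=t^2\gg t\cdot \mass\geq \mass_k\xi_k$, the attention contribution vanishes uniformly, so $\softmax(\TF(\bH)_n)\to\bm{\transition}_v$. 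If $v\in\cT$, then $\sink_v=0$ and $\mlp(\embd_n(v))=0$, so the attention logits consist of a single $\lambda$ at position $n-1$ against zeros elsewhere; as $\lambda\to\infty$ the attention weight concentrates on position $n-1$ and the output tends to $\xi_{v_{n-1}}\mathbf{e}_{v_{n-1}}$, after which $\softmax(\xi_{v_{n-1}}\mathbf{e}_{v_{n-1}})\to\mathbf{e}_{v_{n-1}}$ as $\xi_{v_{n-1}}\to\infty$, matching the backcopy target for $v_{n-1}\in\vocab\setminus\cT$. The chosen schedule satisfies $e^{\sink}\gg\xi\gg 1$ and $\lambda\gg\xi$ simultaneously, so both limits hold along the same sequence. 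The main obstacle I expect is bookkeeping the corner case in which the copied token $v_{n-1}$ is itself a trigger (where $\xi_{v_{n-1}}=0$ under \eqref{eqn:simplification_TF_2}); here the reading $\min_{v\in\vocab\setminus\cT}\xi_v\to\infty$ in the limit statement appears to be intended, and the rest of the argument goes through verbatim.
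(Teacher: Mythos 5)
Your proposal is correct and follows essentially the same construction as the paper's proof: a linear-algebraic choice of $(\bQ,\bK,\bV)$ that realizes the prescribed logit/value pattern on the embedding family, followed by a limiting calculation splitting into the non-trigger and trigger cases. The paper arranges the embeddings and auxiliary vectors $\{\bm{\eta}_i\},\{\bm{e}_v\}$ to be an orthonormal system, so it can write down $\bQ,\bK,\bV$ directly without a biorthogonal-dual step, but your dual-family formulation is the same idea under a milder (linear-independence) hypothesis. One place you overcomplicate is the MLP: no ``bump'' memorization is needed. The paper simply takes $\bW_1\,\embd_i(v)=\bm{e}_v$, and since one-hot vectors are entrywise nonnegative, $\relu$ acts as the identity on the hidden layer, so $\bW_2\,\bm{e}_v=\log\bm{\transition}_v\cdot\indic{v\notin\cT}$ finishes the job. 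Your explicit schedule $(\sink_v,\xi_v,\lambda)=(2\log t,\,t,\,t^2)$ is a legitimate witness for the paper's ``there exists a sequence'' claim. Finally, the corner case you flag (copied token $v_{n-1}\in\cT$, where $\xi_{v_{n-1}}=0$) cannot occur in BB-realizable prefixes: two consecutive triggers are impossible by induction from $v_0=\bos\notin\cT$, since a trigger at position $k$ forces $v_{k+1}=v_{k-1}$, which is a non-trigger by the inductive hypothesis. You are also right that the theorem's ``$\min_{v\in\vocab}\xi_v\to\infty$'' must be read as $\min_{v\in\vocab\setminus\cT}\xi_v\to\infty$, since the reparameterization fixes $\xi_v=0$ for $v\in\cT$; this slight inconsistency is carried over from the informal statement.
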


\begin{proof}[Proof of Theorem~\ref{appthm:formal-contruct}] ~


\noindent
{\bf Step 1. Construction for the attention head.} We let $\{\embd_0(\bos)\} \cup \{ \embd_\toki(\tok) \}_{i \in \{ 0 \} \cup [N-1], \tok \in \cV} \cup \set{\bm{e}_\tok}_{\tok\in\vocab}$ to be a set of orthonormal basis in $\R^d$, and denote $\set{\bm{\eta}_\toki}_{\toki\in\{ 0 \} \cup [N-1]} \subseteq \R^d$ by a set of orthonormal basis in $\R^d$ (the existence is guaranteed by Assumption~\ref{ass:linear_indp}). Therefore, for any parameters $(\vecsink \in \R^{V}, \vecvalue \in \R^V, \bm{\xi} \in \R^V, \lambda \in \R)$, there exists a query matrix $\bQ \in \R^{d \times N}$ such that
\begin{equation}\label{appeqn:mlp-construct}
\begin{aligned}
&~\bQ \cdot \embd_\toki(\tok) = \lambda\bm{\eta}_{i-1}~~~\text{for }i>1,~~\tok\in\cT,\\
&~\bQ \cdot \embd_\toki(\tok) = \sink_{\tok} \bm{\eta}_{0}~~~\text{for }i>0,~~\tok\in\vocab\setminus\cT.\\
\end{aligned}
\end{equation}
Meanwhile, there is a key matrix $\bK \in \R^{d \times N}$ such that
\begin{equation}
\begin{aligned}
&~\bK \cdot \embd_\toki(\tok) = \bm{\eta}_{i}~~~\text{for }i>0,~~\tok\in\vocab,\\
&~\bK \cdot \embd_0(\bos) = \bm{\eta}_{0}.
\end{aligned}
\end{equation}
Denote $\set{\bm{e}_\tok}_{\tok\in\vocab}$ as an orthonormal basis in $\R^\vocabsize$.
There is a matrix $\bV\in \R^{d \times \vocabsize}$ such that 
\begin{equation}
\begin{aligned}
&~ \bV \cdot \embd_i(\tok)  = \xi_\tok \bm{e}_{\tok} \in \R^{\vocabsize},  ~~~\text{with $\xi_\tok=0$ for $\tok\in\cT$, and $\xi_\tok\geq 0$ for $\tok\in\vocab\setminus\cT$.}\\
&~ \bV \cdot \embd_0(\bos) = \vecvalue\in\R^\vocabsize.\\
\end{aligned}
\end{equation}
This construction matches Eq. (\ref{eqn:simplification_TF_1}) and (\ref{eqn:simplification_TF_2}). 

As a result, for $\tok_n \in \vocab\setminus \cT$, by Eq.~(\ref{eqn:simplified_transformer}), denoting $\bH=[\bos; v_{1:n-1}; v_n]$ and $\attn(\bH)_n$ to be the last column of $\attn(\bH)$, we have
\begin{align*}
\attn(\bH)_n = &~ \sum_{i=0}^n \frac{\exp[ \embd_n(\tok_n)^\top \bQ^\top \bK\cdot \embd_i(\tok_i)] \bV \cdot  \embd_\toki(\tok_\toki)}{\sum_{j=0}^n\exp[\embd_n(\tok_n)^\top\bQ^\top \bK\cdot \embd_j(\tok_j)]}\\
= &~ \frac{\exp[\sink_{\tok_n} \bm{\eta}_{0}^\top \bm{\eta}_{0}]\cdot \bbeta + \sum_{i=1}^n \set{\exp[\sink_{\tok_n} \bm{\eta}_{0}^\top \bm{\eta}_{i}] \xi_{\tok_i} \cdot \bm{e}_{\tok_i}}}{\exp[\sink_{\tok_n} \bm{\eta}_{0}^\top \bm{\eta}_{0}]+\sum_{j=1}^n\exp[\sink_{\tok_n} \bm{\eta}_{0}^\top \bm{\eta}_{j}]}\\
= &~  \frac{e^{\sink_{\tok_n}}}{e^{\sink_{\tok_n}} + n} \cdot \bbeta +  \sum_{i=1}^n \frac{1}{e^{\sink_{\tok_n}} + n} \cdot \xi_{\tok_i} \bm{e}_{\tok_i}.
\end{align*}
For $\tok_n \in \cT$, we have
\begin{align*}
\attn(\bH)_n = &~ \sum_{i=0}^n \frac{\exp[\embd_n(\tok_n)^\top \bQ^\top \bK\cdot\embd_i(\tok_i)] \bV \cdot \embd_\toki(\tok_\toki)}{\sum_{j=0}^n\exp[\embd_n(\tok_n)^\top \bQ^\top \bK\cdot \embd_j(\tok_j)]}\\
= &~ \frac{\exp[\lambda \bm{\eta}_{n-1}^\top \bm{\eta}_{0}]\cdot \bbeta + \sum_{i=1}^n \set{\exp[\lambda \bm{\eta}_{n-1}^\top \bm{\eta}_{i}] \xi_{\tok_i} \cdot \bm{e}_{\tok_i}}}{\exp[\lambda \bm{\eta}_{n-1}^\top \bm{\eta}_{0}]+\sum_{j=1}^n\exp[\lambda \bm{\eta}_{n-1}^\top \bm{\eta}_{j}]}\\
= &~  \frac{1}{e^\lambda + n} \cdot \bbeta +  \sum_{i\neq n-1} \frac{1}{e^\lambda + n} \cdot \xi_{\tok_i} \bm{e}_{\tok_i} + \frac{e^\lambda}{e^\lambda + n} \cdot \xi_{\tok_{n-1}} \bm{e}_{\tok_{n-1}}.
\end{align*}

\noindent
{\bf Step 2. Construction for the MLP layer.} Further, define the weights for the \mlp~layer such that
\begin{equation}
\begin{aligned}
&~ \bW_1 \cdot \embd_\toki(\tok) = \bm{e}_\tok \in \R^\vocabsize, ~~~\bW_2\bm{e}_\tok =\log \bm{\transition}_{\tok} \cdot 1\{ \tok \not\in \cT  \} \in \R^\vocabsize  ~~~ \text{for } i\in[N],~~\tok \in \vocab,
\end{aligned}
\end{equation}
where $\set{\bm{e}_\tok}$ is the eorthonormal basis in $\R^\vocabsize$ and $\bm{\transition}_\tok \in \R^\vocabsize$ is defined in Eq.~(\ref{appeqn:P-matrix}). As a result, $\mlp(\bH)_n = \bW_2\text{ReLU}(\bW_1 \embd_n(\tok)) =\bW_2 \bm{e}_\tok =\log \bm{\transition}_\tok \cdot \bm{1}\set{\tok \notin \cT} $.
 This matches the Eq.~(\ref{eqn:simplification_TF_3}). 

\noindent
{\bf Step 3. The output of the transformer.} By Eq.~(\ref{eqn:simplified_transformer}) again, on non-trigger token $\tok\in\vocab\setminus \cT$, the transformer output gives that 
\begin{align*}
\TF(\bH)_n&=\mlp(\embd_n(\tok))+\attn(\bH)_n\\
&=\log \bm{\transition}_\tok + \frac{e^{\sink_{\tok_n}}}{e^{\sink_{\tok_n}} + n} \cdot \bbeta +  \sum_{i=0}^n \frac{1}{e^{\sink_{\tok_n}} + n} \cdot \xi_{\tok_i} \bm{e}_{\tok_i}.
\end{align*}
On trigger token $\tok\in\cT$, the transformer output gives that
\begin{align*}
\TF(\bH)_n&=\mlp(\embd_n(\tok))+\attn(\bH)_n\\
&=\frac{1}{e^\lambda + n} \cdot \bbeta +  \sum_{i\neq n-1} \frac{1}{e^\lambda + n} \cdot \xi_{\tok_i} \bm{e}_{\tok_i} + \frac{e^\lambda}{e^\lambda + n} \cdot \xi_{\tok_{n-1}} \bm{e}_{\tok_{n-1}}.
\end{align*}

There exists a sequence $\min_{v \in \vocab} \sink_v \to\infty$, $\min_{v \in \vocab} \xi_v \to\infty$, $\lambda\to \infty$, and $\vecvalue=0$, we get that
\begin{equation*}
\softmax[\TF(\bH)_n] \to \bm{\transition}_{\tok_n} ~~~\text{for }n>0,~~\tok_n\in\vocab \setminus \cT,
\end{equation*}
\begin{equation*}
\softmax[\TF(\bH)_n] \to ( 1\{ v = \tok_{n-1} \} )_{v \in \cV}~~~\text{for }n>0,~~\tok_n\in\cT.
\end{equation*}
This proves Eq.~(\ref{eqn:TF_ground_truth_match}), indicating that the transformer output matches the ground truth transition. This finishes the proof of Theorem~\ref{appthm:formal-contruct}. 
\end{proof}

\subsection{Proof of Theorem~\ref{thm:main}(c): Stable phase}\label{app:proof-main-3}




We first state \Cref{appthm:g-ppred} and \Cref{appthm:gradients} that are used to prove~\Cref{thm:main}(c). Lemma~\ref{appthm:g-ppred} computes the gradients of $\ppred_{\toki\tokk}$ as defined in Eq. \eqref{appeqn:pred-prob}. 
\begin{lemma}\label{appthm:g-ppred} 
Given $\ppred_{\toki\tokk}$ defined in Eq.~\eqref{appeqn:pred-prob}, for any $\toki$, $\tokk$, $\tok$,  and any value of $\sink_\tok$ and $\ivalue_\tok$, we have that
\begin{align*}
\frac{\partial \ppred_{\toki\tokk}}{\partial \sink_\tok} = &~ \frac{\bm{1}\{\toki=\tok\} \ppred_{\toki\tokk}e^{\sink_\toki}}{(e^{\sink_\toki}+\mass)^2} \Big[\mass\ivalue_\tokk-\mass_\tokk \xi_\tokk - \sum_{\tokj=1}^\vocabsize \ppred_{\toki\tokj} (\mass\ivalue_\tokj -\mass_\tokj\xi_\tokj)\Big],\\
\frac{\partial \ppred_{\toki\tokk}}{\partial \ivalue_\tok} = &~ \frac{e^{\sink_\toki}}{e^{\sink_\toki}+\mass}[\ppred_{\toki\tokk}\bm{1}\{\tokk=\tok\} - \ppred_{\toki\tokk}\ppred_{\toki\tok}].\\
\end{align*}
Furthermore, we have
\[
\sum_{\tokk=1}^\vocabsize \frac{\partial \ppred_{\toki\tokk}}{\partial \sink_\tok} = 0~~~\text{for any }\toki,\ \tok,\ \vecsink,\ \text{and }\vecvalue, \quad \sum_{\tok=1}^\vocabsize \frac{\partial \ppred_{\toki\tokk}}{\partial \ivalue_\tok} = 0~~~\text{for any }\toki,\ \tokk,\ \vecsink,\ \text{and }\vecvalue.
\]
\end{lemma}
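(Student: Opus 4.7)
\textbf{Proof proposal for Lemma \ref{appthm:g-ppred}.} The plan is a direct computation via the standard log-derivative identity for (weighted) softmax expressions, after isolating the inner function of the trainable variables. Define the shorthand
\[
a_\tokk(\toki) \defeq \frac{\mass_\tokk \xi_\tokk + e^{\sink_\toki}\, \ivalue_\tokk}{e^{\sink_\toki} + \mass},
\]
so that, from \eqref{appeqn:pred-prob}, $\ppred_{\toki\tokk} = \transition_{\toki\tokk}\, e^{a_\tokk(\toki)} / \sum_\tokj \transition_{\toki\tokj}\, e^{a_\tokj(\toki)}$. Since only the query-token index $\toki$ enters through $\sink_\toki$, every derivative with respect to $\sink_\tok$ picks up an indicator $\bm{1}\{\toki = \tok\}$, which already explains the indicator in the first formula.

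First I would record the generic softmax identity. Taking logs in the display above and differentiating with respect to any parameter $\theta$ gives
\[
\frac{1}{\ppred_{\toki\tokk}}\frac{\partial \ppred_{\toki\tokk}}{\partial \theta}
= \frac{\partial a_\tokk(\toki)}{\partial \theta} - \sum_{\tokj=1}^{\vocabsize} \ppred_{\toki\tokj}\,\frac{\partial a_\tokj(\toki)}{\partial \theta}.
\]
Then I would compute the two inner derivatives explicitly. For the $\sink$-derivative, the quotient rule on $a_\tokk(\toki)$ with respect to $\sink_\toki$ gives, after cancelling the $e^{2\sink_\toki}\ivalue_\tokk$ terms in the numerator,
\[
\frac{\partial a_\tokk(\toki)}{\partial \sink_\toki}
= \frac{e^{\sink_\toki}\,(\mass\,\ivalue_\tokk - \mass_\tokk\,\xi_\tokk)}{(e^{\sink_\toki}+\mass)^2}.
\]
For the $\ivalue$-derivative, $a_\tokk(\toki)$ depends on $\ivalue_\tok$ only through the term $e^{\sink_\toki}\ivalue_\tokk$ when $\tokk = \tok$, so $\partial a_\tokk(\toki)/\partial \ivalue_\tok = \bm{1}\{\tokk = \tok\}\, e^{\sink_\toki}/(e^{\sink_\toki}+\mass)$. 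Substituting each of these back into the generic softmax identity (and including the $\bm{1}\{\toki=\tok\}$ factor for $\sink_\tok$) yields the two formulas in the lemma after cosmetic rearrangement.

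For the two vanishing-sum identities, I would give conceptual proofs rather than re-derive them coordinatewise. The sum $\sum_\tokk \partial \ppred_{\toki\tokk}/\partial \sink_\tok = 0$ is immediate from differentiating the probability-normalization identity $\sum_\tokk \ppred_{\toki\tokk} = 1$ in $\sink_\tok$. The sum $\sum_\tok \partial \ppred_{\toki\tokk}/\partial \ivalue_\tok = 0$ follows from a shift-invariance observation: adding a constant $c$ to every component of $\vecvalue$ shifts each $a_\tokk(\toki)$ by the same amount $c\,e^{\sink_\toki}/(e^{\sink_\toki}+\mass)$, hence leaves the softmax-normalized $\ppred_{\toki\tokk}$ unchanged, so the directional derivative along $\bm{1}$ vanishes.

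The computation contains no genuine obstacle; the only place to be careful is the quotient-rule cancellation in $\partial a_\tokk/\partial \sink_\toki$, where the terms involving $e^{2\sink_\toki}\ivalue_\tokk$ must cancel exactly to leave the clean factor $\mass\,\ivalue_\tokk - \mass_\tokk\,\xi_\tokk$; this clean form is precisely what downstream lemmas (and the dynamics argument for Theorem~\ref{thm:main}(c)) rely on, so I would highlight it but not dwell on further algebra.
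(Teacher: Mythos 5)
Your proposal is correct and the main derivative computation is essentially equivalent to the paper's: the log-derivative identity you apply is a notational repackaging of the direct quotient-rule computation the paper carries out after first recording the derivatives of $\exp[a_\tokk(\toki)]$ as two preliminary facts, and your computed $\partial a_\tokk(\toki)/\partial\sink_\toki$ matches. Where you genuinely depart is in the two vanishing-sum identities: the paper verifies both by direct algebraic cancellation inside the sum, whereas you argue $\sum_\tokk \partial\ppred_{\toki\tokk}/\partial\sink_\tok=0$ by differentiating the normalization constraint $\sum_\tokk\ppred_{\toki\tokk}=1$, and $\sum_\tok \partial\ppred_{\toki\tokk}/\partial\ivalue_\tok=0$ by the shift-invariance of the softmax under $\vecvalue \mapsto \vecvalue + c\bm{1}$; both are sound and arguably cleaner than the paper's coordinatewise verification, as they make explicit the structural reason the sums vanish rather than relying on cancellations in the formulas.
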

\begin{proof}[Proof of Lemma~\ref{appthm:g-ppred}]
We repeatedly use the following two facts:
\begin{align*}
\frac{\partial\Big\{\exp\Big[\frac{\mass_\tokk\xi_\tokk+e^{\sink_\toki}\ivalue_\tokk}{e^{\sink_\toki}+\mass}\Big]\Big\}}{\partial \sink_\tok} = &~ \frac{\bm{1}\{\toki=\tok\}e^{\sink_\toki}(\mass\ivalue_\tokk-\mass_\tokk\xi_\tokk)}{(e^{\sink_\toki}+\mass)^2} \exp\Big[\frac{\mass_\tokk\xi_\tokk+e^{\sink_\toki}\ivalue_\tokk}{e^{\sink_\toki}+\mass}\Big],
\nonumber\\
\frac{\partial\Big\{\exp\Big[\frac{\mass_\tokk\xi_\tokk+e^{\sink_\toki}\ivalue_\tokk}{e^{\sink_\toki}+\mass}\Big]\Big\}}{\partial \ivalue_\tok} = &~ \frac{\bm{1}\{\tokk=\tok\}e^{\sink_\toki}}{e^{\sink_\toki}+\mass} \exp\Big[\frac{\mass_\tokk\xi_\tokk+e^{\sink_\toki}\ivalue_\tokk}{e^{\sink_\toki}+\mass}\Big].
\nonumber
\end{align*}

When $\toki\neq \tok$, $\ppred_{\toki\tokk}$ has zero gradients with respect to $\sink_\tok$. When $\toki=\tok$, we have that
\begin{align*}
\frac{\partial \ppred_{\tok\tokk}}{\partial \sink_\tok} = &~ \ppred_{\tok\tokk} e^{\sink_\tok} \Big[\frac{\mass\ivalue_\tokk-\mass_\tokk\xi_\tokk}{(e^{\sink_\tok}+\mass)^2}\Big] - \frac{\ppred_{\tok\tokk}\sum_{\toki=1}^\vocabsize \transition_{\tok\toki} e^{\sink_\tok} \Big[\frac{\mass\ivalue_\toki-\mass_\toki\xi_\toki}{(e^{\sink_\tok}+\mass)^2}\Big]\exp\Big[\frac{\mass_\toki\xi_\toki+e^{\sink_\tok}\ivalue_\toki}{e^{\sink_\tok}+\mass}\Big]}{\sum_{\toki=1}^\vocabsize \transition_{\tok\toki}\exp\Big[\frac{\mass_\toki\xi_\toki+e^{\sink_\tok}\ivalue_\toki}{e^{\sink_\tok}+\mass}\Big]}\\
= &~ \frac{e^{\sink_\tok}}{(e^{\sink_\tok}+\mass)^2} \Big\{ \ppred_{\tok\tokk} [\mass\ivalue_\tokk-\mass_\tokk \xi_\tokk] - \ppred_{\tok\tokk}\sum_{\tokj=1}^\vocabsize \ppred_{\tok\tokj} (\mass\ivalue_\tokj -\mass_\tokj\xi_\tokj)\Big\},
\end{align*}
and
\begin{align*}
\frac{\partial \ppred_{\toki\tokk}}{\partial \ivalue_\tok} = &~ \Big[\frac{e^{\sink_\toki}}{e^{\sink_\toki}+\mass}\Big]\ppred_{\toki\tokk} \bm{1}\{\tokk=\tok\} - \frac{\Big[\frac{e^{\sink_\toki}}{e^{\sink_\toki}+\mass}\Big]\transition_{\toki\tok}\exp\Big[\frac{\mass_\tok\xi_\tok+e^{\sink_\toki}\ivalue_\tok}{e^{\sink_\toki}+\mass}\Big]\transition_{\toki\tokk}\exp\Big[\frac{\mass_\tokk\xi_\tokk+e^{\sink_\toki}\ivalue_\tokk}{e^{\sink_\toki}+\mass}\Big]}{\Big(\sum_{j=1}^\vocabsize \transition_{\toki\tokj}\exp\Big[\frac{\mass_\tokj\xi_\tokj+e^{\sink_\toki}\ivalue_\tokj}{e^{\sink_\toki}+\mass}\Big]\Big)^2}\\
= &~ \Big[\frac{e^{\sink_\toki}}{e^{\sink_\toki}+\mass}\Big] [ \ppred_{\toki\tokk} \bm{1}\{\tokk=\tok\} - \ppred_{\toki\tokk} \ppred_{\toki\tok} ].
\end{align*}
We can verify that 
\begin{align*}
\sum_{\tokk=1}^\vocabsize \frac{\partial \ppred_{\toki\tokk}}{\partial \sink_\tok} = &~  \frac{e^{\sink_\tok}}{(e^{\sink_\tok}+\mass)^2} \sum_{\tokk=1}^\vocabsize \Big\{ \ppred_{\tok\tokk} [\mass\ivalue_\tokk-\mass_\tokk \xi_\tokk] - \ppred_{\tok\tokk}\sum_{\tokj=1}^\vocabsize \ppred_{\tok\tokj} (\mass\sink_\tokj -\mass_\tokj\xi_\tokj)\Big\}\\
= &~ \frac{e^{\sink_\tok}}{(e^{\sink_\tok}+\mass)^2}  \Big\{\sum_{\tokk=1}^\vocabsize \ppred_{\tok\tokk} [\mass\ivalue_\tokk-\mass_\tokk \xi_\tokk] - \sum_{\tokj=1}^\vocabsize \ppred_{\tok\tokj} (\mass\sink_\tokj -\mass_\tokj\xi_\tokj)\Big\}\\
= &~ 0,
\end{align*}
and
\begin{align*}
\sum_{\tok=1}^\vocabsize \frac{\partial \ppred_{\toki\tokk}}{\partial \ivalue_\tok} = &~   \Big[\frac{e^{\sink_\toki}}{e^{\sink_\toki}+\mass}\Big] \sum_{\tok=1}^\vocabsize [\ppred_{\toki\tokk} \bm{1}\{\tokk=\tok\} - \ppred_{\toki\tokk} \ppred_{\toki\tok} ]\\
= &~ \Big[\frac{e^{\sink_\toki}}{e^{\sink_\toki}+\mass}\Big] [\ppred_{\toki\tokk} - \ppred_{\toki\tokk} ]\\
= &~ 0.
\end{align*}
This finishes the proof of Lemma~\ref{appthm:g-ppred}.
\end{proof}

Proposition~\ref{appthm:gradients} computes the gradient of $\loss$ with respect to $\vecsink$ and $\vecvalue$, giving the ODE of the gradient flow.
\begin{proposition}\label{appthm:gradients}
Consider the gradient flow of optimizing $\loss(\vecsink, \vecvalue)$ given by 
\begin{equation}\label{appeqn:def-gradient-flow}
\dot{\vecsink}(t) = -\nabla_{\vecsink} \loss(\vecsink(t),\vecvalue(t)), ~~~ \dot{\vecvalue}(t) = -\nabla_{\vecvalue} \loss(\vecsink(t),\vecvalue(t)).
\end{equation}
Simplifying the dynamics using Lemma~\ref{appthm:g-ppred} gives that
\begin{align*}
\dot{\sink}_\tok(t) & = \frac{\Tilde{\stable}_\tok e^{\sink_\tok}}{(e^{\sink_\tok}+\mass)^2} 
\sum_{\toki=1}^\vocabsize({\transition}_{\tok\toki}-{\ppred}_{\tok\toki})(\mass\ivalue_\toki-\mass_\toki\xi_\toki),\\
\dot{\ivalue}_\tok(t) & = \sum_{\tokk=1}^\vocabsize \Big\{\frac{\Tilde{\stable}_\tokk e^{\sink_\tokk} [\transition_{\tokk\tok} - \ppred_{\tokk\tok}]}{e^{\sink_\tokk}+\mass}\Big\}.
\end{align*}
\end{proposition}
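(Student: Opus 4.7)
The plan is to compute the two partial derivatives $\partial\loss/\partial\sink_\tok$ and $\partial\loss/\partial\ivalue_\tok$ by direct chain rule, then substitute the formulas from Lemma~\ref{appthm:g-ppred}, and finally simplify using $\sum_{\tokk}\transition_{\tok\tokk}=1$. The gradient-flow equations then follow by negation.

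First I would handle $\sink_\tok$. Observe from the definition $\loss(\vecsink,\vecvalue)=\sum_\tok \Tilde{\stable}_\tok \loss_\tok(\sink_\tok,\vecvalue)$ that only the $\tok$-th summand depends on $\sink_\tok$ (the other summands involve only $\sink_{\tok'}$ for $\tok'\neq \tok$ through $\ppred_{\tok'\cdot}$, which is independent of $\sink_\tok$ by Lemma~\ref{appthm:g-ppred}, whose first formula vanishes when $\toki\neq \tok$). Thus
\[
\frac{\partial \loss}{\partial \sink_\tok} = -\Tilde{\stable}_\tok \sum_{\tokk} \frac{\transition_{\tok\tokk}}{\ppred_{\tok\tokk}}\,\frac{\partial \ppred_{\tok\tokk}}{\partial \sink_\tok}.
\]
Plug in the expression for $\partial \ppred_{\tok\tokk}/\partial \sink_\tok$ from Lemma~\ref{appthm:g-ppred}; the factor $\ppred_{\tok\tokk}$ cancels with the denominator, producing $-\Tilde{\stable}_\tok\,e^{\sink_\tok}/(e^{\sink_\tok}+\mass)^2$ times $\sum_\tokk \transition_{\tok\tokk}[\mass\ivalue_\tokk-\mass_\tokk\xi_\tokk - \sum_\tokj \ppred_{\tok\tokj}(\mass\ivalue_\tokj-\mass_\tokj\xi_\tokj)]$. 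Since $\sum_\tokk \transition_{\tok\tokk}=1$, the inner sum telescopes to $\sum_\toki(\transition_{\tok\toki}-\ppred_{\tok\toki})(\mass\ivalue_\toki-\mass_\toki\xi_\toki)$, which upon negation yields the claimed formula for $\dot{\sink}_\tok$.

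Next I would handle $\ivalue_\tok$. In contrast to the $\sink_\tok$ case, every summand of $\loss$ potentially depends on $\ivalue_\tok$, so the chain rule gives
\[
\frac{\partial \loss}{\partial \ivalue_\tok} = -\sum_\tokk \Tilde{\stable}_\tokk \sum_\tokj \frac{\transition_{\tokk\tokj}}{\ppred_{\tokk\tokj}}\,\frac{\partial \ppred_{\tokk\tokj}}{\partial \ivalue_\tok}.
\]
Substituting the second formula of Lemma~\ref{appthm:g-ppred}, the $\ppred_{\tokk\tokj}$ factor again cancels and leaves $\sum_\tokj \transition_{\tokk\tokj}[\indic{\tokj=\tok}-\ppred_{\tokk\tok}] = \transition_{\tokk\tok}-\ppred_{\tokk\tok}$, where I use $\sum_\tokj \transition_{\tokk\tokj}=1$ to collapse the second term. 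Negating yields the claimed formula for $\dot{\ivalue}_\tok$.

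Since the proof is a purely mechanical chain-rule calculation once Lemma~\ref{appthm:g-ppred} is available, there is no real obstacle; the only subtlety worth being careful about is the asymmetric dependency structure, namely that $\loss_{\tok'}$ does not depend on $\sink_\tok$ for $\tok'\neq \tok$ but does depend on every coordinate of $\vecvalue$, which is why the gradient with respect to $\sink_\tok$ is a single term while the gradient with respect to $\ivalue_\tok$ is a sum over all $\tokk$. I would flag this asymmetry explicitly at the start so that the two derivations are visibly parallel but differ in that one sum collapses and the other does not.
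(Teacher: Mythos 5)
Your proposal is correct and follows essentially the same route as the paper: expand $\partial\loss/\partial\sink_\tok$ and $\partial\loss/\partial\ivalue_\tok$ by the chain rule applied to the cross-entropy form of $\loss$, substitute the two gradient formulas from Lemma~\ref{appthm:g-ppred}, cancel the common factor of $\ppred$, and collapse the sums using $\sum_{\tokk}\transition_{\tok\tokk}=1$. Your explicit remark on the asymmetry of the dependency structure (a single summand for $\sink_\tok$ versus a full sum for $\ivalue_\tok$) is a sound and useful observation that the paper leaves implicit.
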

\begin{proof}[Proof of Proposition~\ref{appthm:gradients}]
Taking the derivative of $\loss(\vecsink,\vecvalue)$ gives that
\begin{align*}
\frac{\partial \loss(\vecsink,\vecvalue)}{\partial \sink_\tok}
= ~& \Tilde{\stable}_\tok \sum_{\tokk=1}^\vocabsize \transition_{\tok\tokk} \cdot \frac{-1}{\ppred_{\tok\toki}}\cdot \frac{\partial \ppred_{\tok\toki}}{\partial \sink_\tok}\\
= ~& \frac{\Tilde{\stable}_\tok e^{\sink_\tok}}{(e^{\sink_\tok}+\mass)^2} \Big\{  \sum_{\toki=1}^\vocabsize \ppred_{\tok\toki} [\mass\ivalue_\toki -\mass_\toki \xi_\toki] - \sum_{\tokk=1}^\vocabsize \transition_{\tok\tokk} [\mass\ivalue_\tokk-\mass_\tokk\xi_\tokk]\Big\}\\
= ~& \frac{\Tilde{\stable}_\tok e^{\sink_\tok}}{(e^{\sink_\tok}+\mass)^2} \sum_{\tokk=1}^\vocabsize \Big\{  [\ppred_{\tok\tokk}-\transition_{\tok\tokk}] [\mass\ivalue_\tokk-\mass_\tokk\xi_\tokk]\Big\}.
\end{align*}
Similarly, we have that
\begin{align*}
\frac{\partial \loss(\vecsink,\vecvalue)}{\partial \ivalue_\tok} 
= ~& \sum_{j=1}^\vocabsize \Tilde{\stable}_\tokj \sum_{\tokk=1}^\vocabsize \transition_{\tokj\tokk} \Big\{ \frac{e^{\sink_\tokj} \ppred_{\tokj\tok}}{e^{\sink_\tokj}+\mass} - \frac{e^{\sink_\tokj} \bm{1}\{\tokk=\tok\}}{e^{\sink_\tokj}+\mass} \Big\}\\
= ~& \sum_{j=1}^\vocabsize \Big\{\frac{\Tilde{\stable}_{\tokj}e^{\sink_\tokj}[\ppred_{\tokj\tok}-\transition_{\tokj\tok}]}{e^{\sink_\tokj}+\mass}\Big\}.
\end{align*}
Plug them in Eq.~\eqref{appeqn:def-gradient-flow} proves Proposition~\ref{appthm:gradients}.
\end{proof}
\begin{theorem}[Restatement the stable phase part in Theorem~\ref{thm:main}(c)]\label{appthm:main-1} 
Assume $\xi_\tok \ge 0$ for any $\tok$, $\stable_\tok > 0$ for any $\tok\in\vocab$, and $\{ \mass_i \cdot \xi_i \}_{i \in \vocab}$ are not all equal. Consider the gradient flow over the variables $(\vecsink, \vecvalue)$, i.e., $(\dot{\vecsink}(t), \dot{\vecvalue}(t)) = - \nabla_{\vecsink, \vecvalue}\loss(\vecsink(t), \vecvalue(t))$. Any vector of the following form
    \begin{equation}\vecsink^\star = \sink \cdot \bm{1}, \quad \vecvalue^\star = c \cdot \bm{1} - e^{-\sink} \cdot \bm{\mass} \circ \bm{\xi},  ~~~ \sink, c\in\R \end{equation}
 is a stationary point. These are all global minimizers of $\loss(\vecsink,\vecvalue)$.
\end{theorem}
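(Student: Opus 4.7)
The plan is to reduce everything to a single algebraic observation, after which both stationarity and global optimality follow almost immediately from Proposition~\ref{appthm:gradients} and the standard cross-entropy decomposition. The key computation is to substitute the proposed point into the formula \eqref{appeqn:pred-prob} for $\ppred_{\tok\toki}$ and show that $\Ppred = \Transition$ identically.

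Concretely, at $\vecsink^\star = \sink\cdot\bm{1}$ and $\vecvalue^\star = c\cdot\bm{1} - e^{-\sink}\cdot\bm{\mass}\circ\bm{\xi}$, the exponent appearing inside the softmax of \eqref{appeqn:pred-prob} satisfies
\begin{equation*}
\frac{\mass_\toki \xi_\toki + e^{\sink} \ivalue^\star_\toki}{e^{\sink} + \mass}
= \frac{\mass_\toki \xi_\toki + e^{\sink} c - \mass_\toki \xi_\toki}{e^{\sink} + \mass}
= \frac{e^{\sink} c}{e^{\sink} + \mass},
\end{equation*}
which is independent of the index $\toki$. Hence the exponential factors cancel between numerator and denominator of \eqref{appeqn:pred-prob}, leaving $\ppred_{\tok\toki} = \transition_{\tok\toki}$ for every $\tok,\toki \in \vocab$. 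This single observation is the engine of the proof: the correct parametric family for $\vecvalue^\star$ is precisely the one that makes the logit contribution of each non-\bos{} key-token uniform, so the softmax defers entirely to the MLP's learned transition $\bm{\transition}_\tok$.

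Stationarity then follows from Proposition~\ref{appthm:gradients}: both gradient expressions contain the factor $(\transition_{\tok\toki} - \ppred_{\tok\toki})$, which vanishes at our point, so $\dot{\sink}_\tok = \dot{\ivalue}_\tok = 0$ for every $\tok$. For global optimality, I would use the standard decomposition $\loss(\vecsink,\vecvalue) = \sum_{\tok \in \vocab \setminus \cT} \Tilde{\stable}_\tok\, H(\bm{\transition}_\tok) + \sum_{\tok \in \vocab \setminus \cT} \Tilde{\stable}_\tok\, \mathrm{KL}(\bm{\transition}_\tok \,\|\, \bm{\ppred}_\tok)$, where the first sum is a constant independent of $(\vecsink,\vecvalue)$ and the second sum is non-negative, vanishing precisely when $\bm{\ppred}_\tok = \bm{\transition}_\tok$ for all $\tok$ with $\Tilde{\stable}_\tok > 0$. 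Our point achieves exactly this equality, so it attains the infimum of $\loss$.

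The hard part is not really a step of the proof but rather the \emph{guess} of the right parametric family for $\vecvalue^\star$; once one notices that the inner product $\mass_\toki \xi_\toki + e^{\sink}\ivalue_\toki$ should be forced constant in $\toki$, the choice $\ivalue^\star_\toki = c - e^{-\sink}\mass_\toki \xi_\toki$ is essentially dictated. Note that the hypotheses $\xi_\tok \ge 0$, $\stable_\tok > 0$, and the non-degeneracy of $\{\mass_i \xi_i\}$ play no role in this part — they will be needed in Theorem~\ref{thm:main}(a)--(b) to drive the asymptotic behavior, but here the argument is purely algebraic and requires none of them.
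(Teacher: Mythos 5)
Your proof is correct. The first two parts (establishing $\Ppred = \Transition$ at the proposed point, then reading stationarity off the gradient formulas in Proposition~\ref{appthm:gradients}) are exactly what the paper does. Where you diverge is in the global-optimality step, and your route is both cleaner and more airtight than the paper's. The paper instead computes the full Hessian $\nabla^2 \loss(\vecsink^\star,\vecvalue^\star)$, block-decomposes it, and shows it is positive semi-definite. But a PSD Hessian at a stationary point is only a second-order \emph{necessary} condition; by itself it does not establish local minimality (flat directions can hide higher-order negativity), much less global minimality, which is what the theorem actually asserts. Your cross-entropy decomposition $\loss = \sum_{\tok} \Tilde{\stable}_\tok H(\bm{\transition}_\tok) + \sum_{\tok} \Tilde{\stable}_\tok\,\mathrm{KL}(\bm{\transition}_\tok \,\|\, \bm{\ppred}_\tok)$ pins down the value $\sum_{\tok} \Tilde{\stable}_\tok H(\bm{\transition}_\tok)$ as a universal lower bound on $\loss$ (by Gibbs' inequality, since the entropy term does not depend on the parameters), and the computation $\Ppred=\Transition$ shows the proposed points achieve it. That is a genuine proof of global minimality in two lines, and it correctly makes no use of the hypotheses on $\bm{\xi}$, $\bm{\stable}$, and $\bm{\mass}\circ\bm{\xi}$ that you flag as unnecessary here. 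One mild caveat: you say stationarity "then follows," but it is worth noting that once you have global minimality on an open domain, stationarity is automatic, so your Proposition~\ref{appthm:gradients} paragraph is really just an independent sanity check — harmless, but logically redundant given the rest.
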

\begin{proof}[Proof of Theorem~\ref{appthm:main-1}]
When $\vecsink=\vecsink^\star$ and $\vecvalue=\vecvalue^\star$, given $\ppred_{\tok\toki}$ defined in Eq.~\eqref{appeqn:pred-prob} with any $\tok$ and $\toki$, we have that
\begin{align*}
\ppred_{\tok\toki} = &~ \frac{\transition_{\tok\toki} \exp\Big[\frac{\mass_\toki\xi_\toki+e^{\sink}\ivalue_\toki}{e^{\sink}+\mass}\Big]}{\sum_{\tokk=1}^\vocabsize \transition_{\tok\tokk}\exp\Big[\frac{\mass_\tokk\xi_\tokk+e^{\sink}\ivalue_\tokk}{e^{\sink}+\mass}\Big]} \\
 = &~ \frac{\transition_{\tok\toki} \exp\Big[\frac{e^{\sink}c}{e^{\sink}+\mass}\Big]}{\sum_{\tokk=1}^\vocabsize \transition_{\tok\tokk}\exp\Big[\frac{e^{\sink}c}{e^{\sink}+\mass}\Big]} \\
 = &~ \transition_{\tok\toki}.
\end{align*}
Plug $\ppred_{\tok\toki}$ into $\partial \loss(\vecsink,\vecvalue)/\partial \vecsink$ and $\partial \loss(\vecsink,\vecvalue)/\partial \vecvalue$, we have
\begin{align*}
\frac{\partial \loss(\vecsink,\vecvalue)}{\partial \sink_\tok}\Big |_{\vecsink^\star,\vecvalue^\star} & = \frac{\Tilde{\stable}_\tok e^{\sink_\tok}}{(e^{\sink_\tok}+\mass)^2} 
\sum_{\tokk=1}^\vocabsize \Big\{(\ppred_{\tok\tokk}-\transition_{\tok\tokk}) [W \ivalue_\tokk -\mass_\tokk \xi_\tokk]
\Big\} = 0,\\
\frac{\partial \loss(\vecsink,\vecvalue)}{\partial \ivalue_\tok}\Big |_{\vecsink^\star,\vecvalue^\star} & = \sum_{\tokk=1}^\vocabsize \Big\{\frac{\Tilde{\stable}_\tokk e^{\sink_\tokk} [\ppred_{\tokk\tok}-\transition_{\tokk\tok} ]}{e^{\sink_\tokk}+\mass}\Big\} = 0.
\end{align*}
This shows that $\vecsink=\vecsink^\star$ and $\vecvalue=\vecvalue^\star$ are stationary points. We further compute the second-order derivative using Lemma~\ref{appthm:g-ppred}. To simplify the notation, we use $z_\tokk = W \ivalue_\tokk -\mass_\tokk \xi_\tokk$ and $\bm{z}=[z_1,\ldots,z_\vocabsize]$. We have that
\begin{align*}
\frac{\partial^2 \loss(\vecsink, \vecvalue)}{\partial \sink_\toki \partial \sink_\tok} \Big |_{\vecsink^\star,\vecvalue^\star} =  ~& \bm{1}\{\tok=\toki\}\cdot \frac{\Tilde{\stable}_\tok e^{\sink}}{(e^{\sink}+\mass)^2} 
\sum_{\tokk=1}^\vocabsize \Big\{\frac{\partial \ppred_{\toki\tokk}}{\partial \sink_\tok} z_\tokk
\Big\}\\
=  ~& \bm{1}\{\tok=\toki\}\cdot \frac{\Tilde{\stable}_\tok e^{2\sink}}{(e^{\sink}+\mass)^4} 
\Big\{ \sum_{\tokk=1}^\vocabsize \ppred_{\toki\tokk} z_\tokk^2 - \Big[\sum_{\tokk=1}^\vocabsize \ppred_{\toki\tokk}z_\tokk \Big]^2
\Big\}\\
=  ~& \bm{1}\{\tok=\toki\}\cdot \frac{\Tilde{\stable}_\tok e^{2\sink}}{(e^{\sink}+\mass)^4} 
\Big\{ \sum_{\tokk=1}^\vocabsize \transition_{\toki\tokk} z_\tokk^2 - \Big[\sum_{\tokk=1}^\vocabsize \transition_{\toki\tokk}z_\tokk \Big]^2
\Big\},
\end{align*}
where in the last line, we plugged in $\ppred_{\tok\toki}=\transition_{\tok\toki}$ for any $\tok$ and $\toki$. Similarly, we compute the second order derivatives with respect to $\sink_\toki$ and $\ivalue_\tok$,
\begin{align*}
\frac{\partial^2 \loss(\vecsink, \vecvalue)}{\partial \sink_\toki \partial \ivalue_\tok} \Big |_{\vecsink^\star,\vecvalue^\star} =  ~& \frac{\Tilde{\stable}_\toki e^{\sink}}{(e^{\sink}+\mass)^2} 
\sum_{\tokk=1}^\vocabsize \Big\{\frac{\partial \ppred_{\toki\tokk}}{\partial \ivalue_\tok} z_\tokk
\Big\}\\
=  ~& \frac{\Tilde{\stable}_\toki e^{2\sink}}{(e^{\sink}+\mass)^3} 
\Big\{ \transition_{\toki\tok} z_\tokk - \transition_{\toki\tok} \sum_{\tokk=1}^\vocabsize  \transition_{\toki\tokk}z_\tokk
\Big\}.
\end{align*}
With the same manner, we compute the second order derivatives with respect to $\ivalue_\toki$ and $\ivalue_\tok$, 
\begin{align*}
\frac{\partial^2 \loss(\vecsink, \vecvalue)}{\partial \ivalue_\toki \partial \ivalue_\tok} \Big |_{\vecsink^\star,\vecvalue^\star} =  ~& \sum_{\tokk=1}^\vocabsize \Big\{\frac{\partial \ppred_{\tokk\toki}}{\partial \ivalue_\tok}\frac{\Tilde{\stable}_\tokk e^{\sink}}{e^{\sink}+\mass}\Big\}\\
= ~& \frac{e^{2\sink}}{(e^\sink+\mass)^2}\sum_{\tokk=1}^\vocabsize \{ \Tilde{\stable}_\tokk [\bm{1}\{\tok=\toki\} \transition_{\tokk\tok} - \transition_{\tokk\toki}\transition_{\tokk\tok}] \}.
\end{align*}
Combining the above computations gives that
\begin{align*}
\text{Hessian}(\loss(\vecsink^\star, \vecvalue^\star)) = \left(\begin{matrix}
\nabla^2_{\vecsink} \loss(\vecsink,\vecvalue) & \nabla_{\vecsink} \nabla_{\vecvalue}  \loss(\vecsink,\vecvalue)\\
\nabla_{\vecvalue} \nabla_{\vecsink} \loss(\vecsink,\vecvalue) & \nabla^2_{\vecsink} \loss(\vecsink,\vecvalue)\\
\end{matrix}\right),
\end{align*}
with
\begin{align*}
\nabla^2_{\vecsink} \loss(\vecsink,\vecvalue) = &~ \frac{e^{2\sink}}{(e^\sink+\mass)^4}\diag\Big\{ \Tilde{\stable} \circ [\bz^\top \gppred^\Transition_1\bz;\ldots;\gppred^\Transition_\vocabsize \bz] \Big\},\\
\nabla_{\vecsink} \nabla_{\vecvalue} \loss(\vecsink,\vecvalue) = &~ \frac{e^{2\sink}}{(e^\sink+\mass)^3}\diag\Big\{ \Tilde{\stable} \Big\} [ \bz^\top\gppred^\Transition_1;\ldots; \bz^\top\gppred^\Transition_\vocabsize] ,\\
\nabla^2_{\vecvalue} \loss(\vecsink,\vecvalue) = &~ \frac{ e^{2\sink}}{(e^{\sink}+\mass)^2} \sum_{\tokk=1}^\vocabsize  \Tilde{\stable}_\tokk \gppred^\Transition_\tokk,
\end{align*}
where $\bG^\Transition_\tokk$ is defined in Eq.~\eqref{appeqn:g-probs}. Furthermore, there exists $\bm{U}$ such that $\bm{U} \text{Hessian}(\loss(\vecsink^\star, \vecvalue^\star)) \bm{U}^\top = \text{Diag-Hessian}(\loss(\vecsink^\star, \vecvalue^\star))$, with
\[
\text{Diag-Hessian}(\loss(\vecsink^\star, \vecvalue^\star)) = \left(\begin{matrix}
\nabla^2_{\vecsink} \loss(\vecsink,\vecvalue) & 0\\
0 & \frac{ e^{2\sink}}{(e^{\sink}+\mass)^2} \bB\\
\end{matrix}\right),
\]
where the $\bB$ is given by
\begin{align*}
\bB = \sum_{\tokk=1}^\vocabsize  \Tilde{\stable}_\tokk \Big(\gppred^\Transition_\tokk - (\bm{z}^\top \gppred_\tokk^\Transition \bm{z})^{-1}\gppred^\Transition_\tokk\bm{z}\bm{z}^\top\gppred^\Transition_\tokk\Big).
\end{align*}
To prove that $\bB$ is positive semi-definite, consider any vector $\bm{\eta}$ with $\|\bm{\eta}\|_2=1$: 
\begin{align*}
\bm{\eta}^\top \bB \bm{\eta} = &~ \sum_{\tokk=1}^\vocabsize  \Tilde{\stable}_\tokk \Big(\bm{\eta}^\top\gppred^\Transition_\tokk\bm{\eta} - \frac{\bm{\eta}^\top\gppred^\Transition_\tokk\bm{z}\bm{z}^\top\gppred^\Transition_\tokk\bm{\eta}}{\bm{z}^\top \gppred_\tokk^\Transition \bm{z}}\Big).
\end{align*}
Since $\gppred^\Transition_\tokk$ is positive semi-definite, the Cauchy inequality gives that
\[
\bm{z}^\top\gppred^\Transition_\tokk\bm{\eta} \leq \sqrt{\bm{z}^\top \gppred_\tokk^\Transition \bm{z} \bm{\eta}^\top \gppred_\tokk^\Transition \bm{\eta}}.
\]
As a result, we have that
\begin{align*}
\bm{\eta}^\top \bB \bm{\eta} \geq &~ \sum_{\tokk=1}^\vocabsize  \Tilde{\stable}_\tokk \Big(\bm{\eta}^\top\gppred^\Transition_\tokk\bm{\eta} - \frac{\bm{z}^\top \gppred_\tokk^\Transition \bm{z} \bm{\eta}^\top \gppred_\tokk^\Transition \bm{\eta}}{\bm{z}^\top \gppred_\tokk^\Transition \bm{z}}\Big) = 0.
\end{align*}
This shows that $\bB$ is positive semi-definite. Therefore, $\text{Hessian}(\loss(\vecsink^\star, \vecvalue^\star))$ is positive semi-definte.  This proves Theorem~\ref{appthm:main-1}.
\end{proof}

\subsection{Proof of Theorem~\ref{thm:main}(a): Attention sinks}\label{appsec:proof-main-1}
\begin{theorem}[Restatement of the attention sink part in Theorem~\ref{thm:main}(a)]\label{appthm:main-2}
Assume $\xi_\tok \ge 0$ for any $\tok$, $\stable_\tok > 0$ for any $\tok\in\vocab$, and $\{ \mass_i \cdot \xi_i \}_{i \in \vocab}$ are not all equal. Fix $\vecvalue= \beta \cdot \bm{1}$ for a constant $\beta$, and consider the gradient flow of the loss function $\loss(\vecsink, \vecvalue)$ over $\vecsink$, i.e., $\dot{\vecsink}(t) = - \nabla \loss(\vecsink(t), \vecvalue)$. With any initial value $\vecsink(0)$, there exists $\bm{r}(t)$ with norm uniformly bounded in time, such that 
    \begin{equation}
    \textstyle \vecsink(t) = \frac{1}{2} \log t \cdot \bm{1} + \bm{r}(t).
    \end{equation}
\end{theorem}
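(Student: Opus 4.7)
The plan is to analyze the gradient-flow ODE from Proposition~\ref{appthm:gradients} directly. A key structural simplification is that fixing $\vecvalue = \beta\bm{1}$ decouples the dynamics across $\tok\in\vocab\setminus\cT$: each coordinate satisfies an autonomous scalar ODE
\[
\dot{\sink}_\tok = \frac{\Tilde{\stable}_\tok e^{\sink_\tok}}{(e^{\sink_\tok}+\mass)^2}\sum_{\toki}(\transition_{\tok\toki}-\ppred_{\tok\toki})(\mass\beta - \mass_\toki\xi_\toki),
\]
so it suffices to analyze a single one-dimensional flow in isolation (trigger tokens are inactive, since $\Tilde{\stable}_\tok = 0$ there).

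I would then perform a Taylor expansion in the small parameter $\alpha_\tok := e^{-\sink_\tok}$. Writing $\varepsilon_\toki := \mass_\toki\xi_\toki - \mass\beta$ and $\bar{\varepsilon}_\tok := \sum_\toki \transition_{\tok\toki}\varepsilon_\toki$, direct manipulation of Eq.~\eqref{appeqn:pred-prob} gives $\ppred_{\tok\toki} = \transition_{\tok\toki}[1 + \alpha_\tok(\varepsilon_\toki - \bar{\varepsilon}_\tok) + O(\alpha_\tok^2)]$. Substituting collapses the sum to $\alpha_\tok \operatorname{Var}_{\bm{\transition}_\tok}(\varepsilon) + O(\alpha_\tok^2)$, and after absorbing the prefactor $\frac{e^{\sink_\tok}}{(e^{\sink_\tok}+\mass)^2} = \alpha_\tok(1+O(\alpha_\tok))$ the ODE reduces at leading order to
\[
\dot{\sink}_\tok = c_\tok\, e^{-2\sink_\tok}\bigl(1 + O(e^{-\sink_\tok})\bigr), \qquad c_\tok := \Tilde{\stable}_\tok \operatorname{Var}_{\bm{\transition}_\tok}(\varepsilon),
\]
with $c_\tok > 0$ for each non-trigger $\tok$ under the stated nondegeneracy $\{\mass_i\xi_i\}$ not all equal combined with the irreducibility implicit in $\stable_\tok > 0$.

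Next I would change variables to $f_\tok(t) := e^{2\sink_\tok(t)}$, which transforms the ODE into $\dot{f}_\tok = 2c_\tok\bigl(1 + O(f_\tok^{-1/2})\bigr)$. To bridge the transient to this asymptotic regime, I would use $\loss_\tok(\cdot,\beta\bm{1})$ as a Lyapunov function: it is bounded below by $0$, strictly decreasing along the flow, and attains its infimum only in the limit $\sink_\tok \to +\infty$, so the scalar gradient flow forces $\sink_\tok(t)\to +\infty$ and there is a finite $T_\tok$ after which the expansion is valid. A crude lower bound $f_\tok(t) \gtrsim c_\tok(t-T_\tok)$ on $[T_\tok,\infty)$ then allows bootstrapping the error integral $\int_{T_\tok}^t O(f_\tok(s)^{-1/2})\,ds = O(\sqrt{t})$, yielding $f_\tok(t) = 2c_\tok t + O(\sqrt{t})$. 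Taking logarithms,
\[
\sink_\tok(t) = \tfrac{1}{2}\log t + \tfrac{1}{2}\log(2c_\tok) + O(t^{-1/2}),
\]
so $r_\tok(t) := \sink_\tok(t) - \tfrac{1}{2}\log t$ is uniformly bounded on $[T_\tok,\infty)$; continuity on $[0,T_\tok]$ extends this bound to all $t\geq 0$, and finiteness of $\vocab$ upgrades coordinate-wise boundedness to uniform boundedness of $\|\bm{r}(t)\|$.

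The main obstacle I anticipate is the third step: making the transient-to-asymptotic transition rigorous and uniformly controlling the $O(\cdot)$ error terms across the infinite time horizon through the bootstrap. The Taylor expansion is only valid once $\sink_\tok$ is large, so one must carefully quantify when the flow enters that regime and ensure the aggregated error in $f_\tok$ stays $O(\sqrt{t})$ rather than degrading. A secondary obstacle is ensuring $c_\tok > 0$ for \emph{every} non-trigger $\tok$: this may require a mild strengthening of the stated nondegeneracy (namely that $\operatorname{Var}_{\bm{\transition}_\tok}(\varepsilon) > 0$ for each non-trigger $\tok$, equivalently that $\bm{\transition}_\tok$ places mass on at least two tokens with distinct $\mass_\toki\xi_\toki$) or an argument exploiting the connectedness of the bigram chain.
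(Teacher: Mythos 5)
Your core argument---Taylor-expand the gradient in $e^{-\sink_\tok}$, observe that the leading term is $\dot{\sink}_\tok \propto e^{-2\sink_\tok}$, change variables to $e^{2\sink_\tok}$, and integrate to get $\asymp t$---is exactly the engine driving the paper's proof, and it is correct. The paper likewise reduces to a decoupled scalar ODE, introduces the random variable $\varphi$ with $\P(\varphi = \mass_\tokk\xi_\tokk) = \transition_{\tok\tokk}$, sets $u = e^{\sink_\tok}$, and writes $\dot u = \frac{\Tilde{\stable}_\tok u^2}{(u+\mass)^2}\cdot\frac{\Cov(e^{\varphi/(u+\mass)},\varphi)}{\E e^{\varphi/(u+\mass)}}$, which it then sandwiches using Taylor expansion.

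Where you and the paper diverge is in how to bridge the transient, which you correctly flag as your main obstacle. You propose a Lyapunov argument to first establish $\sink_\tok\to\infty$, then bootstrap the error integral. The paper sidesteps that entirely with a cleaner device: since $\Cov(e^{\varphi/(u+\mass)},\varphi)\geq 0$ by monotonicity, $u$ is nondecreasing, so $u(t)\geq u(0)$ for all $t$. This means the Taylor remainder factors $\theta_1(\varphi)$ and $\theta_2(\varphi)$ (and the prefactor $u^2/(u+\mass)^2$ and the normalizer $\E e^{\varphi/(u+\mass)}$) can all be bounded \emph{uniformly in time} by constants depending only on $u(0)$. The two-sided estimates $\frac{\Tilde C}{u}\leq \dot u\leq \frac{C}{u}$ then hold on all of $[0,\infty)$, not just asymptotically, and integrate directly to $u\asymp\sqrt{t}$ with no need for a separate escape-to-infinity argument or bootstrap. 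Your route would probably work, but it is doing extra work to repair a nonuniformity that the paper never introduces. Your secondary concern about $c_\tok > 0$ (equivalently $\Var_{\bm{\transition}_\tok}(\mass\xi) > 0$) is well-taken and is present in the paper's proof too: the lower-bound constant $\Tilde C$ there is proportional to $\Var(\varphi)$, so the stated assumption that $\{\mass_i\xi_i\}$ are not all equal must be read as implying positive variance under each $\bm{\transition}_\tok$; the paper does not make this fully explicit.
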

\begin{proof}[Proof of Theorem~\ref{appthm:main-2}]
We separately analyze each entry of $\vecsink$. Focusing on $\sink_\tok$, to simplify the notation, we introduce a random variable $\varphi$ such that $$\P(\varphi=\mass_\tokk\xi_\tokk)=\transition_{\tok\tokk}.$$
Denote 
\[
u = e^{\sink_\tok}.
\]
Therefore, using Lemma~\ref{appthm:gradients}, we get that
\begin{align*}
\frac{\mathrm{d} u}{\mathrm{d} t} = \frac{\Tilde{\stable}_\tok e^{2\sink_\tok}}{(e^{\sink_\tok}+\mass)^2} 
\sum_{\toki=1}^\vocabsize({\transition}_{\tok\toki}-{\ppred}_{\tok\toki})(\mass\ivalue_\toki-\mass_\toki\xi_\toki).
\end{align*}
We take in $\vecvalue = c\cdot \bm{1}$ and expand the expression of $\mathrm{d}u/\mathrm{d}t$. This gives us that
\begin{align*}
\frac{\mathrm{d} u}{\mathrm{d} t} = &~ \frac{\Tilde{\stable}_\tok u^2}{(u+\mass)^2} \frac{\sum_{\tokk=1}^\vocabsize \transition_{\tok\tokk} e^{\mass_\tokk\xi_\tokk/(u+\mass)}\mass_\tokk\xi_\tokk - \sum_{\tokk=1}^\vocabsize \transition_{\tok\tokk} e^{\mass_\tokk\xi_\tokk/(u+\mass)}\sum_{\tokk=1}^\vocabsize \transition_{\tok\tokk} \mass_\tokk\xi_\tokk}{\sum_{\tokk=1}^\vocabsize \transition_{\tok\tokk} e^{\mass_\tokk\xi_\tokk/(u+\mass)}}\\
= &~ \frac{\Tilde{\stable}_\tok u^2}{(u+\mass)^2} \frac{\Cov(e^{\frac{\varphi}{u+\mass}},\varphi)}{\E e^{\frac{\varphi}{u+\mass}}}.
\end{align*}
Since both $e^{x/(u+\mass)}$ and $x$ are monotonically increasing with respect to $x$, $\mathrm{d}u/\mathrm{d}t \geq 0$. Therefore, $u$ is monotonically increasing, and we have that
\[
\frac{u(t)^2}{[u(t)+\mass]^2} \geq \frac{u(0)^2}{[u(0)+\mass]^2},\quad \E e^{\frac{\varphi}{u(t)+\mass}} \leq \E e^{\frac{\varphi}{u(0)+\mass}}.
\]
Meanwhile, the first and second order Taylor expansions of $e^{\varphi/(u+\mass)}$ give that
\[
e^{\frac{\varphi}{u+\mass}} = 1 + \frac{\theta_1(\varphi) \varphi}{u+\mass},\quad e^{\frac{\varphi}{u+\mass}} = 1 + \frac{\varphi}{u+\mass} + \theta_2(\varphi) \Big[\frac{\varphi}{u+\mass}\Big]^2,
\]
where both $\theta_1(\varphi)$ and $\theta_2(\varphi)\varphi^2$ are  monotonically increasing functions of $\varphi$. We also have the bound that
\[
\theta(\varphi) \leq \Big[\exp\Big\{\frac{\max_\tokk \mass_\tokk \xi_\tokk}{u(0)+\mass}\Big\}-1\Big]/\Big[\frac{\max_\tokk \mass_\tokk \xi_\tokk}{u(0)+\mass}-1\Big] = C_\theta.
\]
Therefore, we get two more inequalities:
\[
\Cov(\theta_1(\varphi)\varphi,\varphi)\leq C_\theta \E(\varphi^2),\quad \Cov(\theta_2(\varphi)\varphi^2,\varphi)\geq 0.
\]
We bound $\mathrm{d}u/\mathrm{d}t$ and get that
\begin{align*}
\frac{\mathrm{d}u}{\mathrm{d}t} \leq &~ \Tilde{\stable}_\tok \Cov(e^{\frac{\varphi}{u+\mass}}, \varphi)\\
= &~ \Tilde{\stable}_\tok\Cov(1 + \frac{\theta_1(\varphi)\varphi}{u+\mass}, \varphi)\\
\leq &~ \frac{\Tilde{\stable}_\tok C_\theta \E(\varphi^2)}{u}.
\end{align*}
By solving the ODE, we get that
\begin{align*}
u \leq \sqrt{2 \Tilde{\stable}_\tok C_\theta \E(\varphi^2) t + C_1}.
\end{align*}
To give a lower bound, we have that
\begin{align*}
\frac{\mathrm{d}u}{\mathrm{d}t} \geq &~ \frac{u(0)^2}{[u(0)+\mass]^2}\frac{\Tilde{\stable}_\tok \Cov(e^{\frac{\varphi}{u+\mass}},\varphi)}{\E e^{\frac{\varphi}{u(0)+\mass}}} \\
= &~ \frac{u(0)^2}{[u(0)+\mass]^2}\frac{\Tilde{\stable}_\tok}{\E e^{\frac{\varphi}{u(0)+\mass}}} \Cov(1+\frac{\varphi}{u+\mass}+\theta_2(\varphi) \Big[\frac{\varphi}{u+\mass}\Big]^2,\varphi)\\
\geq &~ \frac{u(0)^2}{[u(0)+\mass]^2}\frac{\Tilde{\stable}_\tok}{\E e^{\frac{\varphi}{u(0)+\mass}}} \frac{\Var(\varphi)}{u+\mass}\\
\geq &~ \frac{u(0)^2}{[u(0)+\mass]^2}\frac{\Tilde{\stable}_\tok}{\E e^{\frac{\varphi}{u(0)+\mass}}} \cdot \frac{u(0)}{u(0)+\mass} \cdot \frac{\Var(\varphi)}{u}\\
= &~  \frac{\Tilde{C}}{u}.
\end{align*}
Therefore, $u \geq \sqrt{\Tilde{C} t + \Tilde{C}_2}$. In conclusion, we have that
\[
y_{\tok} = \log u = \frac{1}{2}\log t + r_{\tok},
\]
with $r_{\tok}$ bounded. This proves Theorem~\ref{appthm:main-2}.
\end{proof}
\subsection{Proof of Theorem~\ref{thm:main}(b): Value-state drains}\label{appsec:proof-main-2}
\begin{theorem}[Restatement of Theorem~\ref{thm:main}(b)]\label{appthm:main-3}
Assume $\xi_\tok \ge 0$ for any $\tok$, $\stable_\tok > 0$ for any $\tok\in\vocab$, and $\{ \mass_i \cdot \xi_i \}_{i \in \vocab}$ are not all equal. Fix $\vecsink = \sink \cdot \bm{1}$ for a constant $\sink$, define $\bar{\ivalue}(0) = V^\inv[\sum_{\tok} \ivalue_\tok(0)]$ and $\meanvalue=\vocabsize^\inv[\sum_{\tok} \mass_\tok \xi_\tok]$. Consider the gradient flow of the loss function $\loss(\vecsink, \vecvalue)$ over $\vecvalue$ for fixed $\vecsink$, i.e., $\dot{\vecvalue}(t) = - \nabla_{\vecvalue} \loss(\vecsink, \vecvalue(t))$. As $t \to \infty$, we have
    \begin{equation}\vecvalue(t) \to \vecvalue^\star = [\bar\ivalue(0)+e^{-\sink} \meanvalue] \cdot \bm{1} - e^{-\sink} \cdot \bm{\mass}\circ \bm{\xi}.\end{equation}
\end{theorem}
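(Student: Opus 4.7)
The plan is to reduce the dynamics to a convex optimization on an invariant affine hyperplane, on which the minimizer turns out to be unique. First, specializing Proposition~\ref{appthm:gradients} to $\vecsink = \sink \bm{1}$ gives $\dot{\ivalue}_\tok = \frac{e^\sink}{e^\sink + \mass} \sum_\tokk \Tilde{\stable}_\tokk [\transition_{\tokk\tok} - \ppred_{\tokk\tok}]$. Summing over $\tok$ and using $\sum_\tok \transition_{\tokk\tok} = \sum_\tok \ppred_{\tokk\tok} = 1$ yields the conservation law $\tfrac{d}{dt}\sum_\tok \ivalue_\tok(t) = 0$, so the trajectory is confined to the affine hyperplane $\mathcal{H} = \{\vecvalue : V^{-1}\sum_\tok \ivalue_\tok = \bar{\ivalue}(0)\}$.

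Next, I would establish that $\loss(\sink \bm{1}, \cdot)$ is convex in $\vecvalue$ and invariant under the shift $\vecvalue \mapsto \vecvalue + c \bm{1}$. The invariance is immediate: replacing $\vecvalue$ by $\vecvalue + c \bm{1}$ multiplies both numerator and denominator of every $\ppred_{\tok\toki}$ by the common factor $\exp[e^\sink c/(e^\sink + \mass)]$, leaving $\ppred$ and hence $\loss$ unchanged (this gives a second derivation of the conservation law). Convexity follows by redoing the Hessian computation of Theorem~\ref{appthm:main-1} at a general point, which gives $\nabla^2_{\vecvalue} \loss = \frac{e^{2\sink}}{(e^\sink + \mass)^2} \sum_\tokk \Tilde{\stable}_\tokk \gppred^\Ppred_\tokk \succeq 0$ by Lemma~\ref{appthm:positive-definite}. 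To identify global minimizers, I would impose $\dot{\ivalue}_\tok = 0$ for every $\tok$; since $\Tilde{\stable}_\tokk > 0$ exactly on $\vocab \setminus \cT$, this forces $\ppred_{\tokk\toki} = \transition_{\tokk\toki}$ for all $\tokk \in \vocab \setminus \cT$ and all $\toki$. The softmax form of $\ppred_{\tokk\toki}$ then requires $e^\sink \ivalue_\toki + \mass_\toki \xi_\toki$ to be independent of $\toki$, i.e., $\vecvalue = c \bm{1} - e^{-\sink} \bm{\mass} \circ \bm{\xi}$. Intersecting this one-parameter family with $\mathcal{H}$ pins down $c = \bar{\ivalue}(0) + e^{-\sink} \meanvalue$, giving $\vecvalue = \vecvalue^\star$.

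The main obstacle is upgrading uniqueness of the minimizer on $\mathcal{H}$ to actual trajectory convergence. The cleanest route is to note that $\loss(\vecvalue(t))$ is nonincreasing along the flow and bounded below, and that $-\nabla_\vecvalue \loss \perp \bm{1}$ by the shift-invariance; hence the trajectory stays in $\mathcal{H}$ and in a single sublevel set of $\loss$. I would then verify coercivity of $\loss$ restricted to $\mathcal{H}$ from the explicit softmax form (noting that as any coordinate of $\vecvalue$ drifts to $\pm \infty$ while $\sum_\tok \ivalue_\tok$ is fixed, $\loss$ blows up), so the trajectory stays bounded, and apply a standard convergence result for gradient flow of smooth convex functions (LaSalle's invariance principle: every $\omega$-limit point is a critical point, hence equals $\vecvalue^\star$ by Step~2). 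The technical subtlety is to obtain \emph{strict} positive-definiteness of $\sum_\tokk \Tilde{\stable}_\tokk \gppred^\Ppred_\tokk$ on $\bm{1}^\perp$ uniformly along the trajectory; this follows by a diagonal-dominance argument analogous to Lemmas~\ref{appthm:min-eigenvalue} and~\ref{appthm:min-eigenvalue-q}, using that $\ppred_{\tokk\toki}$ stays bounded away from $0$ on the support of $\transition_{\tokk\toki}$ throughout the bounded trajectory.
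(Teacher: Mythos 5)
Your proposal is correct and mirrors the paper's overall strategy: conservation law confines the flow to the hyperplane $\mathcal{H}$, the loss is convex in $\vecvalue$ with null direction $\bm{1}$ (shift-invariance), the unique minimizer on $\mathcal{H}$ is $\vecvalue^\star$, and the trajectory converges. The structural ingredients — $\sum_\tok\dot\ivalue_\tok=0$ from $\sum_\tok\transition_{\tokk\tok}=\sum_\tok\ppred_{\tokk\tok}=1$, the Hessian $\frac{e^{2\sink}}{(e^\sink+\mass)^2}\sum_\tokk\Tilde{\stable}_\tokk\gppred^\Ppred_\tokk\succeq 0$, and the strict positivity on $\bm{1}^\perp$ along the trajectory via a diagonal-dominance argument — are exactly the ones the paper exploits (Lemmas~\ref{appthm:positive-definite}, \ref{appthm:min-eigenvalue}, \ref{appthm:min-eigenvalue-q}). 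The one genuine divergence is the closing convergence argument: the paper first shows $\loss(\vecvalue(t))\to\loss(\vecvalue^\star)$, converts small loss suboptimality into small $\|\transition-\ppred\|_{\TV}$ via KL and Pinsker, and then invokes Lemma~\ref{appthm:min-eigenvalue-q} to obtain a local Polyak--{\L}ojasiewicz condition; you instead establish coercivity of $\loss$ on $\mathcal{H}$ and appeal to LaSalle's invariance principle, deducing that the bounded trajectory's $\omega$-limit set consists only of critical points and hence equals $\{\vecvalue^\star\}$. Both routes are standard and valid; LaSalle gives convergence with less bookkeeping, while the paper's PL route gives a quantitative handle on the rate of approach to $\vecvalue^\star$.

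Two small remarks. First, your step ``impose $\dot\ivalue_\tok=0$; this forces $\ppred_{\tokk\toki}=\transition_{\tokk\toki}$'' is compressed: the stationarity condition reads $\sum_\tokk\Tilde{\stable}_\tokk[\ppred_{\tokk\toki}-\transition_{\tokk\toki}]=0$ for each $\toki$, which does not pointwise force $\ppred=\transition$ by itself. One needs to pass through the chain ``stationary $\Rightarrow$ global minimizer (convexity) $\Rightarrow$ cross-entropy equals entropy $\Rightarrow$ $\ppred_{\tokk\cdot}=\transition_{\tokk\cdot}$ on the support of $\Tilde{\stable}$ (Gibbs' inequality)''; you have the ingredients but the logic should be made explicit. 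Second, the coercivity step is not strictly necessary: for gradient flow of a convex function with a known minimizer $\vecvalue^\star$, $\tfrac{d}{dt}\|\vecvalue(t)-\vecvalue^\star\|_2^2=-2\langle\nabla\loss,\vecvalue(t)-\vecvalue^\star\rangle\le -2[\loss(\vecvalue(t))-\loss(\vecvalue^\star)]\le 0$, so boundedness is automatic; coercivity is a fine but slightly heavier substitute.
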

\begin{proof}[Proof of  Theorem~\ref{appthm:main-3}]
We plug $\vecvalue^\star$ into the $\loss$ and get that $\loss(\vecsink, \vecvalue^\star) = \sum_{\tok=1}^\vocabsize \Tilde{\stable}_{\tok} \sum_{\tokk=1}^\vocabsize \transition_{\tok\tokk} \log \transition_{\tok\tokk} $. Computing $\nabla^2_{\vecvalue} \loss(\vecsink,\vecvalue)$, we get that
\begin{equation*}
\nabla^2_{\vecvalue} \loss(\vecsink,\vecvalue) = \sum_{\tokk=1}^\vocabsize \Tilde{\stable}_\tokk \gppred^\Ppred_\tokk,
\end{equation*}
where $\gppred^\Ppred_\tokk$ is defined in Eq.~\eqref{appeqn:g-probs}.
Lemma~\ref{appthm:positive-definite} indicates that it is positive semi-definite. Therefore, we have that
\[
\loss(\vecsink, \vecvalue(t))\to \loss(\vecsink, \vecvalue^\star)~~\text{as }t\to \infty.
\]
We choose $\delta$ as defined in Eq.\eqref{appeqn:error-q}. When $t$ is sufficiently large, we have that
\[
\loss(\vecsink, \vecvalue(t))\leq \loss(\vecsink, \vecvalue^\star) + \frac{1}{\min_{\tokk\in\vocab\setminus\cT}\Tilde{\stable}_\tokk}\cdot 2 \delta^2. 
\]
The convexity further implies that for any $\Tilde{\vecvalue}=\theta \vecvalue(t) + (1-\theta)\vecvalue^\star$ ($\theta\in(0,1)$), we have that 
\[
\loss(\vecsink, \Tilde{\vecvalue})\leq \loss(\vecsink, \vecvalue^\star) + \frac{1}{\min_{\tokk\in\vocab\setminus\cT}\Tilde{\stable}_\tokk}\cdot 2 \delta^2. 
\]
Denote $\Tilde{\bm{\ppred}}_\tok={\bm{\ppred}}_\tok(\vecsink, \Tilde{\vecvalue})$ as $\bm{\ppred}$ evaluated on $(\vecsink,\Tilde{\vecvalue})$. Using the definition of the KL-divergence in Eq.~\eqref{appeqn:kl-divergence}, we have that
\[
\sum_{\tok=1}^\vocabsize \Tilde{\stable}_\tok KL(\bm{\transition}_\tok~||~\Tilde{\bm{\ppred}}_\tok) = \loss(\vecsink, \vecvalue(t))- \loss(\vecsink, \vecvalue^\star) \leq \frac{1}{\min_{\tokk\in\vocab\setminus\cT} \Tilde{\stable}_\tokk}\cdot 2 \delta^2.
\]
This further implies that $KL(\bm{\transition}_\tok~||~\Tilde{\bm{\ppred}}_\tok)\leq 2 \delta^2$ for any $\tok$. Using Pinsker's inequality, we get that
\[
\sum_{\tokk=1}^\vocabsize |\transition_{\tok\tokk}-\Tilde\ppred_{\tok\tokk}| = \| \bm{\transition}_\tok - \bm{\ppred}_\tok \|_{\text{TV}} \leq \sqrt{KL(\bm{\transition}_\tok~||~\Tilde{\bm{\ppred}}_\tok)/2} \leq \delta.
\]
Therefore, $\max_{\tok,\tokk}\abs{\transition_{\tok\tokk}-\Tilde\ppred_{\tok\tokk}}\leq \delta$. 
Lemma~\ref{appthm:g-ppred} gives that  $\sum_{\tok=1}^\vocabsize \dot{\ivalue}_\tok(t)=0$. Therefore, $\sum_{\tok=1}^\vocabsize \ivalue_{\tok}(t)/\vocabsize=\bar{\ivalue}(0)$. The choice of $\vecvalue^\star$ guarantees that $\bar{\ivalue}^\star=\bar{\ivalue}(0)$. This shows that $\vecvalue(t)-\vecvalue^\star \perp \bm{1}$. Using Lemma \ref{appthm:min-eigenvalue-q}, there exists $\omega>0$ such that
\[
(\vecvalue(t)-\vecvalue^\star)^\top \nabla^2_{\vecvalue} \loss(\vecsink,\vecvalue) (\vecvalue(t)-\vecvalue^\star) = (\vecvalue(t)-\vecvalue^\star)^\top \Big[\sum_{\tokk=1}^\vocabsize \Tilde{\stable}_\tokk \gppred^\Ppred_\tokk \Big] (\vecvalue(t)-\vecvalue^\star) \geq \frac{\omega}{2} \|\vecvalue(t)-\vecvalue^\star\|_2^2.
\]
Using Taylor expansion, we have that
\begin{align*}
\loss(\vecsink,\vecvalue^\star)-\loss(\vecsink,\vecvalue(t)) = &~ -\nabla_\beta \loss(\vecsink,\vecvalue(t)) (\vecvalue(t)-\vecvalue^\star) + \frac{1}{2}(\vecvalue(t)-\vecvalue^\star)^\top \nabla^2_{\vecvalue} \loss(\vecsink,\Tilde{\vecvalue}) (\vecvalue(t)-\vecvalue^\star)\\
\geq &~ -\nabla_\beta \loss(\vecsink,\vecvalue(t)) (\vecvalue(t)-\vecvalue^\star) + \frac{\omega}{2} \|\vecvalue(t)-\vecvalue^\star\|_2^2\\
\geq &~ - \frac{1}{2\omega} \| \nabla_\beta \loss(\vecsink,\vecvalue(t))\|_2^2.
\end{align*}
This shows that $\loss(\vecsink,\vecvalue(t))$ satisfies the Polyak-Lojasiewicz (PL) condition \citep{karimi2016linear} when $t$ is sufficiently large. 
This proves Theorem~\ref{appthm:main-3}. 
\end{proof}

\clearpage
\section{The Linear Growth of the Residual States}
\label{appsec:res-peak}

\subsection{The minimal model structure to recapitulate residual state peak}
\label{appsec:mini-res-peak}
We give more details for the claim in Section~\ref{sec:res-peak}, stating that ``The residual-state peaks require a three-layer structure.''
Figure~\ref{appfigure:massive_minimal} presents the difference of residual norms between the \bos~token and others ($\|\res_{\bos}\|-\E_{\tok\neq\bos}[\|\res_{\tok}\|]$), with different combinations of model structures. The $3\times \TF$ and $2\times \TF+\mlp$ are the architectures that demonstrate clear evidence of residual state peaks. 

\begin{figure}[h]
    \centering
    \includegraphics[width=0.3\linewidth]{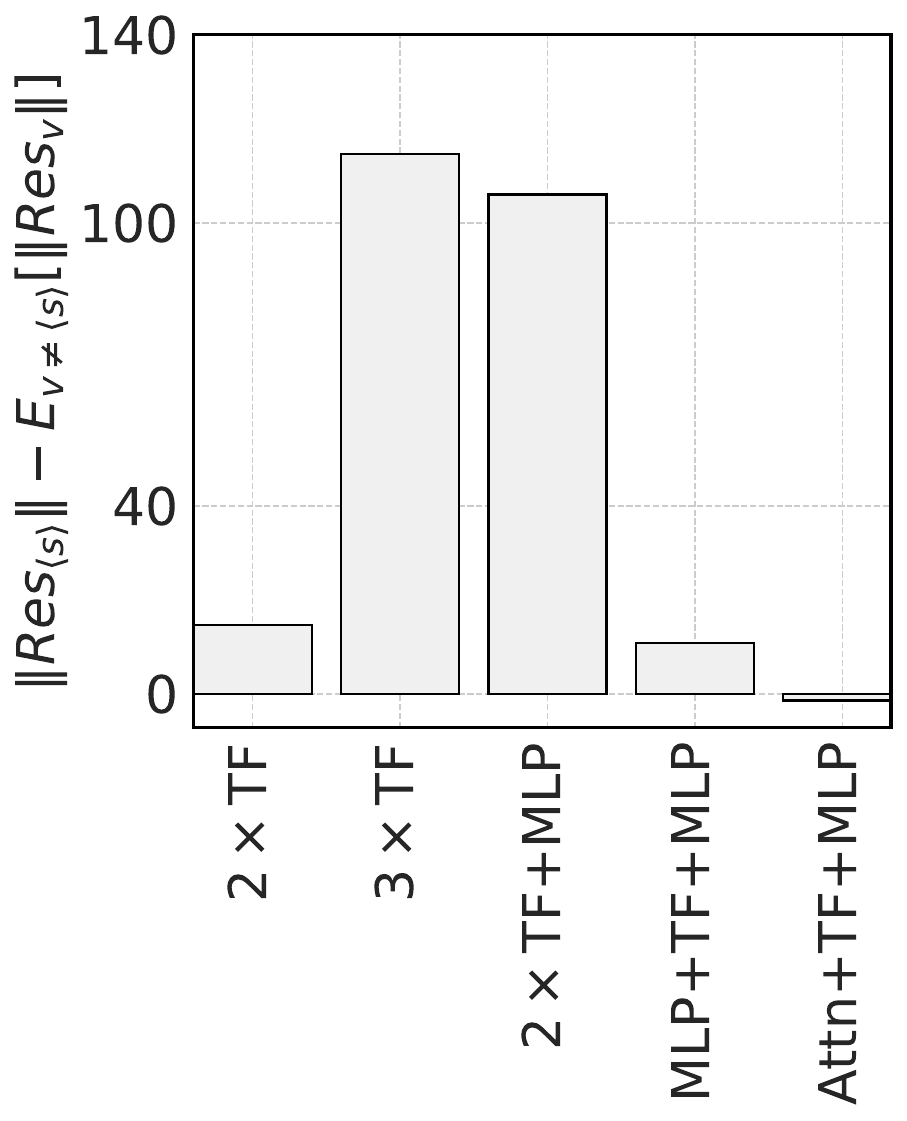}
    \caption{\small Minimal structures to elicit residual state peaks. We use $A+B+C$ to indicate the model with structure $A$, $B$, $C$ in layers 0, 1, and 2, respectively.}
    \label{appfigure:massive_minimal}
\end{figure}

\subsection{Additional plots for the three-layer transformer trained on BB task}
\label{appsec:three-layer-tf}
We provide more results to the three layer transformer model trained on the BB task. They provide supporting evidence for the claim in Section~\ref{sec:res-peak}, stating that “Massive residual states amplify attention sinks and value-state drains in later layers.”
Figures \ref{appfigure:massive-attn}, \ref{appfigure:massive-value-norm}, and \ref{appfigure:massive-norm} show the extreme token phenomena in a three-layer transformer. The residual state peaks show different phenomena from those in LLMs, with the last layer output increasing the residual norms of non-\bos~tokens. Figure \ref{figure:extreme-token}
 demonstrates that the residual state norms of \bos~drop match the magnitudes of other tokens at the last layer. 
\begin{figure}[h]
  \centering
  \begin{minipage}{0.3\textwidth}
      \centering
      \subcaption{\small Layer 0}
      \vspace{-.2em}
      \includegraphics[width=\linewidth]{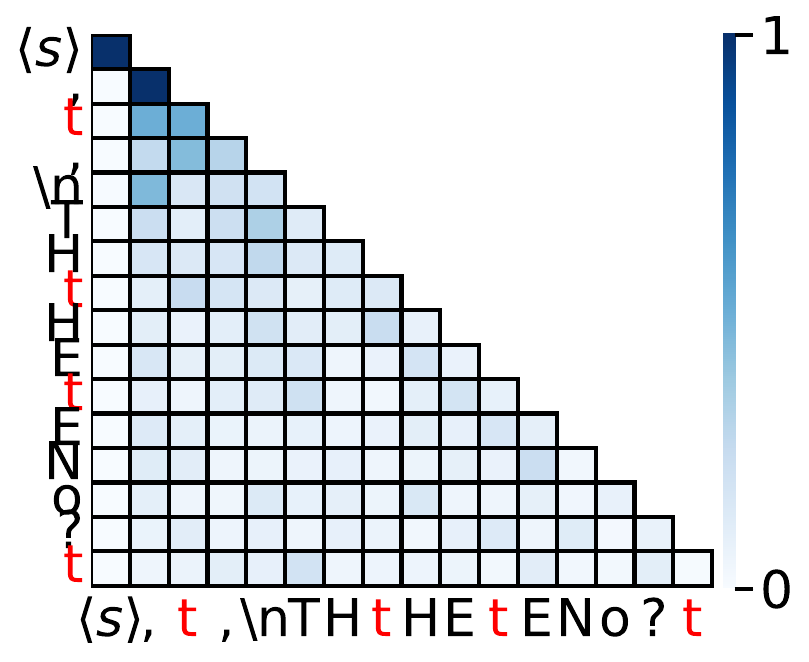}
  \end{minipage}
  \begin{minipage}{0.3\textwidth}
      \centering
      \subcaption{\small Layer 1}
      \vspace{-.2em}
      \includegraphics[width=\linewidth]{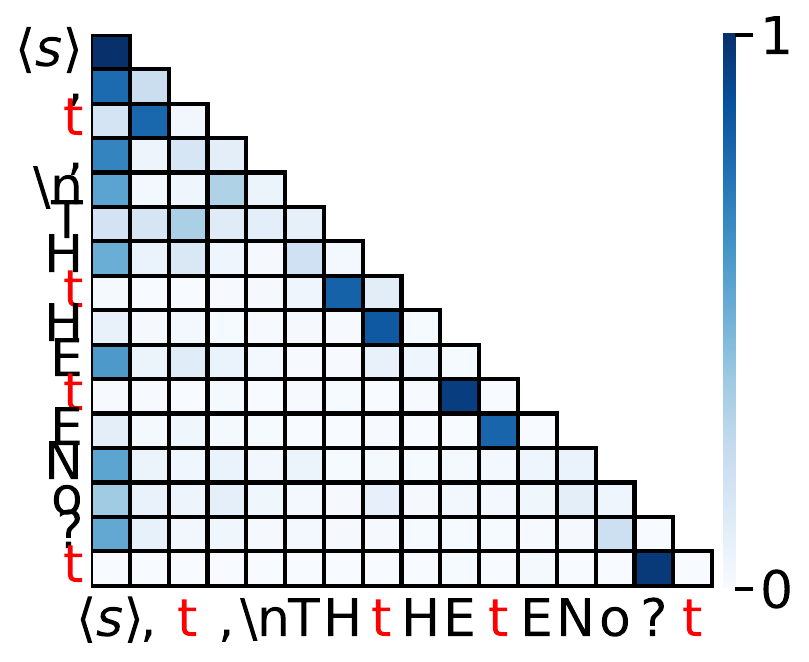}
  \end{minipage}
  \begin{minipage}{0.3\textwidth}
      \centering
      \subcaption{\small Layer 2}
      \vspace{-.2em}
      \includegraphics[width=\linewidth]{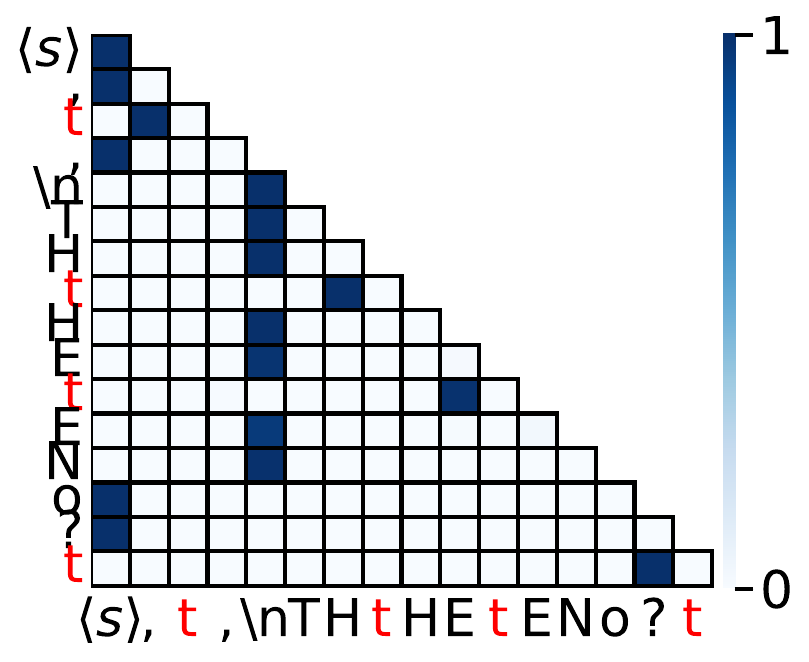}
  \end{minipage}
  \caption{\small Attention weight patterns of three-layer transformer trained on the BB task}
  \label{appfigure:massive-attn}
  \vspace{-1em}
\end{figure}

\begin{figure}[h]
  \centering
  \begin{minipage}{0.3\textwidth}
      \centering
      \subcaption{\small Layer 0}
      \vspace{-.2em}
      \includegraphics[width=\linewidth]{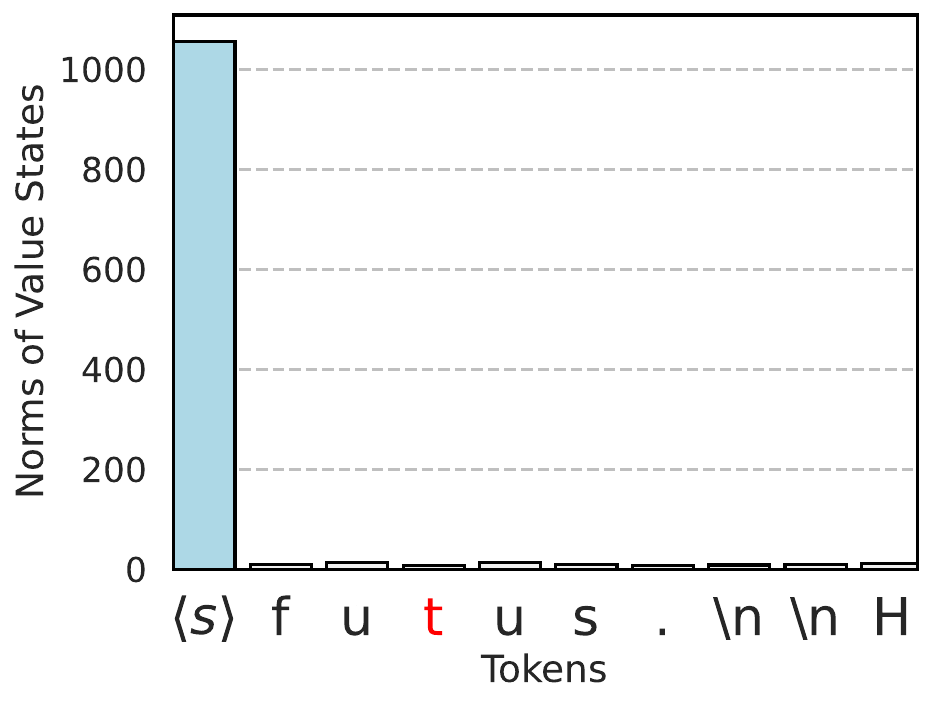}
  \end{minipage}
  \begin{minipage}{0.3\textwidth}
      \centering
      \subcaption{\small Layer 1}
      \vspace{-.2em}
      \includegraphics[width=\linewidth]{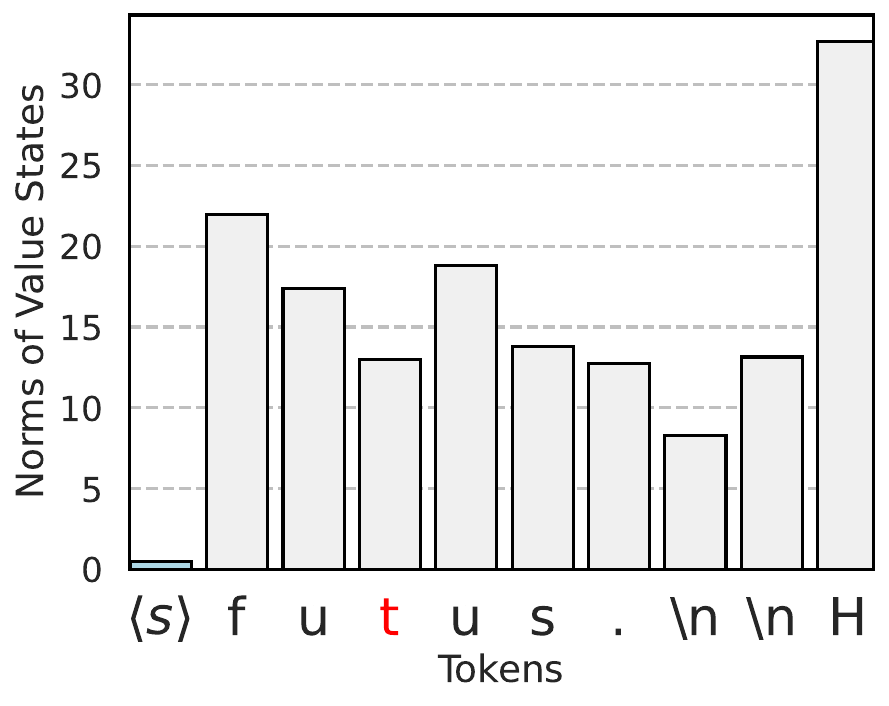}
  \end{minipage}
  \begin{minipage}{0.3\textwidth}
      \centering
      \subcaption{\small Layer 2}
      \vspace{-.2em}
      \includegraphics[width=\linewidth]{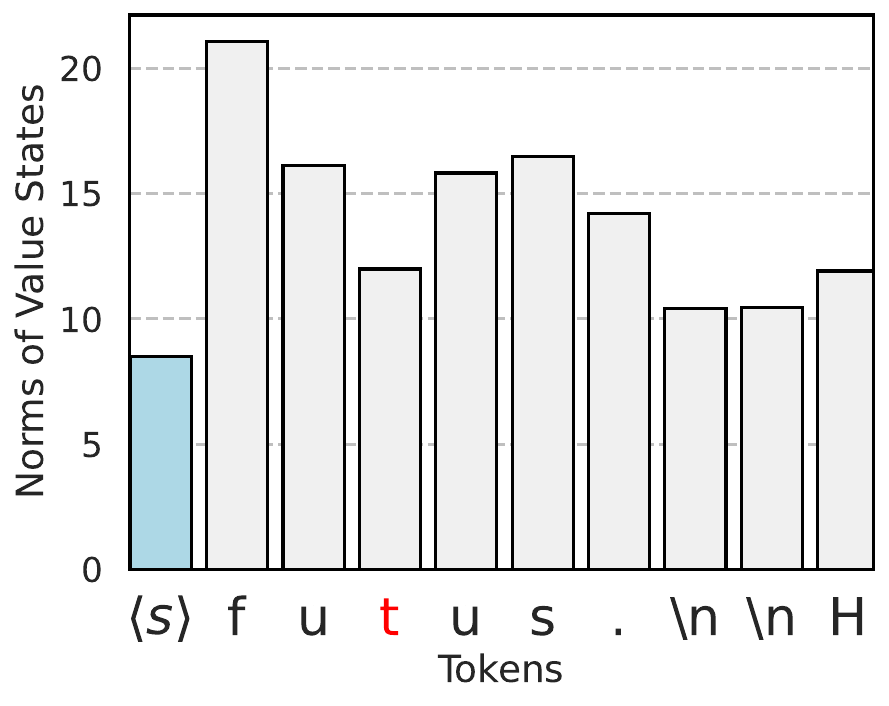}
  \end{minipage}
  \caption{\small Value state norms of three-layer transformer trained on the BB task}
  \label{appfigure:massive-value-norm}
  \vspace{-1em}
\end{figure}

\begin{figure}[h]
  \centering
  \begin{minipage}{0.3\textwidth}
      \centering
      \subcaption{\small Layer 0}
      \vspace{-.2em}
      \includegraphics[width=\linewidth]{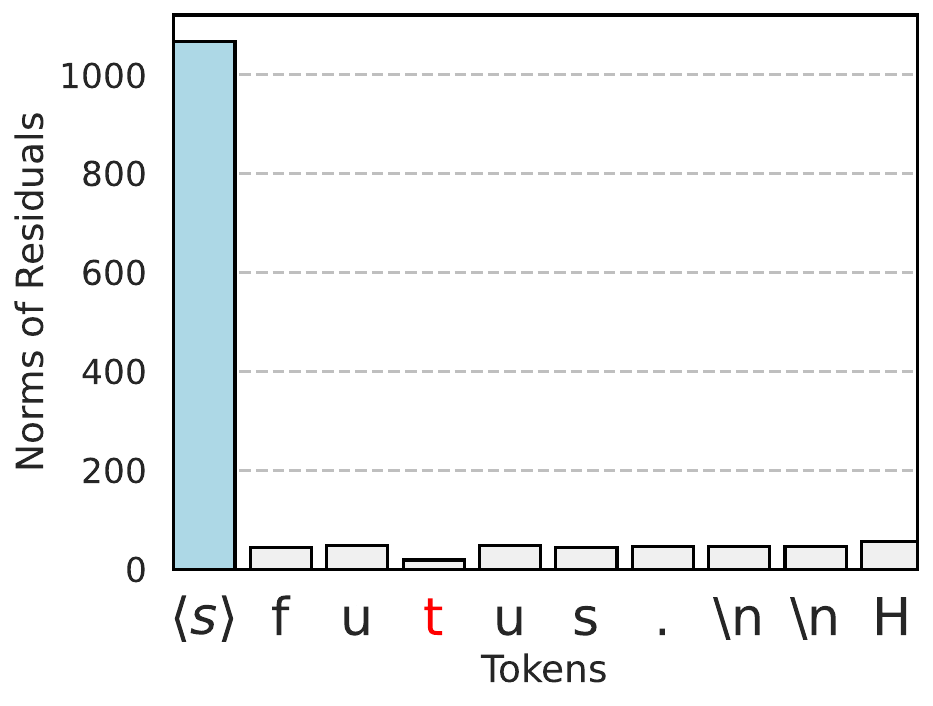}
  \end{minipage}
  \begin{minipage}{0.3\textwidth}
      \centering
      \subcaption{\small Layer 1}
      \vspace{-.2em}
      \includegraphics[width=\linewidth]{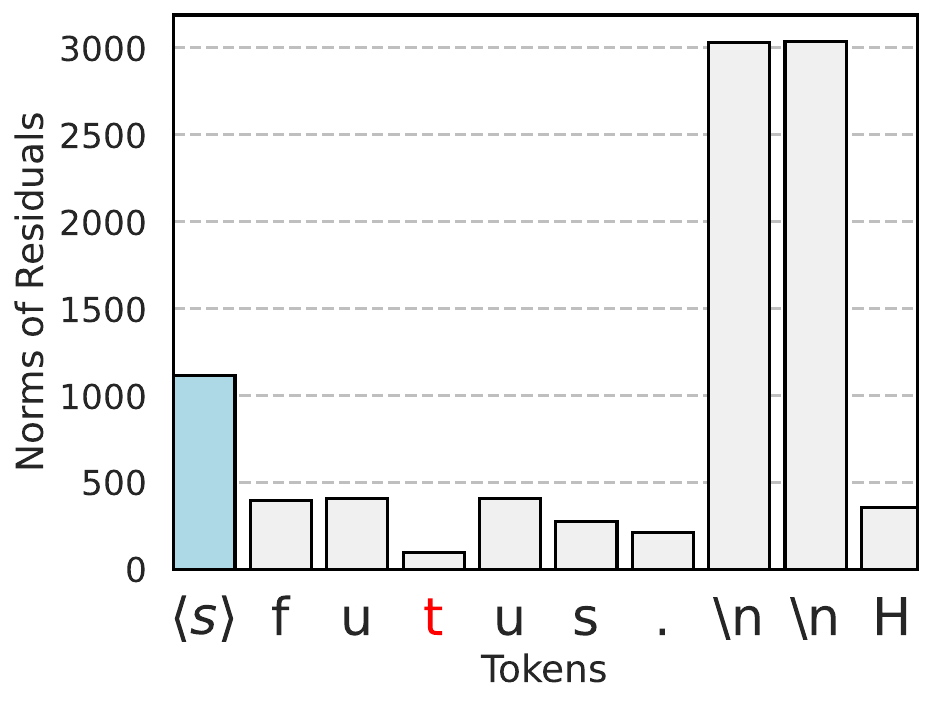}
  \end{minipage}
  \begin{minipage}{0.3\textwidth}
      \centering
      \subcaption{\small Layer 2}
      \vspace{-.2em}
      \includegraphics[width=\linewidth]{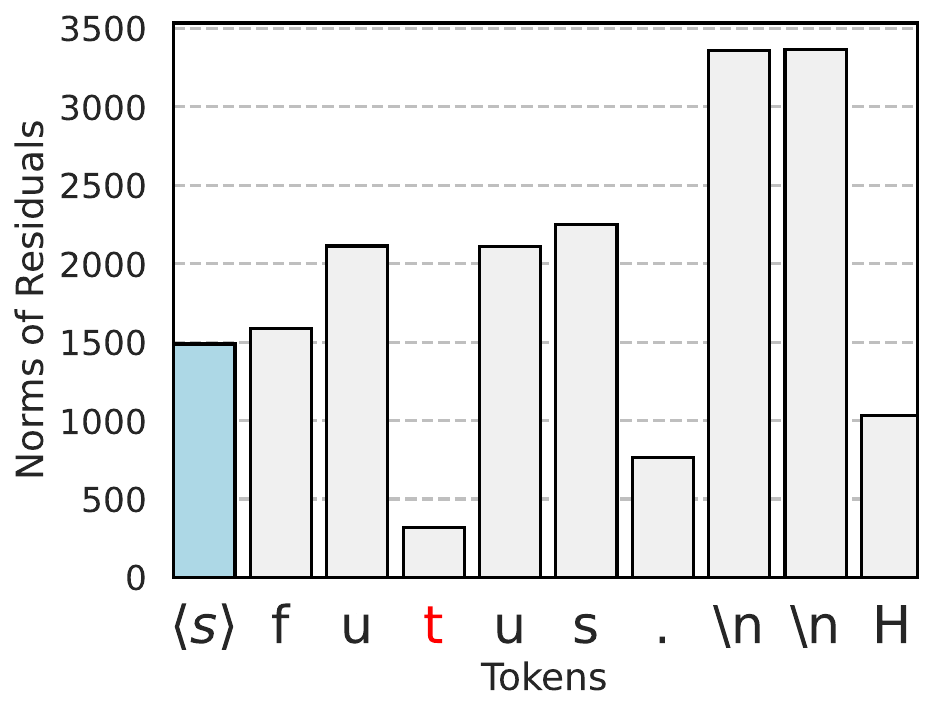}
  \end{minipage}
  \caption{\small Residual state norms of three-layer transformer trained on the BB task}
  \label{appfigure:massive-norm}
  \vspace{-1em}
\end{figure}

\subsection{Potential mechanism for linear growth of the residual state peak in multi-layer models}
\label{appsec:theory-for-res}
We give more details for the claim in Section~\ref{sec:res-peak}, stating that ``The ReLU attention and changing Adam to SGD eliminates the residual state peaks'' We first state Claim~\ref{sec:res-peak}.
\begin{claim}[Potential mechanism for the formation of residual-state peaks]\label{claim:res-peak}
In the training dynamic of a multi-layer transformer, if the mutual reinforcement mechanism (cf.\ Claim~\ref{claim:mutual-reinforcement}) occurs in upper layers:
\begin{enumerate}
    \item The gradients of $\res_\bos$ have the same direction (aligning with the null space of value matrices in upper layers and the $\key_\bos$) along the training dynamics.
    \item The layer-norm operations cause the fast decay of the magnitude of the gradients.
    \item Adam induces diminishing gradients to be constant updates, leading to the linear growth for the norm of the residual state of the extreme token.
\end{enumerate}
\end{claim}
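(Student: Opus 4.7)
The plan is to establish each of the three components of Claim \ref{claim:res-peak} in sequence, using as inputs the mutual reinforcement mechanism and the stable-phase description from Theorem \ref{thm:main}(c), lifted from a single layer to a multi-layer setting. First, I would analyze the structure of $\nabla_{\res_\bos}\loss$ taken at the output of layer $0$, decomposing the backpropagated gradient through each upper attention/MLP block into three pathways: the value pathway ($\bV^\top$), the key pathway ($\bK^\top$), and the query pathway ($\bQ^\top$). In the dormant phase, the value-state drain forces $\bV \cdot \LN(\res_\bos) \approx 0$ in every upper sink head, so $\LN(\res_\bos)$ sits (approximately) in the null space of each $\bV$; consequently the value pathway contributes negligibly. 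The remaining contribution is dominated by the key pathway, whose direction is pinned by the sink-logits concentration condition $\query_{\cdot}^\top \key_\bos \approx \text{const}$ across all non-extreme queries: differentiating this condition shows that the gradient signal on $\res_\bos$ points along a fixed direction determined by the averaged non-extreme queries pushed through $\bK^\top$. This establishes part (1).

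Second, I would quantify the shrinkage of gradient magnitudes induced by pre-normalization. Each upper block accesses $\res_\bos$ only through $\LN(\res_\bos)$, whose Jacobian is $(I - \hat{\res}_\bos \hat{\res}_\bos^\top)/\|\res_\bos\|$ with $\hat{\res}_\bos = \res_\bos/\|\res_\bos\|$. Chaining this Jacobian through the upper blocks and combining with the null-space alignment from step one, I would bound $\|\nabla_{\res_\bos}\loss\| = \mathcal{O}(1/\|\res_\bos\|)$, while the direction remains stable up to the tangential projection onto the sphere. This yields part (2): the magnitude of the gradient decays as $\res_\bos$ grows, but its direction is essentially frozen.

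Third, I would combine these two ingredients with the Adam update rule. Because the per-coordinate gradient on $\res_\bos$ is (i) of consistent sign across consecutive steps and (ii) vanishing at rate $\Theta(1/\|\res_\bos\|)$, Adam's first- and second-moment estimates $m_t$ and $v_t$ both track this same decaying magnitude, and the normalized update $m_t/(\sqrt{v_t}+\epsilon)$ stabilizes to a coordinate-wise quantity of order one (the familiar ``sign-SGD-like'' behavior of Adam in the small-gradient regime). Hence the effective per-step update to $\res_\bos$ is $\Theta(\eta)$ in a fixed direction, giving the linear growth $\|\res_\bos(t)\| = \Theta(\eta t)$. By contrast, a plain SGD update would inherit the $1/\|\res_\bos\|$ decay directly, yielding only $\|\res_\bos(t)\| = \mathcal{O}(\sqrt{t})$ growth or early saturation, consistent with the empirical curve in Figure \ref{fig:sgd}. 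A parallel argument handles ReLU attention: without the SoftMax-driven mutual reinforcement, the key-pathway direction is not sharpened and the gradient lacks the consistent sign required for Adam's amplification.

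The principal obstacle is step one: rigorously justifying that the gradient on $\res_\bos$ is (approximately) rank-one and of constant direction throughout training. This requires more than the snapshot statements of Theorem \ref{thm:main}; one must show that the null-space alignment $\bV\,\LN(\res_\bos)\approx 0$ and the key-direction $\bK^\top(\text{query aggregate})$ remain nearly stationary under the coupled dynamics across layers. A complete treatment would need either an adiabatic argument (the upper-layer parameters equilibrate on a faster timescale than $\|\res_\bos\|$ grows) or a coupled multi-layer ODE analysis in the spirit of the proofs of Theorem \ref{thm:main}, which introduces substantial new technical difficulties. This is why the statement is presented as a claim supported by the experiments in Section \ref{sec:res-peak} rather than as a theorem.
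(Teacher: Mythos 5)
Your proposal takes a genuinely different route from the paper's treatment, which lives in Appendix B.3 and consists of Proposition~\ref{appthm:massive} plus a numerical simulation. The paper does not attempt a first-principles derivation of parts (1) and (2) of the claim: it simply \emph{postulates} a rank-one parameterization $\res_{\bos}=m\cdot\bm{\eta}+\bm{\eps}$ with a fixed direction $\bm{\eta}$ (the direction of $\key_\bos$) and assumes the attention logit takes the functional form $\sink_\tok=\tilde\sink_\tok\,\bm{\eta}^\top\LN(\res_\bos)=\tilde\sink_\tok\,\frac{m+\rho}{\sqrt{m^2+2m\rho+1}}$. Differentiating this scalar map and coupling it to the one-layer ODE of Theorem~\ref{thm:main} yields the decay rate $\dot m(t)=O\bigl(\tfrac{\log t}{\sqrt t\,m^3}\bigr)$, after which Adam's linear-growth behavior is established \emph{by simulation} rather than proof. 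By contrast, you try to \emph{derive} the frozen direction (part 1) from a circuit decomposition of the backpropagated gradient into value, key, and query pathways, using the value-state drain to kill the $\bV^\top$ pathway and sink-logits concentration to pin the $\bK^\top$ pathway; this is a more ambitious reconstruction of the modeling assumption the paper just asserts, and you are right to flag it as the principal gap.

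The substantive technical discrepancy is in part (2). Your bound $\|\nabla_{\res_\bos}\loss\|=O(1/\|\res_\bos\|)$, obtained from a single LayerNorm Jacobian $(I-\hat{\res}_\bos\hat{\res}_\bos^\top)/\|\res_\bos\|$, is coarser than what the paper obtains. The paper's cubic factor $1/m^3$ arises because, precisely \emph{because} the gradient direction $\bm{\eta}$ is frozen, $\hat{\res}_\bos\to\bm{\eta}$ as $m\to\infty$, so the upstream gradient is asymptotically radial and the LN projector $(I-\hat{\res}_\bos\hat{\res}_\bos^\top)$ kills another factor of $O(1/m)$ on top of the $1/\|\res_\bos\|$ prefactor; and there is an additional explicit time-decay $\log t/\sqrt t$ inherited from $\dot\sink_\tok$ in Theorem~\ref{thm:main}(a). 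Your $O(1/m)$ estimate ignores the near-radial alignment that your own step (1) establishes, which is internally inconsistent: the more firmly the direction is frozen, the smaller the tangential residue, and the faster the decay. This matters for your SGD comparison: $O(1/m)$ decay predicts $m\sim\sqrt t$ under SGD, whereas the paper's $O(\log t/(\sqrt t\,m^3))$ predicts near-saturation ($m\sim t^{1/8}(\log t)^{1/4}$), which is what Figure~\ref{fig:sgd} actually shows. For part (3), both you and the paper invoke Adam's sign-SGD-like behavior on decaying consistent-sign gradients; the paper does not prove it but confirms it numerically by integrating the derived ODE under Adam and SGD, so your heuristic is essentially aligned there.
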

To support the claim, we use the simplified model in Section~\ref{sec:bb_task}, including the residual state norm.
Denote the layer-norm operation as $\lnm$.
Heuristically, we can split the residual state $\res_{\bos}$ to a summation of two directions.
\[
\res_{\bos} = m \cdot \bm{\eta} + \bm{\eps},
\]
where $\bm{\eta},\bm{\eps} \in \R^\vocabsize$ with $\|\bm{\eta}\|_2=\|\bm{\eps}\|_2 = 1$, and $\bm{\eta}^\top \bm{\eps} = \rho>0$. The $\bm{\eta}$ corresponds to the direction of $\key_\bos$ in the original transformer, and $\bm{\eps}$ corresponds to other directions.
Assume that the attention logit from the token $\tok$ to the \bos~token in layer 1 is given by 
\begin{equation}\label{eqn:logits_in_residual_massive} \text{logit}_{\tok,\bos}= \sink_\tok = \Tilde{\sink}_\tok \bm{\eta}^\top \lnm(\res_{\bos})=\Tilde{\sink}_\tok \cdot \frac{m+\rho}{\sqrt{m^2+2m\rho+1}}.\end{equation}
We assume that the scalars $m$ and $\Tilde{\sink}$ are trainable, quantifying the norm of the residual states and magnitude of attention sinks. In the loss function $\loss_\tok$ as defined in Eq.~\eqref{eqn:loss_single}, we replace $\sink_\tok$ by the expression as in Eq.~(\ref{eqn:logits_in_residual_massive}), so that the loss function becomes a function of $(\Tilde{\sink}_\tok, \vecvalue, m)$, denoted as
\[
\wt{\loss}_\tok(\Tilde{\sink}_\tok, \vecvalue, m) = \loss_\tok(\sink_\tok, \vecvalue),
\]
We then consider the total loss as the average of the losses on each non-trigger token, weighted by its proportion in the stable distribution $\{\pi_v\}_{v \in \vocab}$, given by
\begin{equation}\label{appeqn:res_total_loss}
\wt{\loss}(\Tilde{\vecsink}, \vecvalue, m) = \sum_{\tok \in \vocab \setminus \cT} \stable_\tok \cdot \wt{\loss}_\tok(\Tilde{\sink}_\tok, \vecvalue, m).
\end{equation}
\begin{proposition}\label{appthm:massive}
Assume $\xi_\tok\geq 0$ for any $\tok$, $\set{W_k\ivalue_k}_{k\in\vocab}$ are not all equal, and $\rho>0$. Fix $\vecvalue=\bm{0}$, and consider the gradient flow of $\wt{\loss}(\Tilde{\vecsink}, \vecvalue, m)$ over $\Tilde{\vecsink}$ and $m$. With any initial value $\Tilde{\sink}_\tok(0)>0$ for any $\tok$ and $m(0)>0$, we have that
\[
\dot{m}(t)=O\Big(\frac{\log t}{\sqrt{t} m^{3}}\Big).
\]
\end{proposition}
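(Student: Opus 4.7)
The plan is to reduce the analysis to the dynamics of Theorem~\ref{thm:main}(a) by showing that the coupling between $m$ and $\Tilde\vecsink$ contributes only a subleading correction. Writing $f(m) := (m+\rho)/\sqrt{m^2+2m\rho+1}$ so that $\sink_\tok = \Tilde\sink_\tok f(m)$, the chain rule applied to $\wt\loss = \sum_{\tok\in\vocab\setminus\cT}\stable_\tok\wt\loss_\tok(\Tilde\sink_\tok,\bm{0},m)$ yields
\begin{equation*}
\dot{\Tilde\sink}_\tok = -\stable_\tok f(m)\,\partial_{\sink_\tok}\loss_\tok,
\qquad
\dot m = -\sum_{\tok\in\vocab\setminus\cT}\stable_\tok\,\Tilde\sink_\tok\,f'(m)\,\partial_{\sink_\tok}\loss_\tok .
\end{equation*}
A direct computation gives $f'(m) = (1-\rho^2)/(m^2+2m\rho+1)^{3/2}$, so $f'(m) \le C_\rho/m^3$ for $m\ge m(0)>0$, and $f(m)\to 1$ as $m\to\infty$. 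Substituting $\vecvalue=\bm{0}$ into the formula of Proposition~\ref{appthm:gradients} and introducing $u_\tok := e^{\sink_\tok}$ together with the random variable $\varphi_\tok$ satisfying $\P(\varphi_\tok=\mass_\toki\xi_\toki)=\transition_{\tok\toki}$, one obtains
\begin{equation*}
-\partial_{\sink_\tok}\loss_\tok \;=\; \frac{u_\tok}{(u_\tok+\mass)^2}\cdot \frac{\Cov\bigl(\varphi_\tok,\, e^{\varphi_\tok/(u_\tok+\mass)}\bigr)}{\E\,e^{\varphi_\tok/(u_\tok+\mass)}} \;\ge\;0,
\end{equation*}
which is the same structural expression that drives Theorem~\ref{thm:main}(a). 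In particular $\dot{\Tilde\sink}_\tok>0$ and $\dot m>0$, so $m(t)\ge m(0)>0$ throughout the trajectory.

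The second step establishes the two-sided estimate $u_\tok(t)\asymp\sqrt t$ (equivalently $\sink_\tok(t)=\tfrac12\log t+O(1)$) via a bootstrap. Differentiating the identity $\sink_\tok=\Tilde\sink_\tok f(m)$ in time gives
\begin{equation*}
\dot\sink_\tok \;=\; \stable_\tok f(m)^2\bigl(-\partial_{\sink_\tok}\loss_\tok\bigr)\;+\;\Tilde\sink_\tok f'(m)\,\dot m .
\end{equation*}
The first term matches the right-hand side analyzed in the proof of Theorem~\ref{thm:main}(a), up to the bounded positive multiplier $f(m)^2\in[f(m(0))^2,1]$. Under the inductive hypothesis that $\Tilde\sink_\tok(s)\le C\log s$ and $\dot m(s)\le C\log s/(s\,m(s)^3)$ for all $s\le t$, the second contribution is $O(\log^2 t/(t\,m^6))$, which is strictly subleading relative to the $\Theta(1/t)$ main term for $m\ge m(0)$. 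The Taylor-expansion argument in the proof of Theorem~\ref{thm:main}(a) then applies (with constants adjusted by $f(m(0))^2$) and closes the bootstrap, yielding $u_\tok(t)\asymp\sqrt t$ and $\Tilde\sink_\tok(t) = \sink_\tok(t)/f(m)\le C\log t$ for $t$ sufficiently large.

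Finally, expanding $\Cov(\varphi_\tok,e^{\varphi_\tok/(u_\tok+\mass)})$ to first order in $1/(u_\tok+\mass)$ yields $|\partial_{\sink_\tok}\loss_\tok|\le C\Var(\varphi_\tok)/u_\tok^2\le C'/t$. Substituting into the expression for $\dot m$,
\begin{equation*}
\dot m \;\le\; f'(m)\cdot\Bigl(\max_\tok\Tilde\sink_\tok\Bigr)\cdot\Bigl(\max_\tok|\partial_{\sink_\tok}\loss_\tok|\Bigr)\cdot\!\!\sum_{\tok\in\vocab\setminus\cT}\!\!\stable_\tok \;\le\; \frac{C_\rho}{m^3}\cdot(C\log t)\cdot\frac{C'}{t} \;=\; O\!\Bigl(\frac{\log t}{t\,m^3}\Bigr),
\end{equation*}
which is strictly stronger than, and therefore implies, the claimed bound $\dot m = O(\log t/(\sqrt t\,m^3))$. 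The main obstacle is making the bootstrap fully rigorous: one must verify via a first-exit-time argument that the perturbation $\Tilde\sink_\tok f'(m)\dot m$ never overwhelms the leading term in $\dot\sink_\tok$ at any finite time. Since this perturbation is controlled by $1/m^6$ with $m\ge m(0)>0$ fixed, and the leading term matches that of Theorem~\ref{thm:main}(a) up to the positive constant factor $f(m(0))^2$, a uniform treatment is possible, though it requires careful quantitative tracking of the constants in the Taylor expansion from the proof of Theorem~\ref{thm:main}(a).
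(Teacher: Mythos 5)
Your proposal follows essentially the same route as the paper's proof: apply the chain rule in $(\Tilde\vecsink, m)$, use the elementary identity $f'(m) = (1-\rho^2)/(m^2+2m\rho+1)^{3/2} = O(m^{-3})$, borrow the logarithmic growth of the sink logit from the analysis of Theorem~\ref{thm:main}(a), and insert the gradient decay bound. Two things distinguish your write-up, both improvements. First, you actually track the coupling between $\Tilde\vecsink$ and $m$: the paper jumps from $\dot{\Tilde\sink}_\tok \in [\rho\dot\sink_\tok, \dot\sink_\tok]$ to the assertion $\Tilde\vecsink = \tfrac12\log t\cdot\bm{1} + \Tilde{\bm{r}}(t)$ without addressing that $\sink_\tok = \Tilde\sink_\tok f(m)$ now has the extra rate term $\Tilde\sink_\tok f'(m)\dot m$; your bootstrap showing that this term is $O(\log^2 t/(t\,m^6))$, hence subleading, closes that gap. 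Second, your bound is strictly sharper: from the proof of Theorem~\ref{thm:main}(a) one has $\dot u = O(1/u)$, hence $|\partial_{\sink_\tok}\loss_\tok| \propto \dot\sink_\tok = \dot u/u = O(1/u^2) = O(1/t)$, which with $\Tilde\sink_\tok = O(\log t)$ gives $\dot m = O(\log t/(t\,m^3))$. The paper's proof instead writes $\sum_\tok\dot\sink_\tok\Tilde\sink_\tok = O(\log t/\sqrt t)$ — apparently conflating the decay rate of $\dot u$ with that of $\dot\sink_\tok$ — and so lands on the weaker $O(\log t/(\sqrt t\,m^3))$ that appears in the statement. Since $\log t/t = o(\log t/\sqrt t)$, your bound implies the claimed one. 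In short: same strategy, but your version is more careful about the feedback through $m$ and delivers a tighter rate.
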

\begin{proof}[Proof of Proposition~\ref{appthm:massive}]
The chain rule gives that
\[
\dot{\Tilde{\sink}}_\tok(t) = \dot{\sink}_\tok \cdot \frac{m+\rho}{\sqrt{m^2+2m\rho+1}},
\]
and
\[
\dot{m}(t) = \sum_{\tok=1}^\vocabsize \Big\{\dot{\sink}_\tok \Tilde{\sink}_\tok \cdot \frac{\mathrm{d} \lnm(\res_\bos)}{\mathrm{d} t} \Big\}.
\]
With the initial values, $\dot{m}(t)\geq 0$ and $\dot{\Tilde{\sink}}_\tok(t) \geq 0$.
We have $m(t)\geq 0$ for any $t$. Hence, $$\dot{\Tilde{\sink}}_\tok \in [\rho \dot{\sink}_\tok,  \dot{\sink}_\tok].$$ 
Therefore, $\Tilde{\vecsink} = 2^\inv \log t \bm{1} + \Tilde{\bm{r}}(t)$ with $\Tilde{\bm{r}}(t)$ uniformly bounded over time. Furthermore, we have that
\begin{align*}
\dot{m}(t) & = \sum_{\tok=1}^\vocabsize \Big\{\dot{\sink}_\tok \Tilde{\sink}_\tok \cdot \frac{\mathrm{d} \lnm(\res_\bos)}{\mathrm{d} t} \Big\}\\
& = O\Big(\frac{\log t}{\sqrt{t}}\Big) \cdot \frac{1-\rho^2}{(m^2+2m\rho+1)^{3/2}}\\
& = O\Big(\frac{\log t}{\sqrt{t}m^3}\Big).
\end{align*}
This proves Proposition~\ref{appthm:massive}.
\end{proof}
We use simulation to demonstrate the effect of Adam. We train the scalar $m$ using Adam with gradient $\mathrm{d}m = \log t / [\sqrt{t}m^3]$. We set $\beta_1=0.9$, $\beta_2=0.999$, weight decay$=10^{-8}$, and the learning rate $\text{lr}=0.3$. Figure~\ref{appfigure:m_dynamics} presents the training dynamics of $m$. We observe the linear growth after a warming-up phase. In contrast, when trained by SGD with learning rate $\text{lr}=0.3$, $m$ remains small. The results match transformer models on BB-task as in Figure~\ref{fig:sgd}.

\begin{figure}[h]
    \centering
    \includegraphics[width=0.5\linewidth]{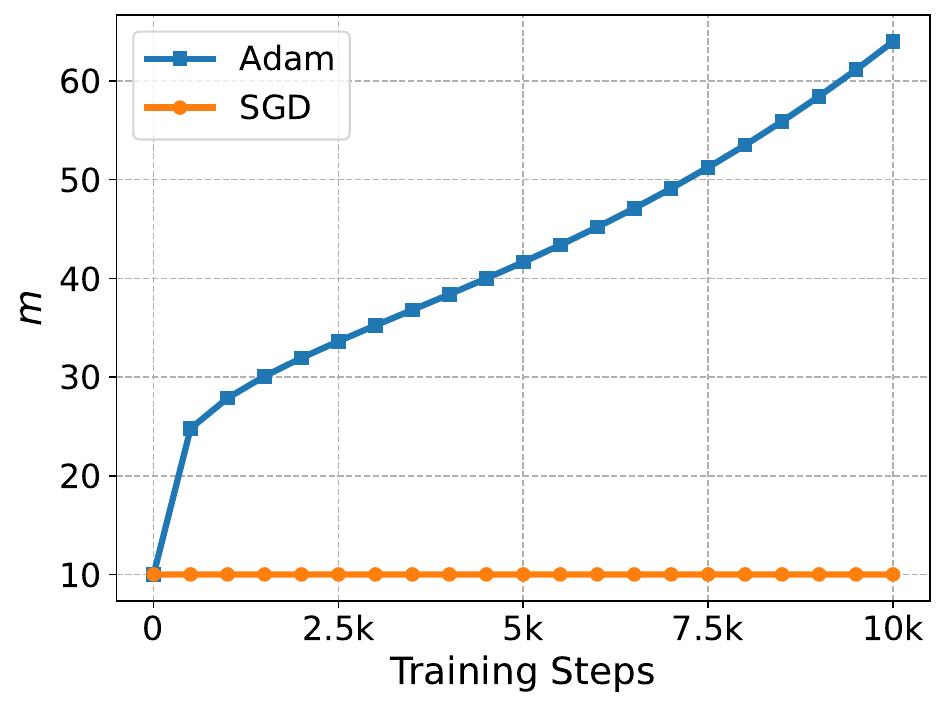}
    \caption{With the gradient formula in Proposition~\ref{appthm:massive}, Adam causes linear growth of $m$. 
    }
    \label{appfigure:m_dynamics}
\end{figure}
\clearpage
\section{Ablations}
\label{sec:ablations}


\subsection{Experimental details}\label{appsec:experiment-detail}
We provide more details for experiments in Section \ref{sec:bb_task}. We train transformers with positional embedding, pre-layer norm, $\softmax$ activation in \attn, and ReLU activation in \mlp. 
We use Adam with constant learning rate $0.0003$, $\beta_1=0.9$, $\beta_2=0.99$, $\eps=10^{-8}$, and a weight decay of $0.01$. We choose a learning rate of $0.03$ for the SGD. In each training step, we resample from the BB task with a batch size of $B=512$ and sequence length $N=256$. Unless otherwise specified, the model is trained for $10,000$ steps. Results are consistent across different random seeds.

\subsection{Additional attention plots of a 1-layer transformer trained on the BB task}
\label{appsec:additional-attn-maps}
We provide more attention plots of the 1-layer transformer on sequences other than those shown in Figure~\ref{fig:bbm-attn}.
Figure~\ref{appfigure:more-attn} presents more attention-weight heat maps of the one-layer transformer model trained on the BB task. All attention maps show the attention sink phenomenon. Some non-trigger tokens present attention patterns other than attention sink. For example, trigger tokens serve as attention sinks in some inputs in Figure~\ref{appfig:trigger-sink}.
\begin{figure}[h]
  \centering
  \begin{minipage}{0.3\textwidth}
      \centering
      \subcaption{\small Sequence 0}
      \vspace{-.2em}
      \includegraphics[width=\linewidth]{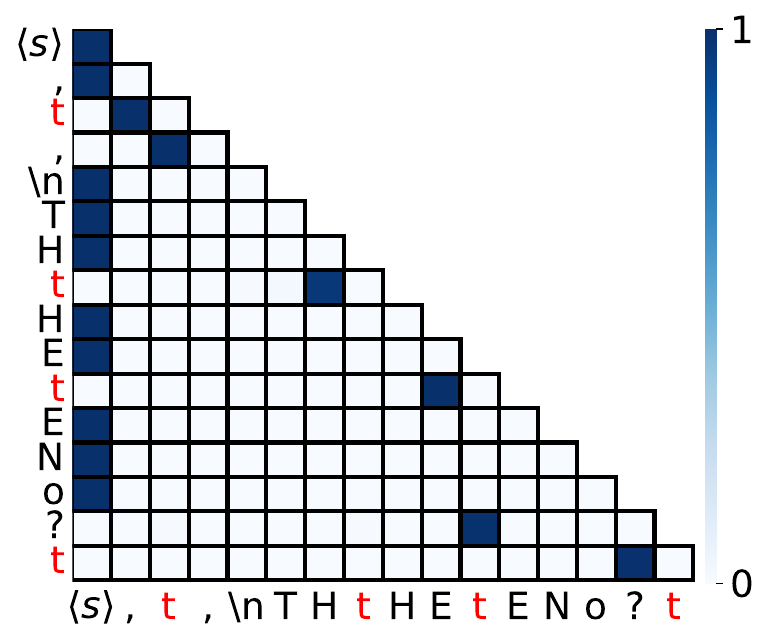}
  \end{minipage}
  \begin{minipage}{0.3\textwidth}
      \centering
      \subcaption{\small Sequence 1}
      \vspace{-.2em}
      \includegraphics[width=\linewidth]{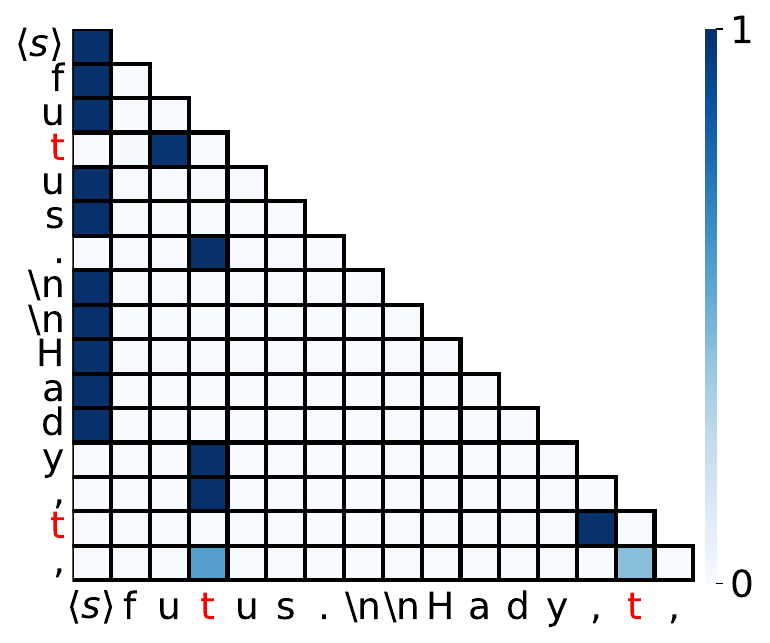}
  \end{minipage}
  \begin{minipage}{0.3\textwidth}
      \centering
      \subcaption{\small Sequence 2}
      \label{appfig:trigger-sink}
      \vspace{-.2em}
      \includegraphics[width=\linewidth]{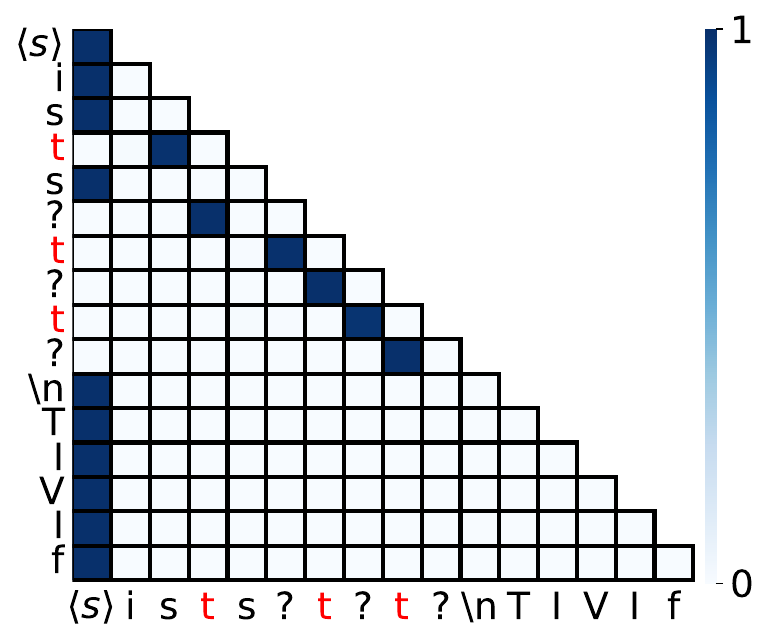}
  \end{minipage}
  \caption{\small Additional attention plots of the one-layer transformer trained on the Bigram-Backcopy task.}
  \label{appfigure:more-attn}
  \vspace{-1em}
\end{figure}

\subsection{Statics and dynamics of the simplified model in Theorem~\ref{thm:main}}
\label{appsec:train-simple}
We provide simulations that justify our model simplifications in Section~\ref{sec:bb_task}.
We pretrrain the simplified model structure in Figure \ref{figure:simple-model} with several modifications: (1) we use a trainable \mlp-layer with random Gaussian initialization; (2) we take $\vall_{\bos}=\bO \vecvalue$, with $\bO\in\R^{\vocabsize\times \vocabsize}$ and $\vecvalue\in \R^\vocabsize$. Both $\bO$ and $\vecvalue$ are trainable. Empirically, with a trainable \mlp~layer but without the trainable matrix $\bO$, $\vall_{\bos}$ becomes a non-negligible bias term instead of converging to zero. Collectively, we update parameters $\mlp$, $\bO$, $\vecsink$, $\vecvalue$, $\lambda$, and $\bm{\xi}$ using Adam with a learning rate of $0.03$. 
Figure \ref{appfigure:simple-static} and \ref{appfigure:simple-dynamic} present statics and dynamics that match the observations in the one-layer transformer.

\begin{figure}[h]
  \centering
  \begin{minipage}{0.3\textwidth}
      \centering
      \subcaption{\small Attention weights}
      \vspace{-.2em}
      \includegraphics[width=\linewidth]{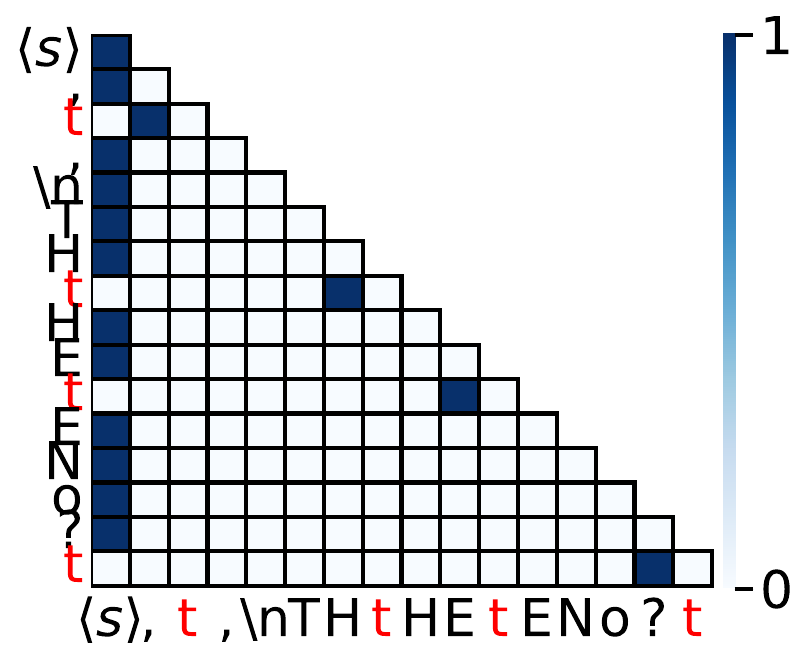}
  \end{minipage}
  \begin{minipage}{0.3\textwidth}
      \centering
      \subcaption{\small Value state norms}
      \vspace{-.2em}
      \includegraphics[width=\linewidth]{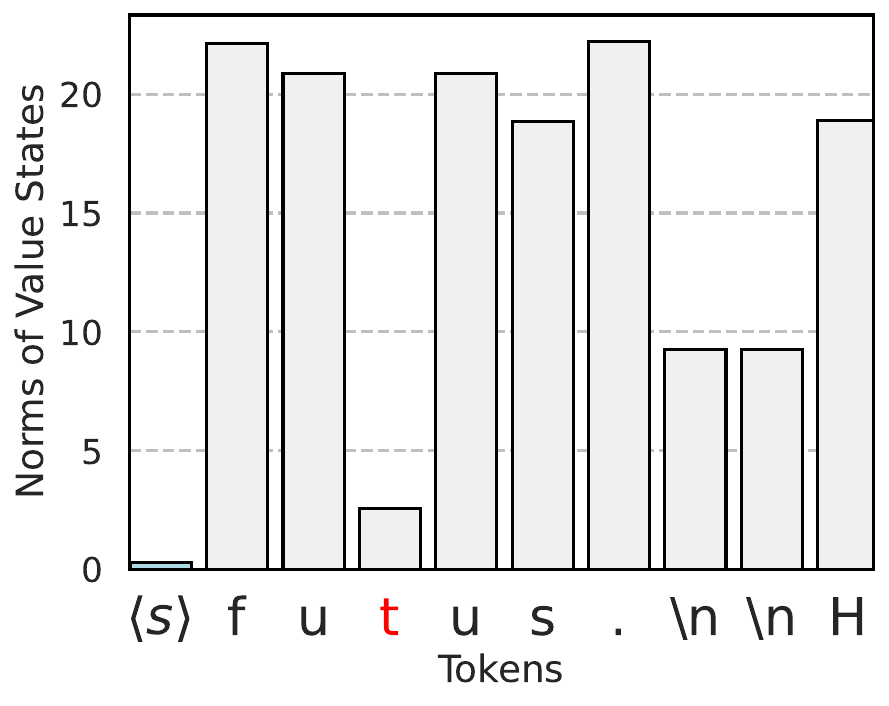}
  \end{minipage}
  \caption{\small The simplified model structure trained on the BB task.}
  \label{appfigure:simple-static}
  \vspace{-1em}
\end{figure}

\begin{figure}[h]
  \centering
      \includegraphics[width=.7\linewidth]{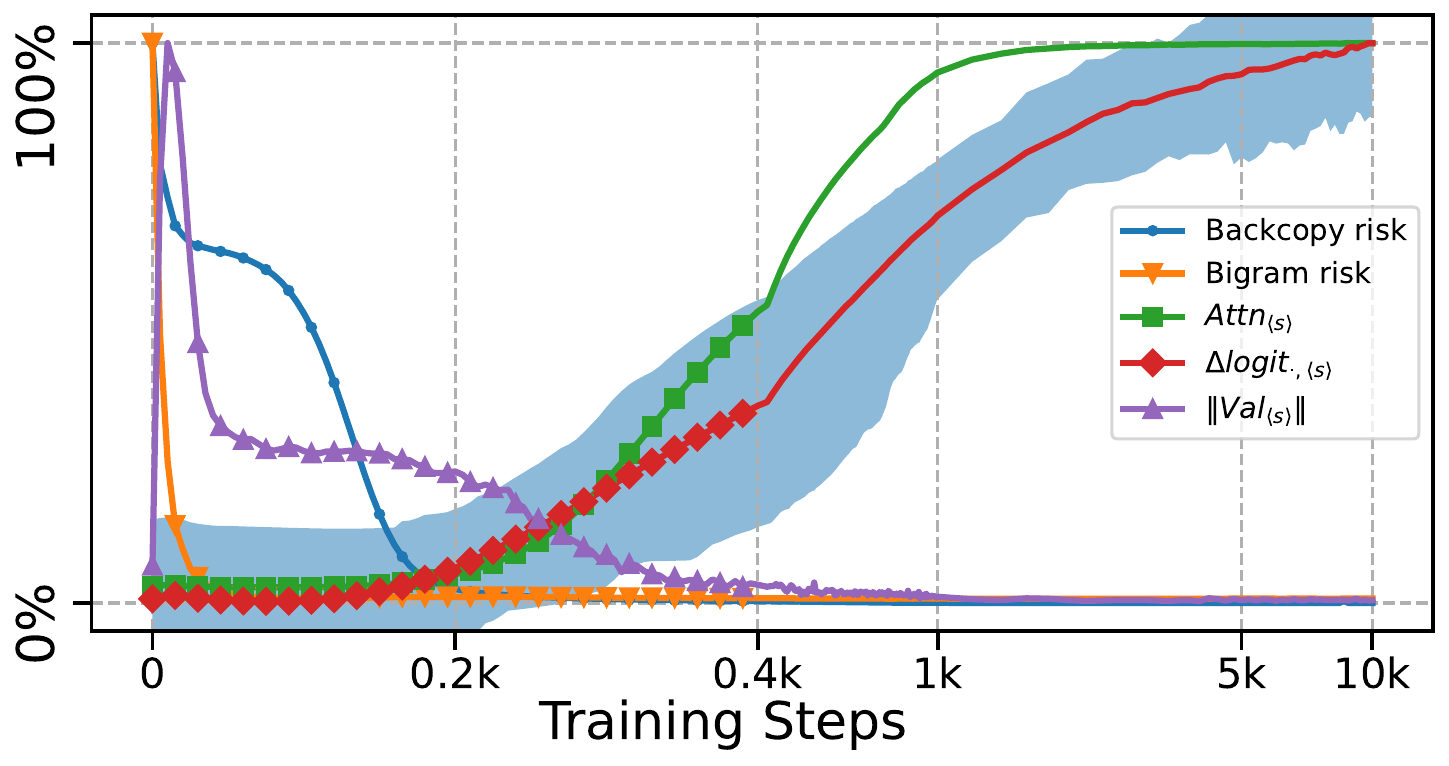}
  \caption{\small The dynamics of the simplified model structure trained on the BB task. The horizontal axis is logarithmatically scaled after steps $400$. The excess risk curves match the one-layer transformer. The logit curve is close to the logarithmic growth predicted in Theorem \ref{thm:main}.}
  \label{appfigure:simple-dynamic}
  \vspace{-1em}
\end{figure}

\subsection{The Bigram-Backcopy task without the \bos~token.}

We train a one-layer transformer on the BB task without the \bos~token. Figure \ref{appfigure:no-bos-extreme} shows that the zeroth token is not a sink token. Instead, trigger tokens and delimiter tokens seem to become sink tokens. In particular, the observation that delimiter tokens become extreme matches the observation in LLMs that delimiter tokens may also become extreme tokens (cf.\ \Cref{sub:fixed_bos}). 
\begin{figure}[h]
  \centering
  \begin{minipage}{0.35\textwidth}
      \centering
      \subcaption{\small Attention weights}
      \vspace{-.2em}
      \includegraphics[width=0.9\linewidth]{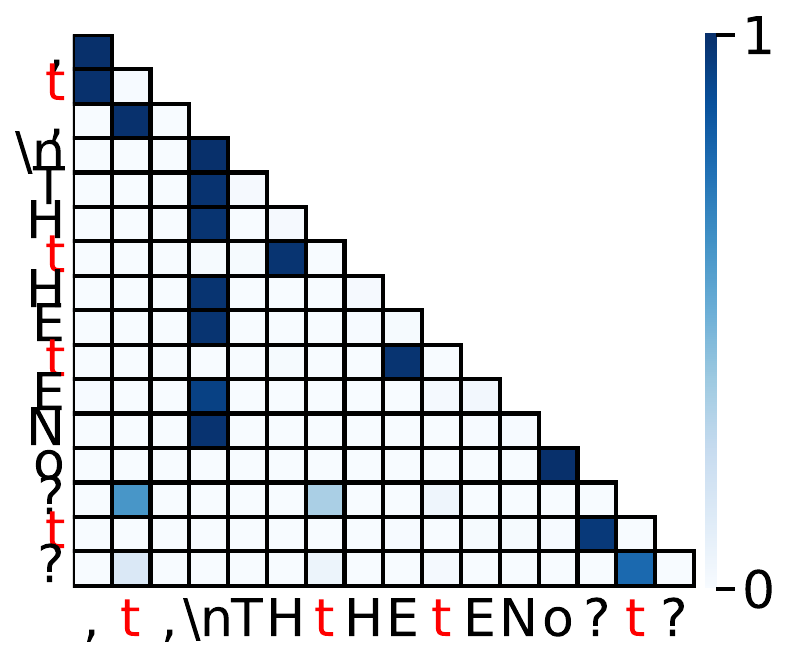}
  \end{minipage}
  \begin{minipage}{0.35\textwidth}
      \centering
      \subcaption{\small Value state norms}
      \vspace{-.2em}
      \includegraphics[width=\linewidth]{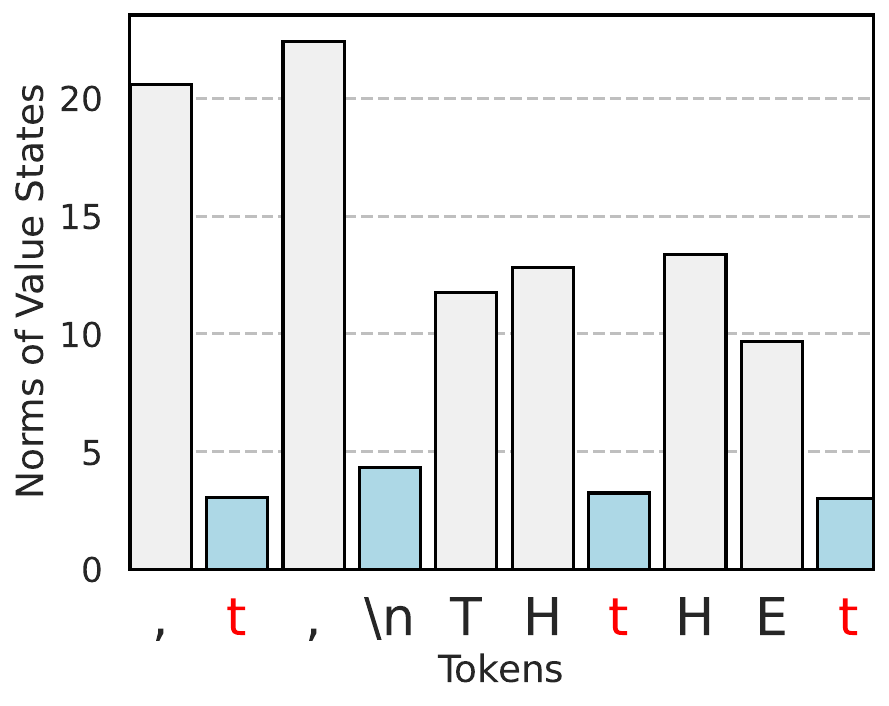}
  \end{minipage}
\caption{\small Attention weights and value state norms of a one-layer transformer trained on the BB task without the \bos~token.}
  \label{appfigure:no-bos-extreme}
  \vspace{-1em}
\end{figure}

\clearpage
\section{More Attention Heads in Dormant and Active Phase}\label{sec:more_heads}
We demonstrate a head with clear \textit{active-dormant mechanism} in Figure~\ref{fig:dormant_heads_domain_dependent}. 
In this section, we present two more active-dormant heads in Llama 2-7B-Base, in \Cref{fig:llama_l16h20,fig:llama_l16h28}, which are more difficult to interpret than Layer 16 Head 25, but become dormant on some inputs and remain active on others. 

\begin{figure}[h]
    \centering
    \begin{subfigure}{0.575\textwidth}
        \centering 
        \caption{Attention patterns}
        \includegraphics[width=\linewidth]{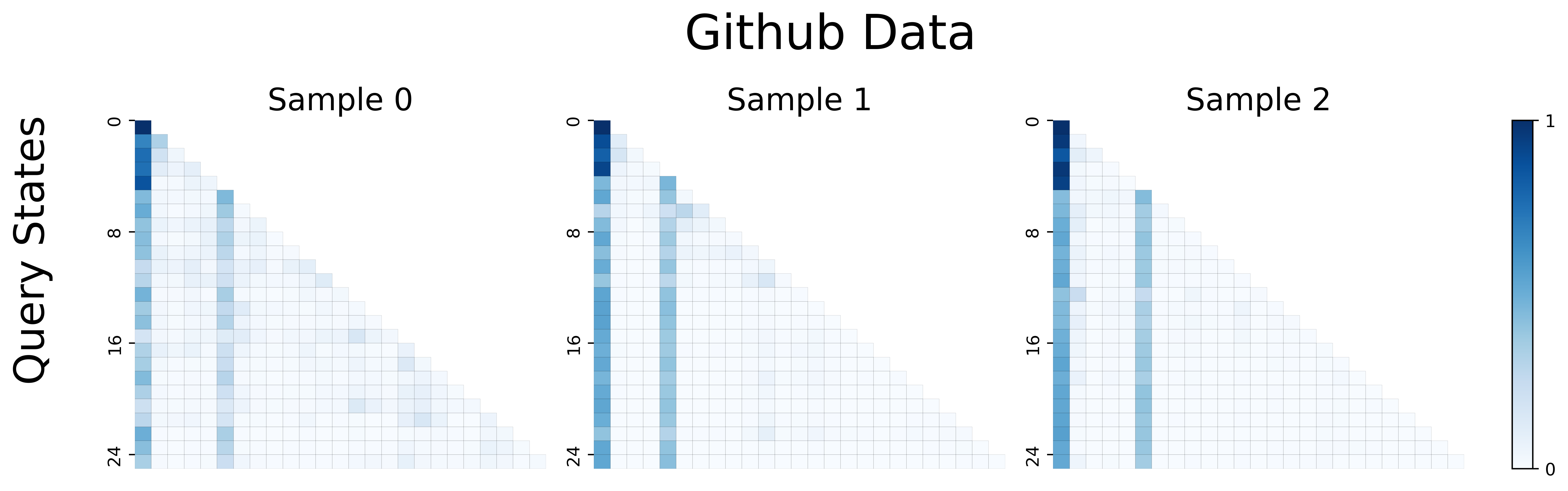}
        \includegraphics[width=\linewidth]{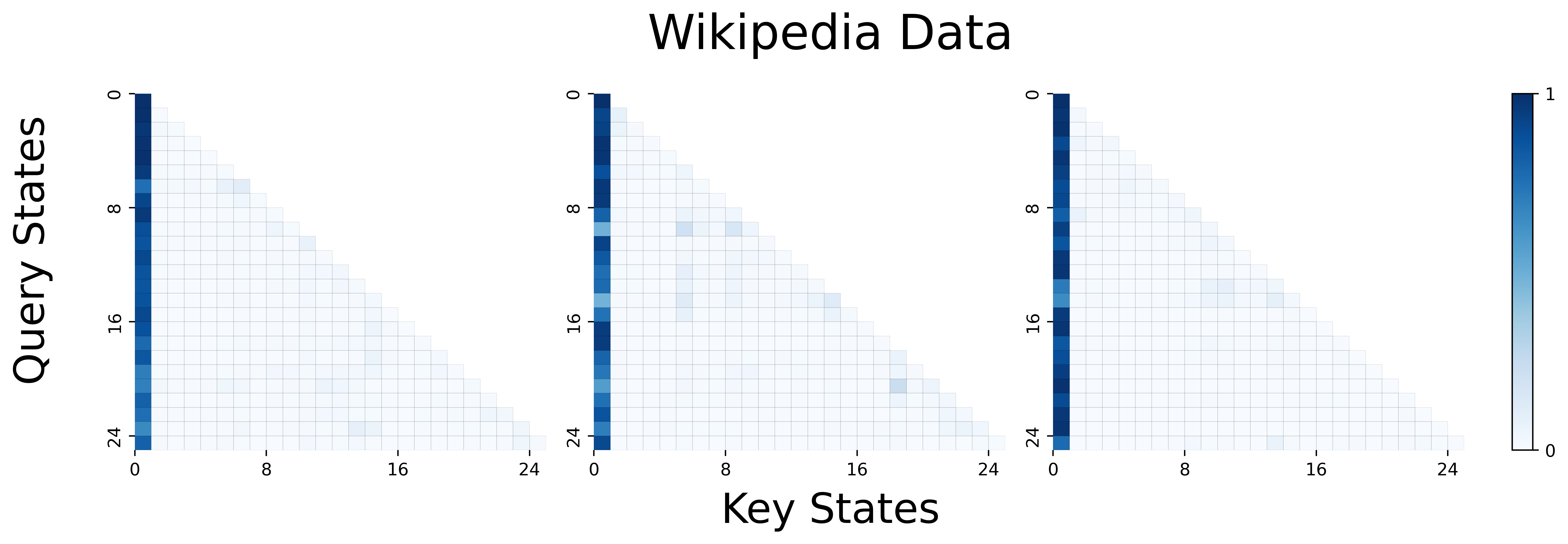}
    \end{subfigure}
    \hfill
    \begin{subfigure}{0.4\textwidth}
        \centering
        \caption{Interventions}
        \includegraphics[width=\linewidth]{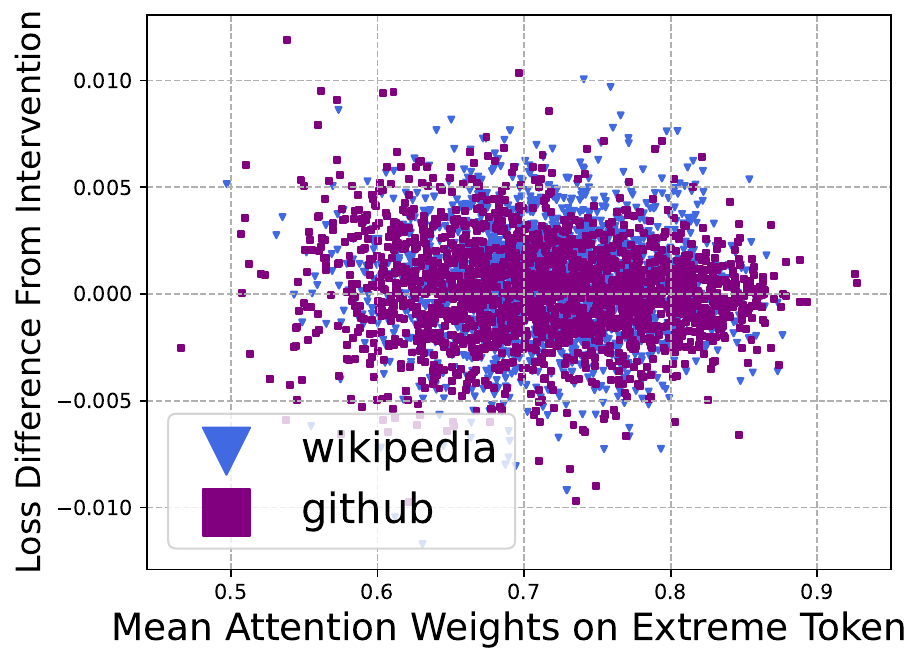}
    \end{subfigure}
    \caption{\small \textbf{Layer 16 Head 20 of Llama 2-7B-Base.} We do not observe difference between the Wikipedia data and the Github data.}
    \label{fig:llama_l16h20}
\end{figure}

\begin{figure}[h]
    \centering
    \begin{subfigure}{0.575\textwidth}
        \centering
        \caption{Attention patterns}
        \includegraphics[width=\linewidth]{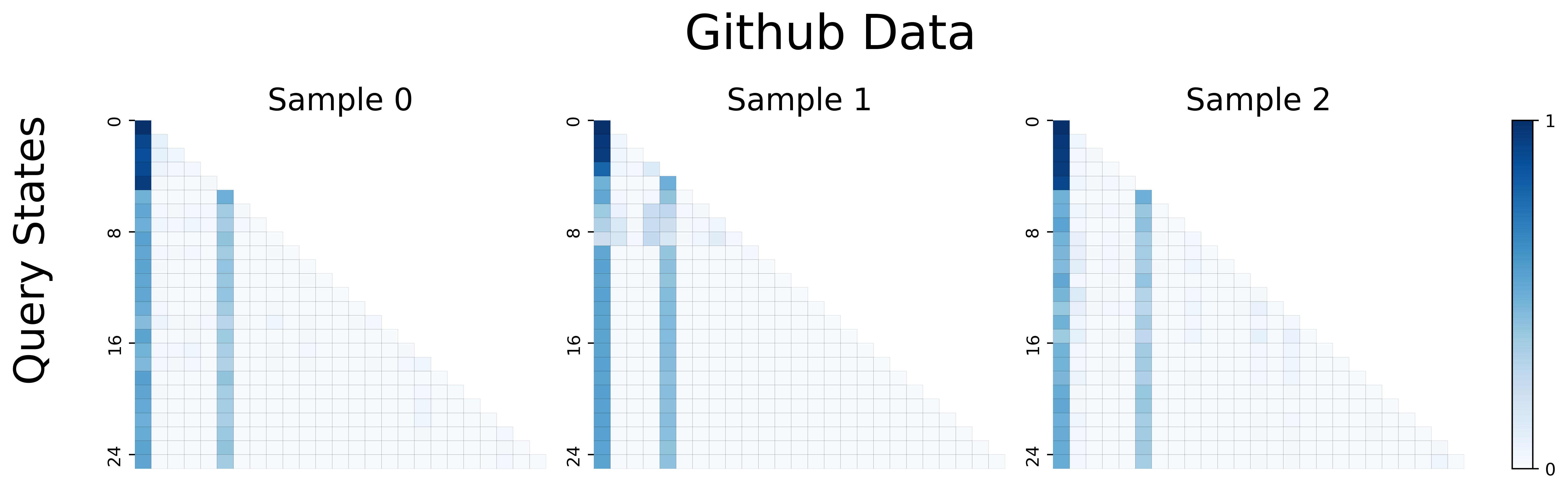}
        \includegraphics[width=\linewidth]{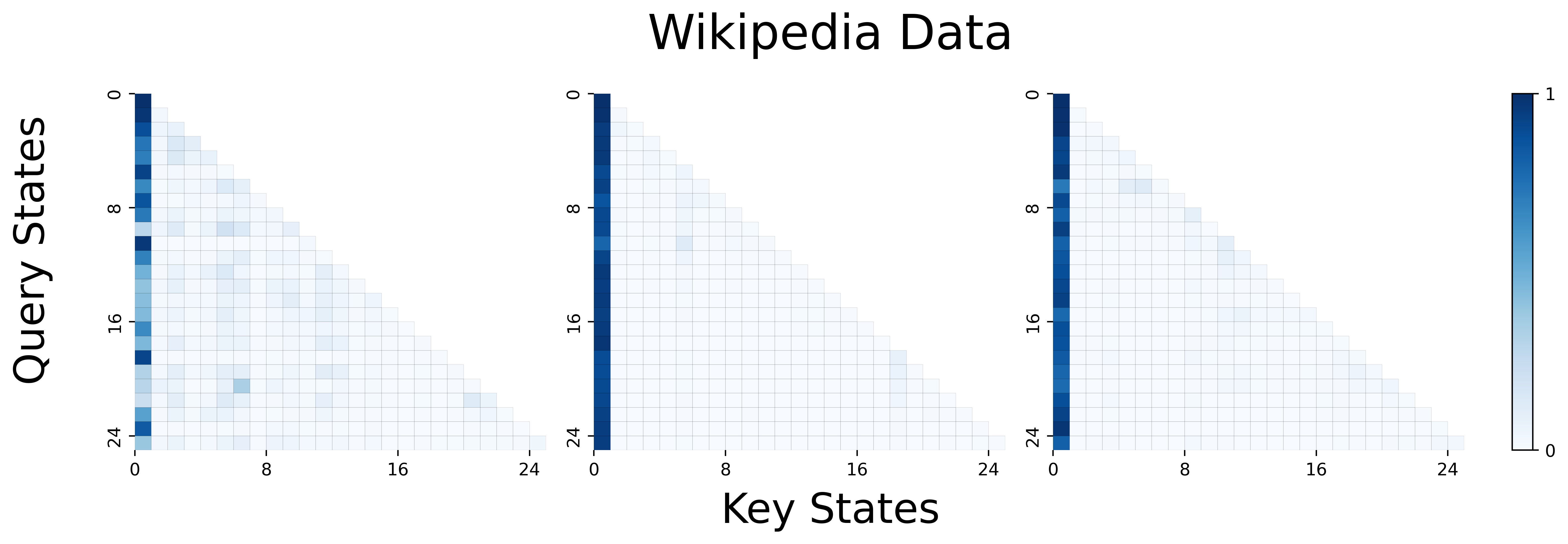}
    \end{subfigure}
    \begin{subfigure}{0.4\textwidth}
        \centering
        \caption{Interventions}
        \includegraphics[width=\linewidth]{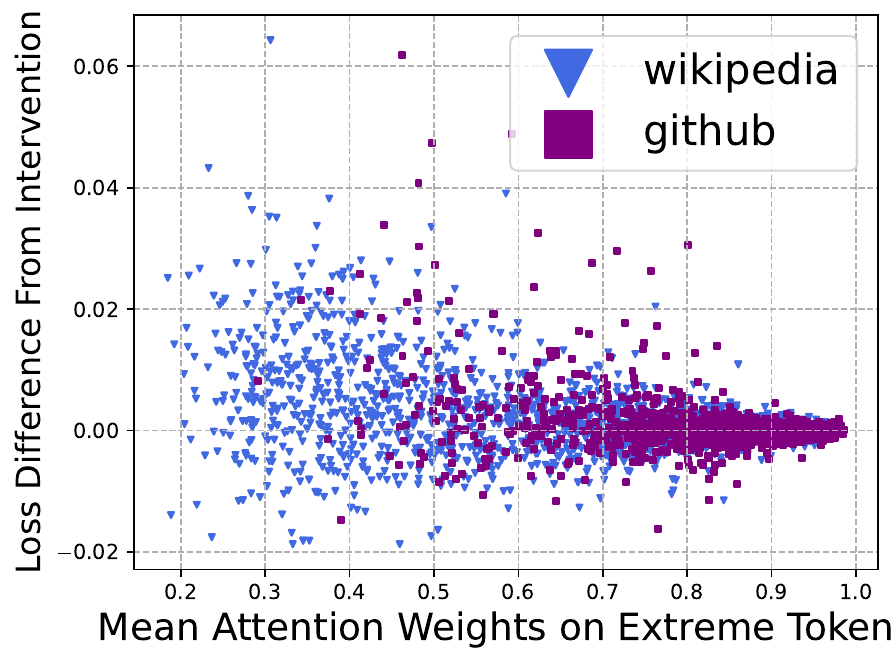}
    \end{subfigure}
    \caption{\small \textbf{Layer 16 Head 28 of Llama 2-7B-Base.} The head is more dormant on the GitHub data, and more active on the Wikipedia data.}
    \label{fig:llama_l16h28}
\end{figure}

\clearpage
\section{Fine-Grained Static Mechanisms for Extreme-Token Phenomena} \label{sec:circuit}

In this section, we identify more fine-grained static mechanisms for extreme-token phenomena in Llama 3.1-8B-Base. To do this, we identify circuits for the origin of attention sinks and small value states. Then, using ablation studies, we study the origin of massive norms. Again, we use the generic test phrase ``\bos{} Summer is warm. Winter is cold.''

\begin{figure}[h]
    \centering
    \includegraphics[width=0.8\textwidth]{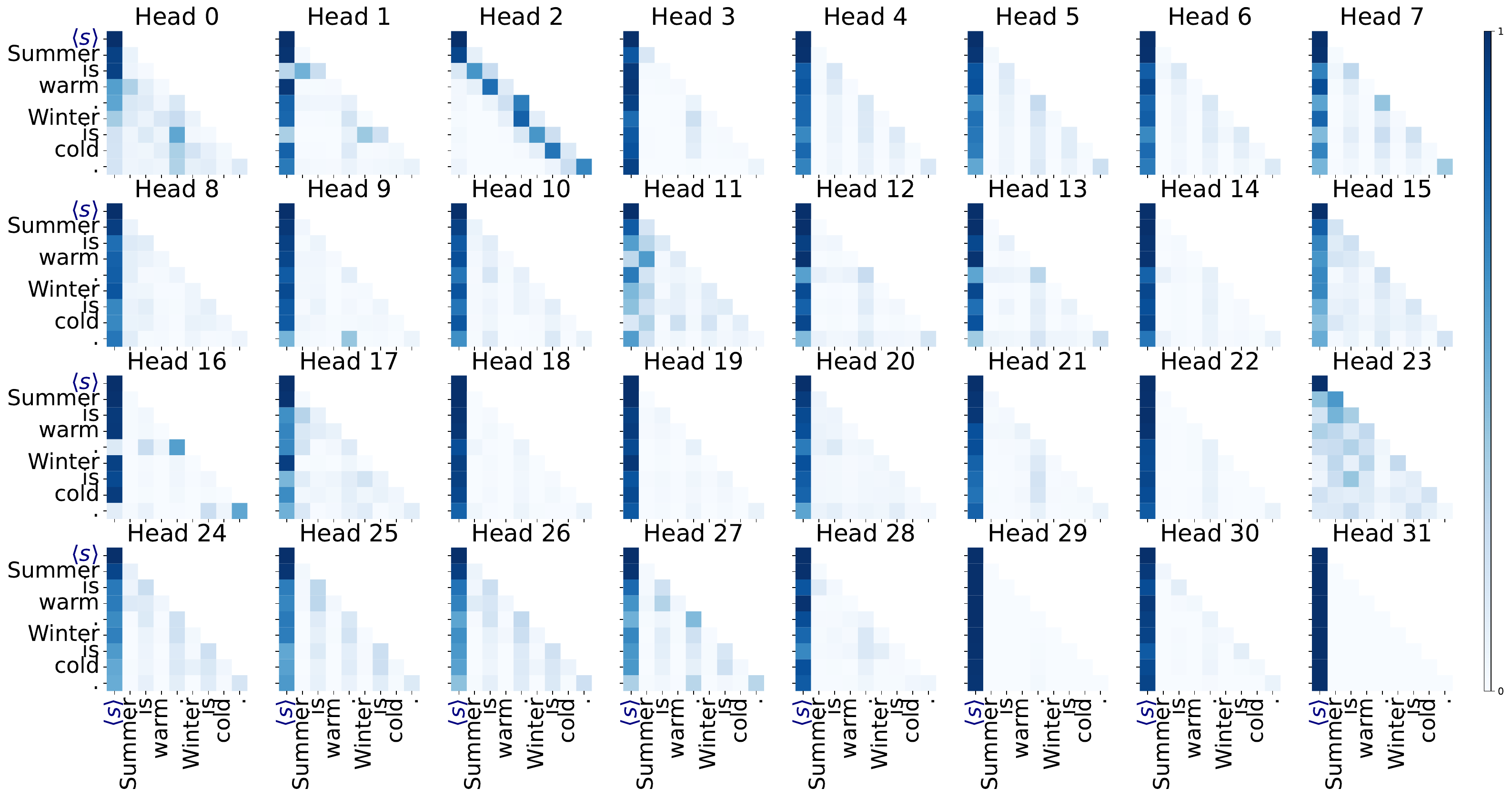}
    \caption{\small \textbf{A visualization of attention heads at Layer 0 of Llama 3.1-8B-Base.} Notice that many heads have the attention sink property, even at Layer 0 without any cross-token interaction. As usual, the test phrase is ``Summer is warm. Winter is cold.'' The most clear attention sink is Head 31.}
    \label{fig:llama_31_attn_layer0}
\end{figure}

\begin{figure}[h]
    \centering
    \begin{subfigure}[t]{0.4\textwidth}
    \caption{Correlations between key and query states.}
        \centering
        \includegraphics[width=0.6\textwidth]{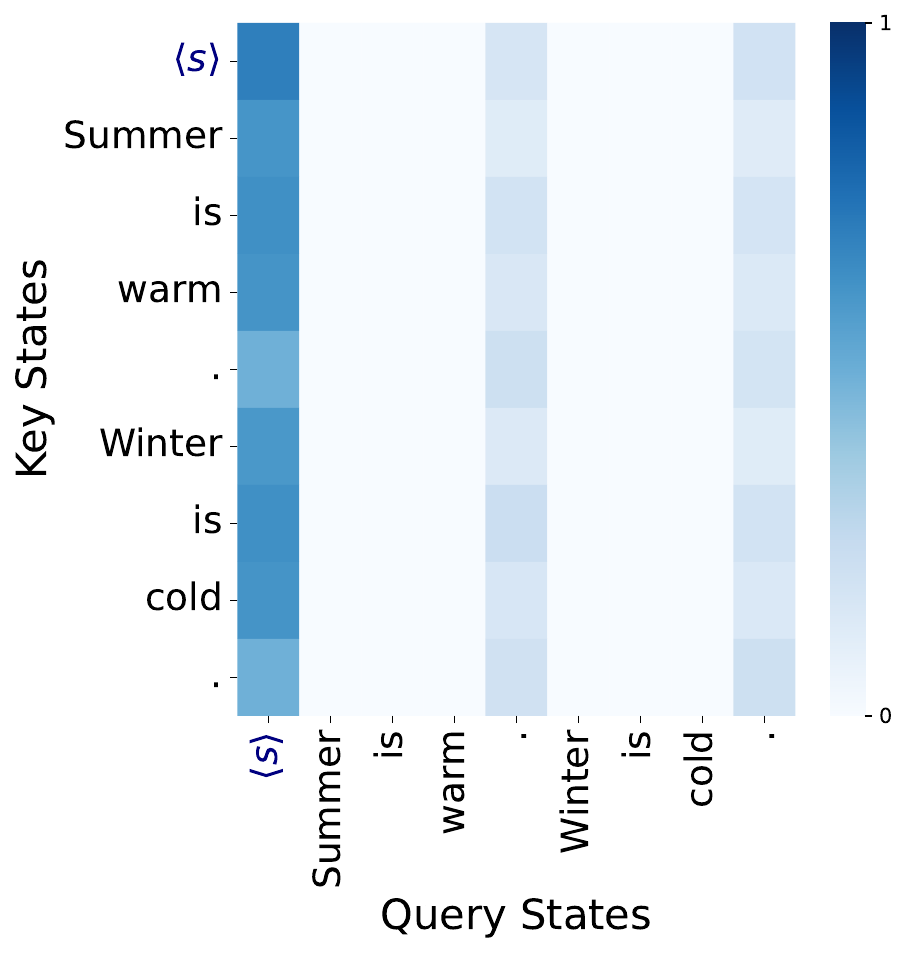}
    \end{subfigure}
    \quad
    \begin{subfigure}[t]{0.4\textwidth}
        \centering
        \caption{Correlations between key states.}
        \includegraphics[width=0.6\textwidth]{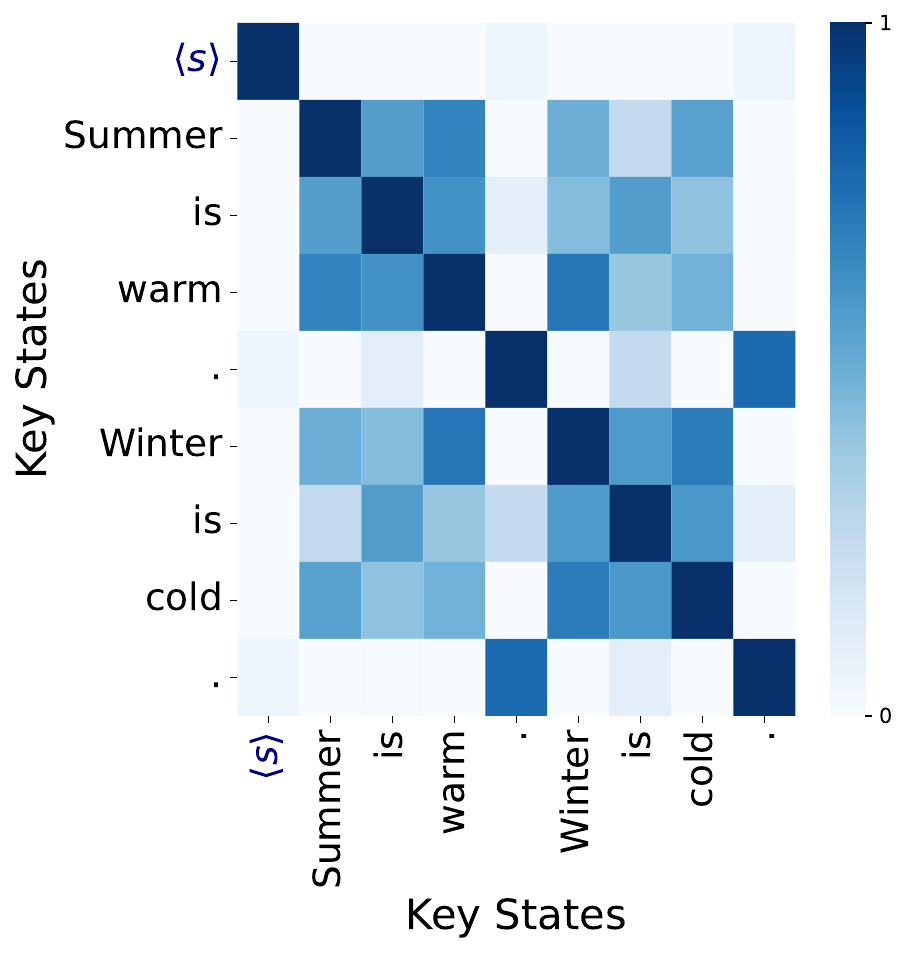}
    \end{subfigure}
    \caption{\small \textbf{Correlations between query states and key states at Layer 0 Head 31 of Llama 3.1-8B-Base.} We observe that the key state of \bos{} have low correlation with other key states, but high correlation with other query states. Meanwhile, all semantically meaningful (i.e., not delimiter) tokens have highly correlated key states.}
    \label{fig:llama_31_qk_kk}
\end{figure}

\paragraph{Attention sinks and global contextual semantics.} There are many attention heads that exhibit attention sinks at layer \(0\), and the \bos{} token is always the sink token (see \Cref{fig:llama_31_attn_layer0}). From now on until the end of this section, we restrict our attention to Head 31 of Layer 0, which is an attention sink. These attention sinks are caused by two linear-algebraic factors, demonstrated in \Cref{fig:llama_31_qk_kk}.
\begin{enumerate}
    \item The key state of the \bos{} token has small dot product with all other key states. 
    \item The query states of all tokens are nearly orthogonal to the key states of all tokens except the \bos{} token.
\end{enumerate}
 These two facts combine to ensure that the key state of the \bos{} token is picked out by each query state, causing the attention sink. Since these query and key states are produced without any cross-token interaction, the alignment of different states is caused purely by the token's global importance or meaning imparted via pretraining. The \bos{} token has no semantic meaning in the context of prose tokens, so its key state is not aligned with key states of meaningful prose tokens. Also, delimiter tokens, often considered secondary attention sinks (cf.~\Cref{sub:fixed_bos}), have the most aligned key states to the key state of the \bos{} token, and are also the tokens with the least semantic meaning in the prose context. Thus, we identify that, at least in this restricted example, query state and key state alignment depends heavily on the contextual semantics of the token.

 \begin{figure}[h]
     \centering
     
     \begin{subfigure}[t]{0.3\textwidth}
        \centering
        \caption{Value-state drains at Layer 0 Head 31 of Llama 3.1-8B-Base.}\label{fig:llama_31_value_states}\includegraphics[width=0.9\textwidth]{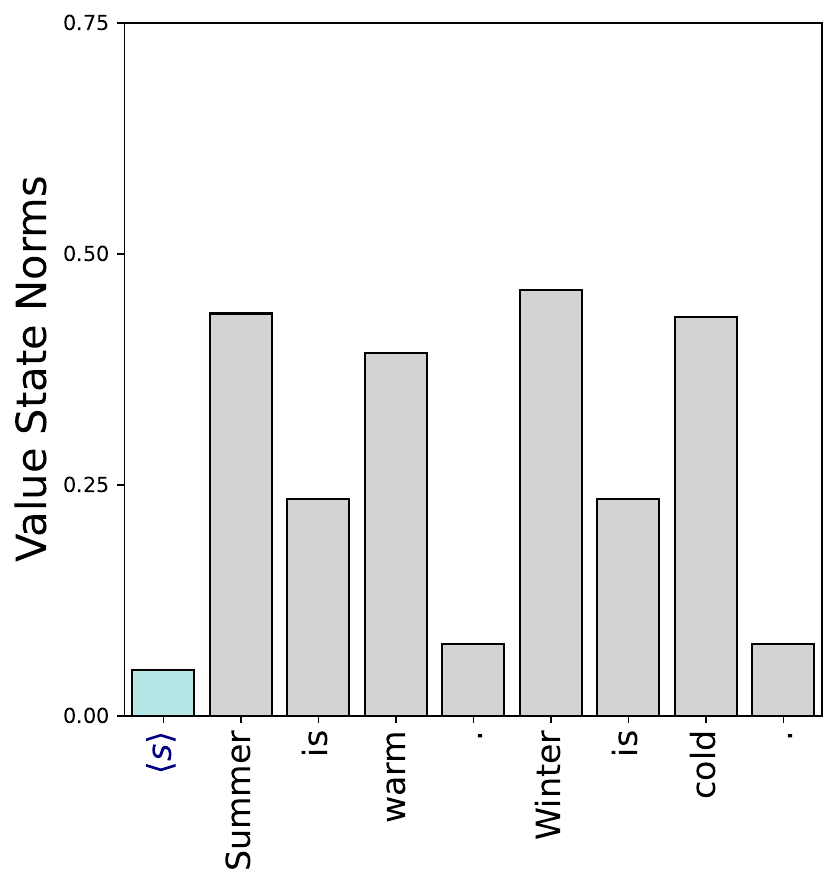}
    \end{subfigure}
    \begin{subfigure}[t]{0.6\textwidth}
    \centering
        \caption{\centering Ablation study on the cause of the residual state peak in Llama 3.1-8B-Base.}\label{fig:llama_31_norms_ablation}\includegraphics[width=0.5\textwidth]{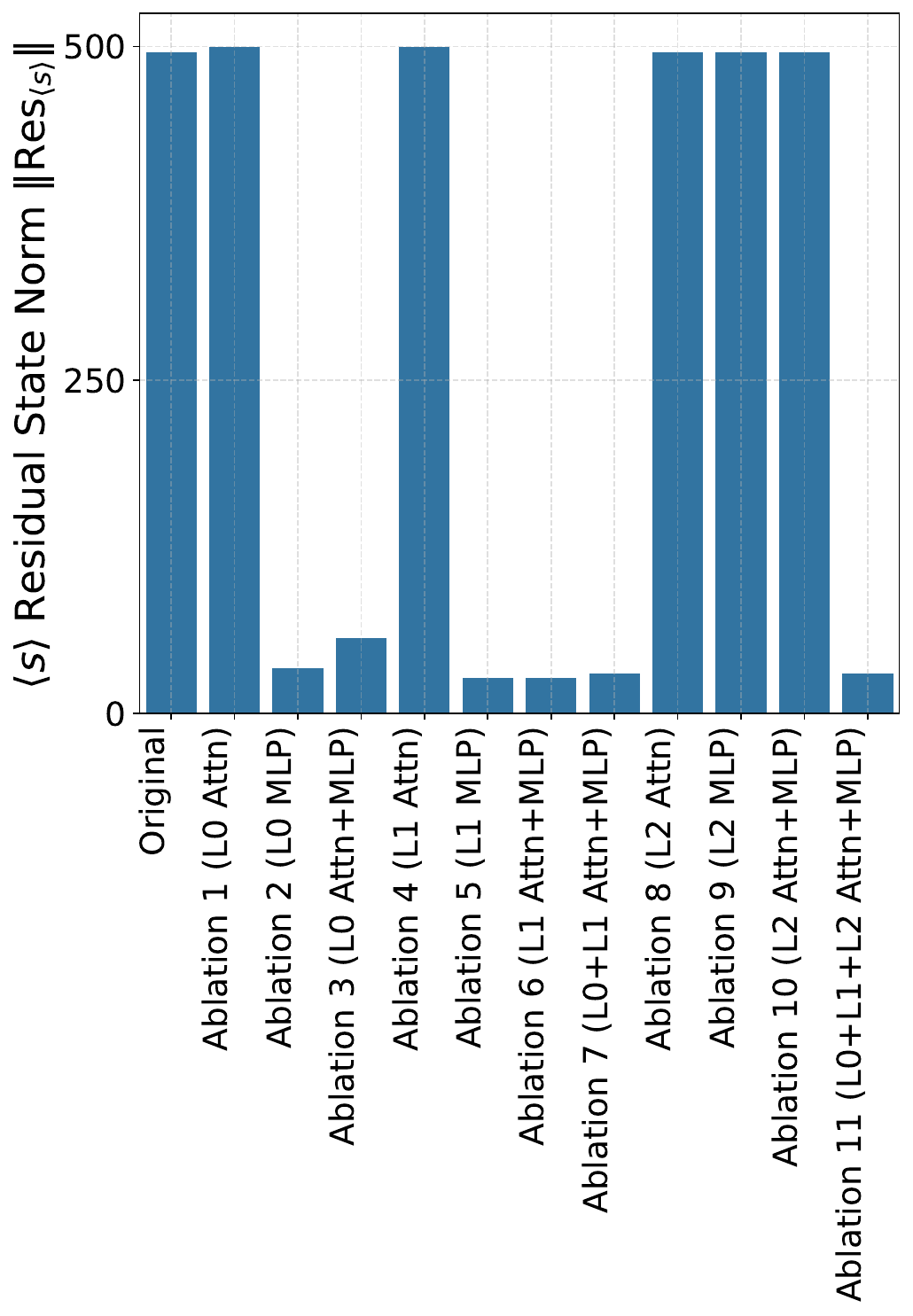}
    \end{subfigure}
    
     \caption{\small \textit{Left (a)}: Value-state drains at Layer 0 Head 31 of Llama 3.1-8B-Base. We observe that the value state associated with \bos{} is already much smaller than every other semantically meaningful token, and still smaller than the delimiter tokens in the same sentence. \textit{Right (b)}: Ablation study on the cause of the residual state peak in Llama 3.1-8B-Base. We perform a series of ablations to understand which components of the network promote the residual state peaks. We find that ablating either the zeroth or first layer's MLP is sufficient to remove the residual state peak phenomenon, while no other layer-level ablation can do it.}
     \label{fig:llama_31_value_states and norms}
 \end{figure}

\paragraph{Value-state drains.} The value states of the \bos{} token at Layer \(0\) Head 31 are already near zero, as demonstrated in \Cref{fig:llama_31_value_states}. While the delimiter tokens, which are less semantically meaningful in the prose context, have smaller value states than the rest, they are not as small as the value state of the \bos{} token which is guaranteed to not have any semantics.


\paragraph{Residual state peaks.} Residual state peaks are caused by the first two layers' MLPs. In particular, we perform several ablations, comparing between the residual state norms in a later layer (\(24\)) of an un-edited forward pass versus forward passes where we force the output of either multiple layers, a single layer, an attention block, or an MLP to be zero (and hence remove its contribution from the residual stream). As shown in Figure~\ref{fig:llama_31_norms_ablation}, ablating \textit{either} Layer 0's or Layer 1's MLP is sufficient to remove the residual state peak. In particular, the second-largest token at Layer 24 in \textit{each} ablation (including the original setup) has norm between \(29\) and \(38\), so the interventions ensure that all tokens have similar size.

\clearpage
\section{Extreme-Token Phenomena Over Many Samples}\label{sec:many_samples}
In this section we show that the extreme-token phenomena, and our predictions from the BB model, exhibit in prompts other than ``Summer is warm. Winter is cold.'' To this end, we use 128 samples from the Wikipedia dataset, each truncated to 8 tokens. \Cref{fig:extreme_tokens_llama_31_many_samples} provides aggregate statistics of extreme-token phenomena in Llama 3.1-8B, which are similar to the fine-grained statistics over a single prompt from \Cref{figure:extreme-token}. \Cref{fig:extreme_tokens_olmo_many_samples} provides aggregate statistics of the development of extreme-token phenomena over the training dynamics of OLMo, which are similar to the fine-grained statistics over a single prompt from \Cref{fig:olmo_predictions_phase0} and \Cref{fig:olmo_predictions_phase1}.

\begin{figure}[h]
    \centering
    \begin{subfigure}{0.31\textwidth}
        \centering 
        \caption{Attention weights (L24).}
        \includegraphics[width=\textwidth]{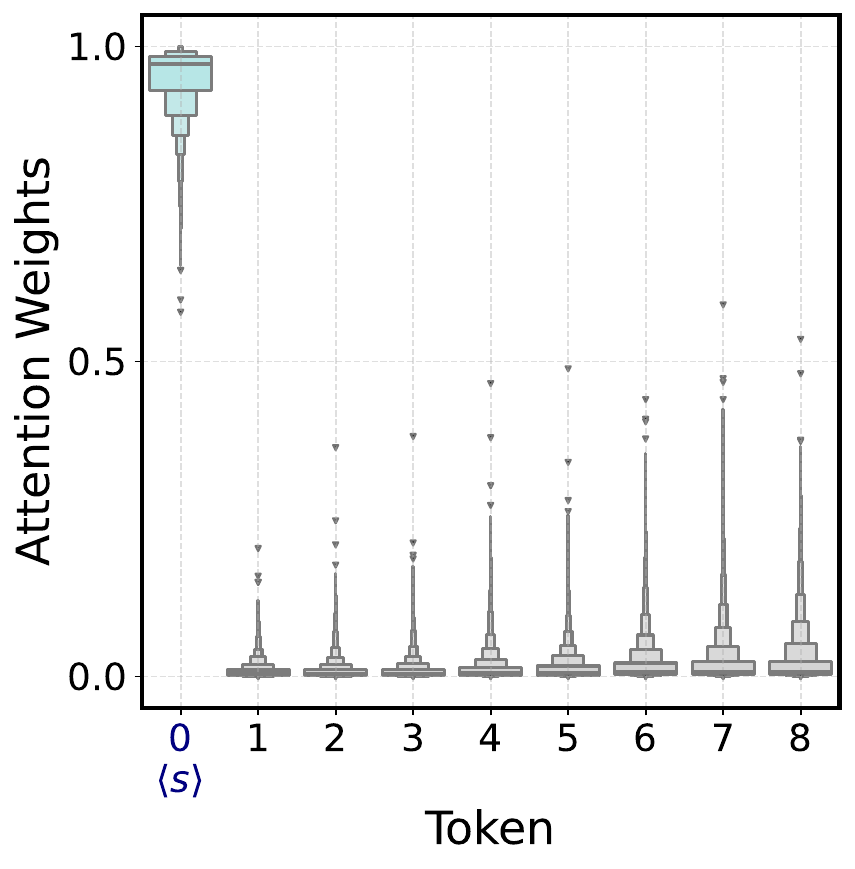}
    \end{subfigure}
    \hfill
    \begin{subfigure}{0.31\textwidth}
    \centering 
    \caption{Value state norms.}
    \includegraphics[width=\textwidth]{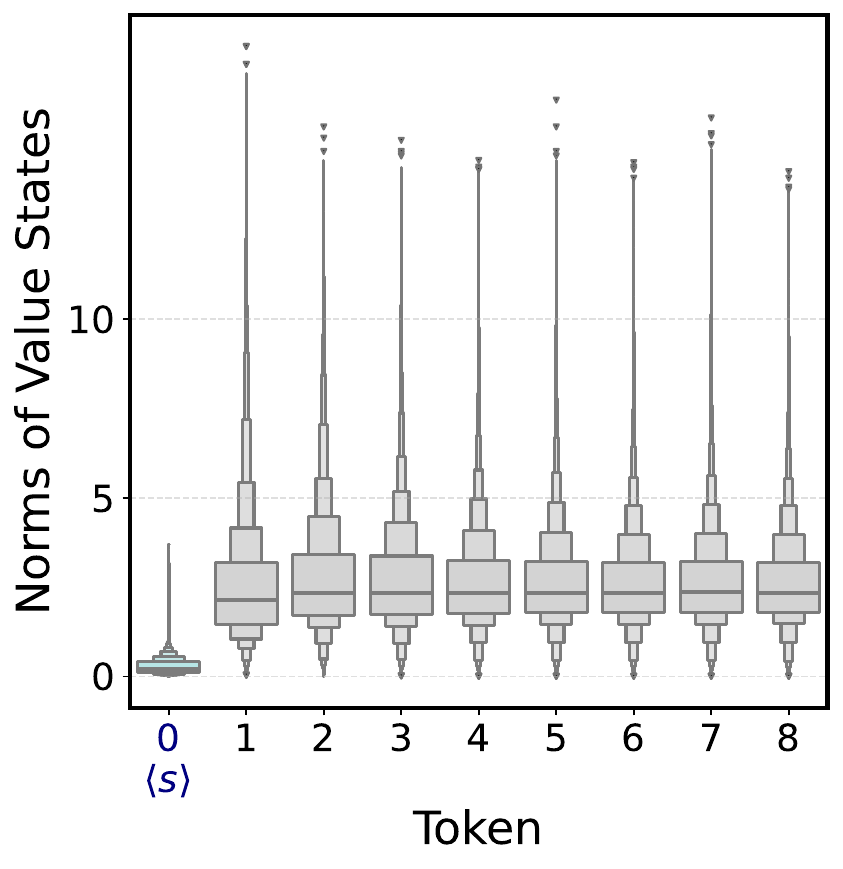}
    \end{subfigure}
        \hfill
    \begin{subfigure}{0.32\textwidth}
        \caption{Residual norms.}
        \includegraphics[width=\textwidth]{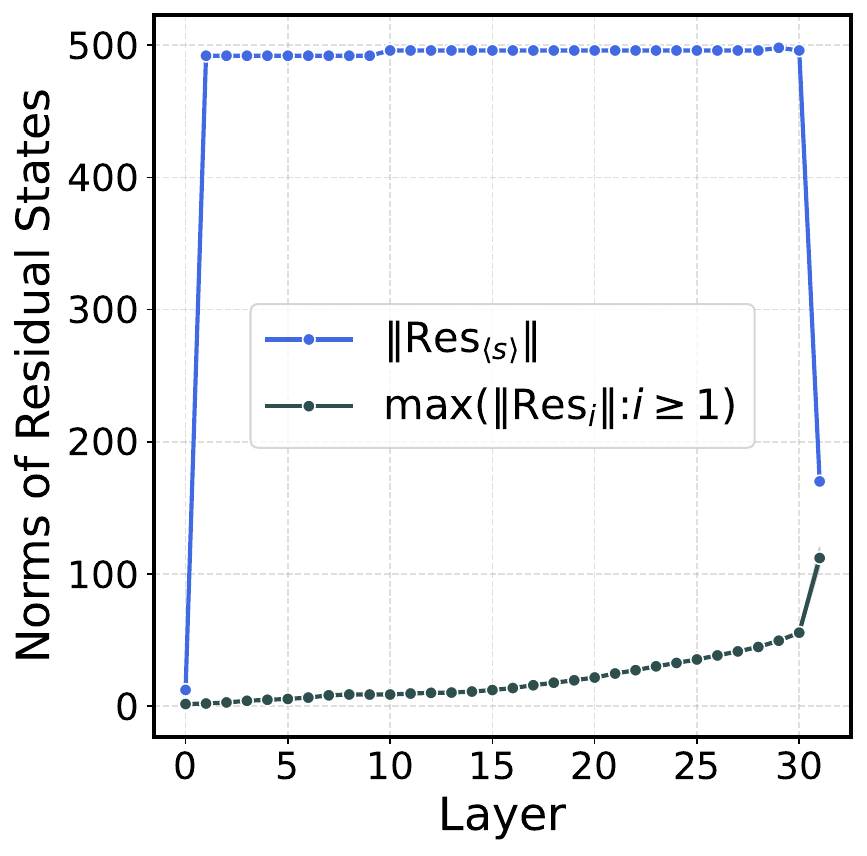}
    \end{subfigure}
    
    \caption{\textbf{Extreme token phenomena over many samples in Llama 3.1-8B-Base.} \textit{Left (a):} Let \(A\) be the attention weight tensor, of shape \((\text{batch size=128, \# heads=32, \# tokens=8, \# tokens=8})\) at Layer 24 of Llama 3.1-8B-Base. We calculate the tensor \(\bar{A}\), of shape \((\text{batch size=128, \# heads=32, \# tokens=8})\), which measures the average attention mass on the key tokens, by the following calculation: \(\bar{A}_{bhj} \doteq \frac{1}{n-j}\sum_{i = j}^{n}A_{bhij}\). We expect, for an attention sink head \(h\) on sample \(b\), that \(\bar{A}_{bh0}\) is large, and \(\bar{A}_{bhj}\) is small for all \(j \geq 1\). We indeed see this by plotting the distribution of \(\bar{A}_{:, :, j}\) for each \(j\), which shows that almost all attention mass is concentrated on the \bos token with high probability, showing the same thing as the individual attention head analysis in \Cref{figure:extreme-token} (a). \textit{Middle (b), Right (c):} We do the same computations as \Cref{figure:extreme-token} (b) and (c), averaged over the \(128\) samples.}
    \label{fig:extreme_tokens_llama_31_many_samples}
\end{figure}

\begin{figure}
    \centering
    \begin{subfigure}{0.45\textwidth}
    \centering 
    \caption{Attention weights (L24).}
    \includegraphics[width=0.7\textwidth]{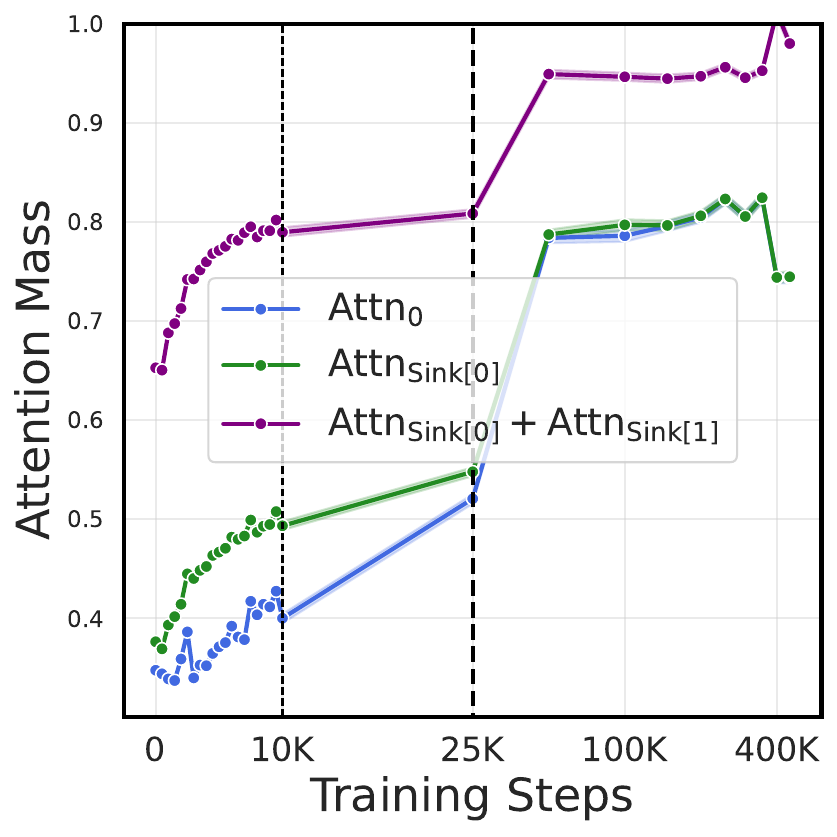}
    \end{subfigure}
    \begin{subfigure}{0.45\textwidth}
    \centering 
    \caption{Attention logits (L24).}
    \includegraphics[width=0.7\textwidth]{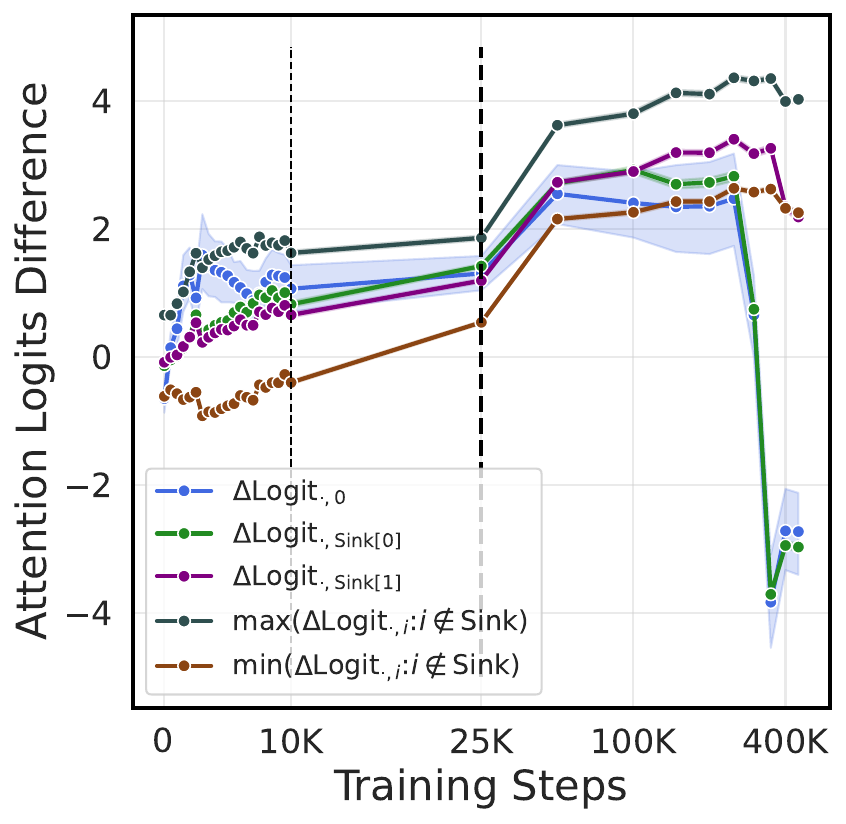}
    \end{subfigure}
    
    \begin{subfigure}{0.45\textwidth}
    \centering 
    \caption{Value state norms (L24).}
    \includegraphics[width=0.7\textwidth]{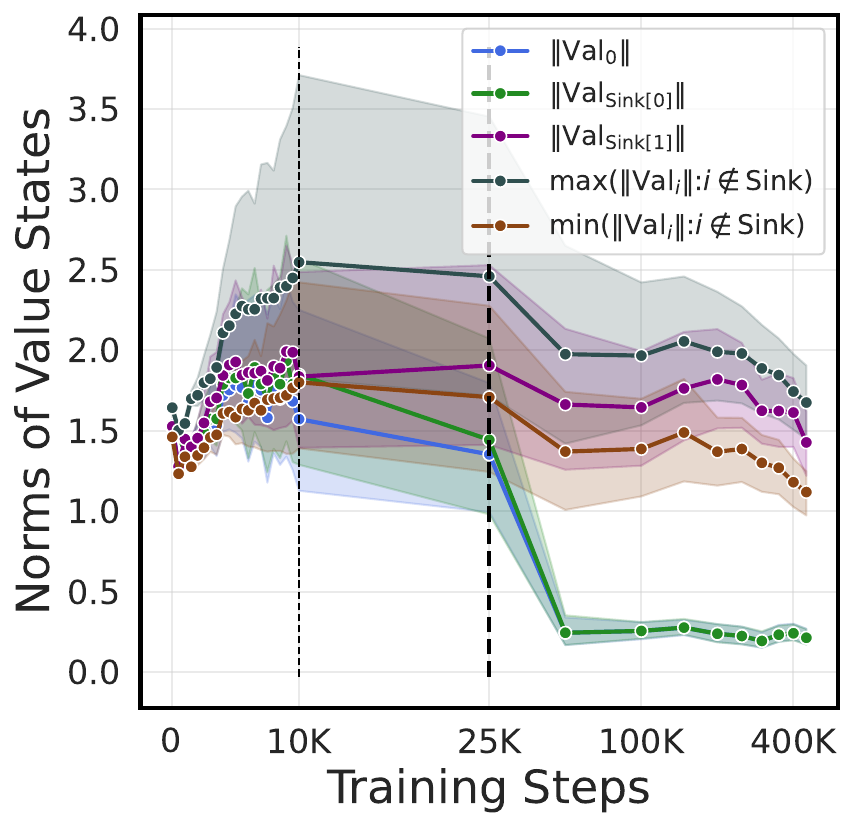}
    \end{subfigure}
    \begin{subfigure}{0.45\textwidth}
    \centering 
    \caption{Residual norms (L24).}
    \includegraphics[width=0.7\textwidth]{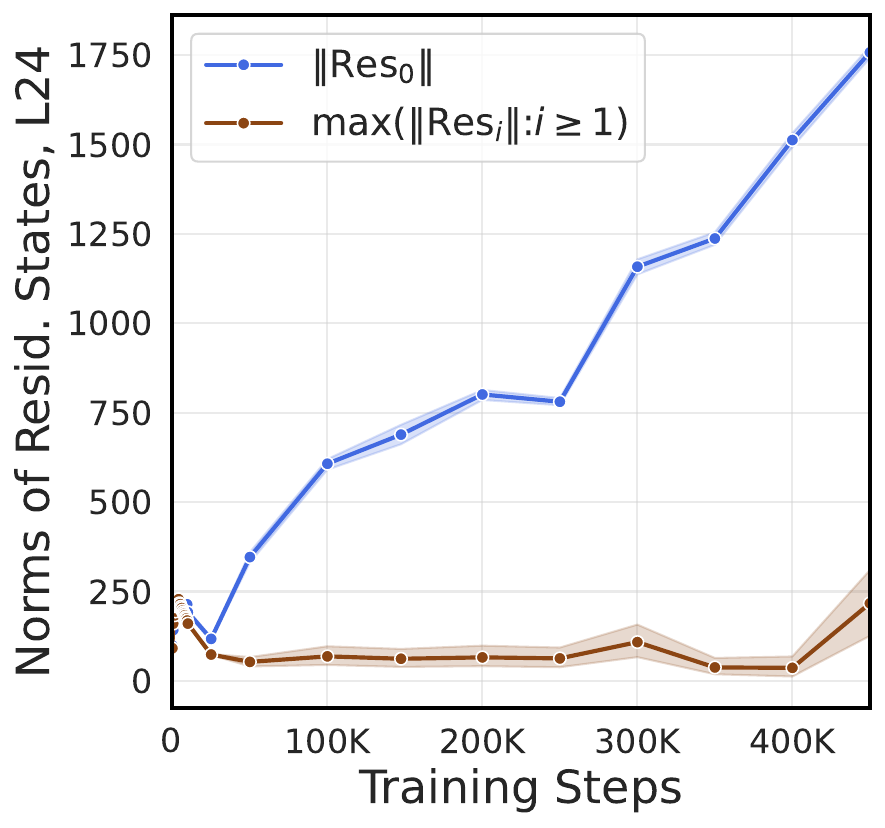}
    \end{subfigure}

    \caption{\small \textbf{Dynamics of extreme-token phenomena in layer 24 over many samples in the training trajectory of OLMo-7B.} For this experiment, as in \Cref{sub:olmo_dynamics}, for each sample and attention head we designate two attention sink tokens as the two tokens with the largest average attention mass \(\bar{A}_{bhj}\) (see \Cref{fig:extreme_tokens_llama_31_many_samples} for definition). We then study the dynamics of sink tokens versus non-sink tokens. In these experiments we observe that token \(0\) is (almost) always a sink token, which we discuss further in \Cref{sub:fixed_bos}. \textit{Top left (a):} The average attention scores \(\bar{A}_{bhj}\) for \(j\) as a sink token versus non-sink tokens. We observe that attention sinks form in nearly all heads and samples: the attention mass on top tokens nearly always sums to \(1\), and moreover the sinks develop relatively early in training. \textit{Top right (b):} We observe that the normalized attention logits of non-sink tokens initially increase until the formation of an attention sink, and then approximately converge to a stable phase with similar logits on token \(0\). \textit{Bottom left (c):} We observe that the value states of all tokens except the first sink token (token \(0\)) rapidly converges to steady state, while the first sink token has a much lower value state norm than all other tokens. \textit{Bottom right (d)}: We observe that the norm of the residual state of token \(0\) increases linearly during pretraining, while all other tokens' residual states do not. Our results mirror and confirm the single-sample detailed analysis conducted in \Cref{sub:olmo_dynamics}.}
    \label{fig:extreme_tokens_olmo_many_samples}
\end{figure}

\clearpage
\section{Assorted Caveats}\label{sec:caveats}

\subsection{Multiple attention sinks vs. one attention sink}\label{sub:multiple_sinks_discussion}

As we have seen, attention heads in the BB task (\Cref{sec:bb_task}), Llama 2-7B-Base (\Cref{sub:active_dormant}), and OLMo (\Cref{sub:olmo_dynamics}) exhibit multiple sink tokens. That is, when heads in these models are dormant, they tend to have two sink tokens. For the LLMs in this group, at least on prose data, the \bos~token as well as the first delimiter token (e.g., representing \texttt{.} or \texttt{;}) are sink tokens. Meanwhile, Llama-3.1-8B-Base (\Cref{sec:llm}) only ever has one attention sink on prose data, and the \bos{} token is always the sink token. Here, we offer a possible explanation of this phenomenon. For the BB task, multiple sink tokens are necessary to solve the task. For LLMs, we believe this distinction may be explained by the relative proportion of coding data, in which delimiters have a greater semantic meaning than prose, within the training set. For instance, OLMo was trained on DOLMA \citep{soldaini2024dolma}, which has around 411B coding tokens. Meanwhile, Llama 2 used at most (2T \(\times\) 0.08 =) 0.16T coding tokens. Finally, Llama 3.1 used around (15.6T \(\times\) 0.17 =) 2.6T coding tokens \citep{dubey2024llama}. On top of the raw count being larger, coding tokens are a larger proportion of the whole pretrtraining dataset for Llama 3.1 compared to other model families. Thus, during training, the presence of delimiters would not be considered unhelpful towards next-token prediction, since such delimiters carry plenty of semantics in a wide variety of cases. Our earlier hypothesis in \Cref{sub:active_dormant} proposes that only tokens which lack semantics in almost all cases are made to be sink tokens. This could be a reason for the distinction.

\subsection{The role of a fixed \bos~ token in the Active-Dormant mechanism}\label{sub:fixed_bos}

Some models, such as OLMo, are not trained with a \bos{} token. Despite this, the first token of the input still frequently develops into a sink token. We can study the effect of positional encoding of the tokens on the attention sink phenomenon by shuffling the tokens before inputting them into the transformer, and observing how and why attention sinks form. If we do this with the phrase ``Summer is warm. Winter is cold.'' with OLMo, we observe that at Layer 24, there are many attention sink heads where the first token and first delimiter token share attention mass, even if the sentence is jumbled up and makes no grammatical sense. This points towards the observation that without a \bos{} token, the attention sink formation uses both positional data and, to a greater degree, the semantic data of each token. We leave studying this effect in greater detail to future work.

\begin{figure}[h]
    \centering
    \includegraphics[width=0.73\textwidth]{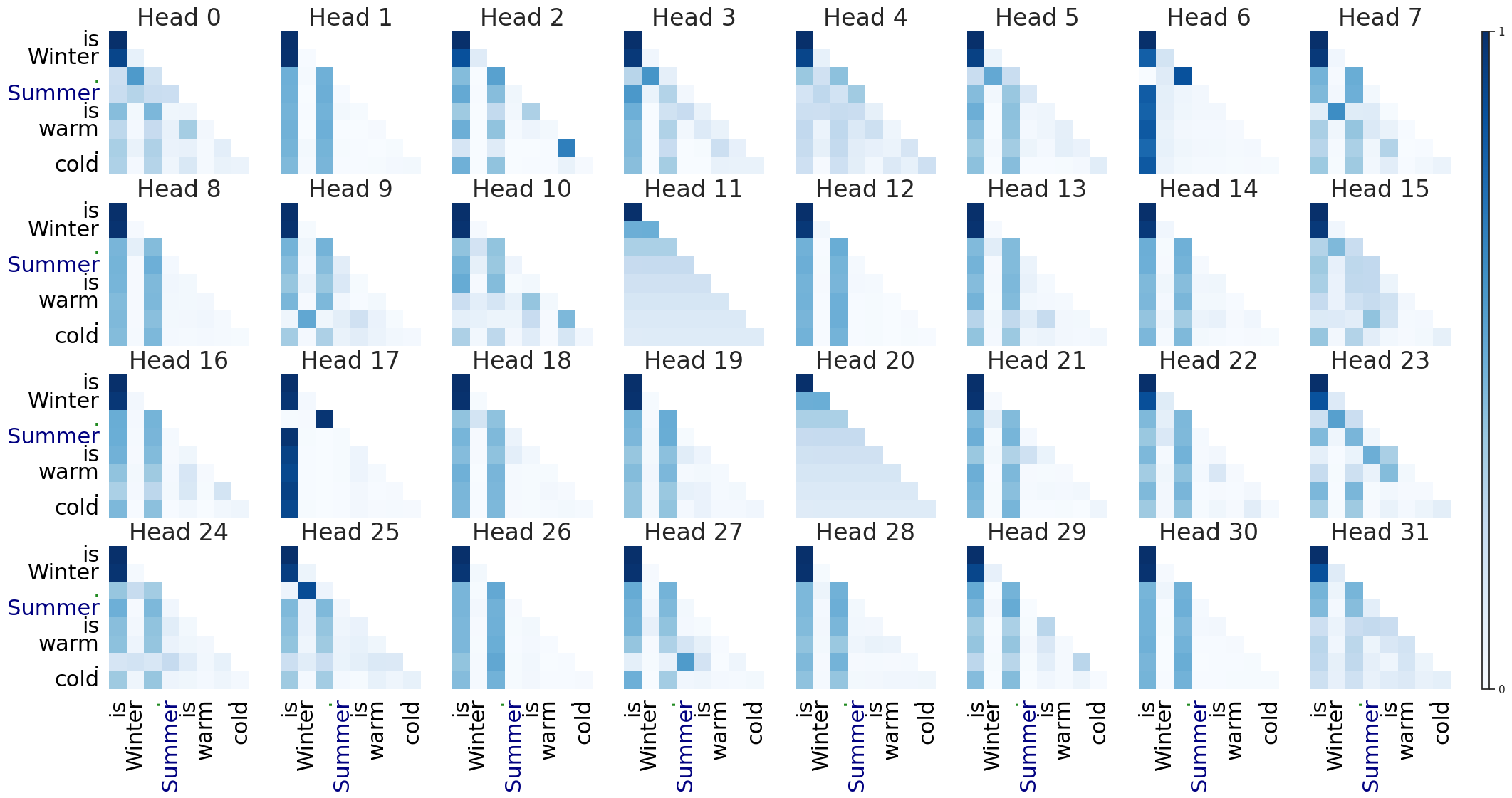}
    \caption{\small \textbf{Attention sinks with shuffled input in Layer 24 of OLMo.} In order to understand the impact of positional encodings when there is no \bos{} token, we shuffle the input of the test string ``Summer is warm. Winter is cold.'' in OLMo. We observe that there is still an attention sink on token \(0\), despite it being a random token that does not usually start sentences or phrases (since it is uncapitalized). This shows that the positional embedding, say via RoPE, has a large impact on the formation of attention sinks --- when the semantics of each token have switched positions, the attention sink still forms on the zeroth token.}
    \label{fig:bos_shuffle}
\end{figure}


\end{document}